\documentclass{article}

\usepackage[preprint]{neurips_2026}
\usepackage{natbib}
\setcitestyle{numbers,square,comma}

\usepackage[utf8]{inputenc} 
\usepackage[T1]{fontenc}    
\usepackage{hyperref}       
\usepackage{url}            
\usepackage{dirtree}
\usepackage{booktabs}       
\usepackage{amsfonts}       
\usepackage{nicefrac}       
\usepackage{microtype}      
\usepackage{xcolor}         
\usepackage{graphicx}
\usepackage{subcaption}

\usepackage{algorithm}
\usepackage{algpseudocodex}

\usepackage{macros_IJ}

\usepackage{amsmath}
\usepackage{amssymb}
\usepackage{mathtools}
\usepackage{amsthm}
\usepackage{geometry}

\usepackage[capitalize,noabbrev]{cleveref}

\theoremstyle{plain}
\newtheorem{theorem}{Theorem}[section]

\newtheorem{lemma}[theorem]{Lemma}
\newtheorem{corollary}[theorem]{Corollary}
\theoremstyle{definition}
\newtheorem{definition}[theorem]{Definition}

\theoremstyle{remark}
\newtheorem{remark}[theorem]{Remark}

\usepackage[textsize=tiny]{todonotes}

\title{Cost-Ordered Feasibility for Multi-Armed Bandits with Cost Subsidy}

\author{
Ishank Juneja \quad Carlee Joe-Wong \quad Osman Ya\u{g}an\\
Department of Electrical and Computer Engineering\\
Carnegie Mellon University\\
Pittsburgh, PA 15213\\
\texttt{\{ijuneja,cjoewong,oyagan\}@andrew.cmu.edu}
}

\begin{document}

\maketitle

\begin{abstract}
The classic multi-armed bandit (MAB) problem tackles the challenge of accruing maximum reward while making decisions under uncertainty. However, in applications, often the goal is to minimize cost subject to a constraint on the minimum permissible reward, an objective captured by multi-armed bandits with cost-subsidy (MAB-CS). Of interest to this paper is the setting where the quality (reward) constraint is specified relative to the unknown best reward and the cost of each arm is known. We characterize the expected sub-optimal samples required by any policy by proving instance-dependent lower bounds that offer new insight into the problem and are a strict generalization of prior bounds. Then, we propose an algorithm called Cost-Ordered Feasibility (COF) that leverages our insight and intelligently combine samples from all arms to gauge the feasibility of a cheap arm. Thereafter, we analyze COF to establish instance-dependent upper bounds on its expected cumulative cost and quality regret, i.e., relative to the cheapest feasible arm. Finally, we empirically validate the merits of COF, comparing it to baselines from the literature through extensive simulation experiments on the MovieLens and Goodreads datasets as well as representative synthetic instances. Not only does our paper develop qualitatively better theoretical regret upper bounds, but COF also convincingly demonstrates improved empirical performance.
\end{abstract}

\section{Introduction}
\label{sec:intro}

Multi-armed bandits (MABs) \citep{lattimore2020bandit} is a framework for online sequential decision making that models decisions as arms that can be pulled. Pulling or sampling each arm leads to the observation of a scalar reward, the generative process for which is apriori unknown. The stationary stochastic bandits \citep{auer2002finite} framework in addition imposes the structure that the observed rewards be random variables that are independent and identically distributed. Classically in MABs, the goal is to maximize the cumulative reward across all the sampling decisions made over the problem horizon $T$. However, in practice, the costs associated with every sampling decision must be considered as part of the decision making. In particular, the goal in cost sensitive applications, like machine learning inference, is to minimize the cumulative cost subject to a constraint on the observed reward or quality level. For this reason, model routers are needed to map queries to the most appropriate LLM from a large selection of open source, proprietary, and task-specialized models \citep{laufer2025anatomymachinelearningecosystem}. The cost of these LLMs may vary over orders of magnitude, therefore routing must balance between expected quality and cost \citep{jitkrittum2025universal, tsiourvas2025causal, wei2025learning, wuefficient}.

\textbf{Cost-subsidy framework:} The quality-constrained cost-minimization setting is captured by the recently introduced multi-armed bandits with cost-subsidy (MAB-CS) framework \citep{pmlr-v130-sinha21a,juneja2025pairwise}. MAB-CS captures the quality constraint through threshold $\mu_{\CS}$. For a $K$-armed bandit instance with arms $\mathcal{A} = \{  a_1, a_2, \ldots, a_K \}$, the expected reward from sampling arm $k$ is denoted $\mu_k$. The constraint on the expected reward gives rise to a set of feasible arms $\mathcal{S} = \{  a_k \in \mathcal{A} \, | \, \mu_k \geq \mu_{\CS} \}$. In our work $\mu_{\CS}$ is specified in relation to the unknown best reward as $\mu_{\CS} = (1 - \alpha) \mu^*$, where $\alpha \in (0,1)$ is the {\em subsidy factor}, and $\mu^*$ is the best reward defined as $\max_{ a_k \in \mathcal{A} } \mu_k$. As is convention in bandits, we use $i^*$ to denote the arm bearing $\mu^*$. If the cost associated with sampling arm $a_k$ is denoted $c_k$, then in MAB-CS, the optimal arm $a^*$ is defined to be the cheapest arm from the set of feasible arms. Mathematically $a^* = \argmin_{a_i \in \mathcal{S}} c_i$ and reward borne by $a^*$ is denoted $\mu_{a^*}$. Application to the problem of selecting communication modalities in advertising \citep{javan2018hybrid} demonstrates the utility of MAB-CS. Available modalities like text message, postal mail, and doorstep solicitation shall each have an unknown expected quality (i.e., effectiveness or response rate) and a known deployment cost \citep{moffett2021theory}. The business may want to program their response rate to be a fraction of the best unknown rate across modalities. This relative rate then serves as the quality constraint subject to which cost must be minimized. The MAB-CS framework is immensely useful for applications where no preset threshold has to be satisfied but the run-time performance of the system cannot be permitted to fall too far below the best available alternatives, no matter their cost. In the cost-subsidy framework decisions are made to minimize the metrics of cumulative cost and quality regret which are formally defined in Section~\ref{sec:theory_lower_bounds}. The former captures the excess cost paid over $c_{a^*}$ and the latter captures violations of the quality constraint. The core challenge in jointly minimizing both regret metrics is exploring for cheap feasible arms while simultaneously estimating the feasibility threshold $(1 - \alpha) \mu^*$. 

\textbf{Gaps and challenges in prior work:} Prior work by \citet{pmlr-v130-sinha21a} identified {\em worst-case} lower bounds on cost and quality regret for the MAB-CS problem. While these lower bounds formally establish that cost and quality regret minimization in the framework is a strictly more challenging problem than regret minimization in conventional MABs, they do not provide insight into the design of practically useful algorithms. Other research has introduced lower bounds on cost and quality regret that inform algorithm design by revealing the dependence between the problem instance and the minimum samples required under expectation \citep{juneja2025pairwise}. However, their singular treatment of the best reward arm $i^*$ leaves open room for improved general analysis. Moreover, \citet{juneja2025pairwise} introduced regret-minimization policies for settings in which $\mu_{\CS}$ is either a known threshold, the subsidized reward of a specified reference arm, or the subsidized best reward. While their algorithms for the first two settings were order-wise optimal, their approach for the subsidized best reward setting studied in our work was not. A primary obstacle in regret minimization for this setting is determining if a candidate arm has reward that is feasible relative to $(1 - \alpha) \mu^*$. Prior work \citep{juneja2025pairwise} addressed this challenge through a two stage approach where first $i^*$ is identified through a best arm identification (BAI) scheme. Then, the feasibility of candidate arms is determined through pairwise comparisons. In this paper to estimate the $(1 - \alpha) \mu^*$ criterion we develop a method that progresses in a manner inspired by BAI without converging onto a single arm, avoiding the accrual of unnecessary samples.

\textbf{Our contributions} and their organization is as follows. In Section~\ref{sec:theory_lower_bounds}, we formalize the MAB-CS framework and provide instance-dependent lower bounds on the expected number of samples of all sub-optimal arms. We include an {\em entirely novel result} that jointly lower bounds the expected samples from a collection of arms. The joint bound is a structure that emerges naturally from the fact that an arm can be certified as infeasible once its reward is worse than the subsidized reward of {\em any} arbitrary arm, not just $i^*$. In Section~\ref{sec:cof} we present the {\em Cost-Ordered Feasibility (COF) algorithm}, which aggregates the signal from the comparisons between a candidate feasible arm and all other arms. Section~\ref{sec:cof} also includes an {\em upper bound on the expected cumulative cost and quality regrets} from COF. We validate the superiority of our approach on both regret metrics through {\em extensive simulation experiments on recommendation systems datasets} as well as on hand designed problem instances in Section~\ref{sec:experiments}. In Section~\ref{sec:related_work}, we contextualize MAB-CS, and our contributions in particular, into the broader multi-objective MAB literature. Finally in Section~\ref{sec:conclusions}, we summarize our contributions and chart the course for future research.

\section{Theoretical Framework and Lower Bounds}
\label{sec:theory_lower_bounds}

In this section we first formalize cost and quality regret and formulate them in terms of the expected samples of sub-optimal arms. Then, we present theoretical lower bounds on the expected number of sub-optimal arm samples incurred by any policy.

\textbf{Cost and Quality Regret:} Under the stationary stochastic bandit setting \citep{auer2002finite}, at time $t$ we observe scalar reward $r_t \geq 0$ by sampling arm $k_t$. Further $\Exp \brk*{r_t} = \mu_{k_t}$, and $r_t$ is distributed per the stationary distribution $\nu_{k_t}$. In the MAB framework a policy $\pi$ maps observation histories to a decision. For a $K$-armed bandit instance with arms $\mathcal{A}$, the policy represents the mapping $\pi: \mathcal{H}_t \to \mathcal{A}$, where $\mathcal{H}_t \coloneq \crl*{k_1, r_1, \ldots, k_{t-1}, r_{t-1}}$ denotes the complete observation history. The goal in MAB-CS is to design $\pi$ that converges onto sampling $a^*$ in a manner that balances between the sampling of feasible arms and accrual of least cost. These objectives are captured respectively by cost regret and quality regret. They are defined in Equation~\ref{eq:regret_definitions} for a problem instance $\nu$. 
\begin{align}
    \costRegret
    \prn{T, \nu, \pi}
    =
    \sum_{t = 1}^{T}
    \Exp_{\pi}
    \brk{
    \prn{
    c_{k_t}
    -
    c_{a^*}
    }^{+}
    }
    \,
    ,
    \,\,
    %
    \qualityRegret
    \prn{ T, \nu, \pi }
    =
    \sum_{t = 1}^{T}
    \Exp_{\pi}
    \brk{
    \prn{
    \mu_{\CS}
    -
    \mu_{k_t}
    }^{+}
    }
    .
    \label{eq:regret_definitions}
\end{align}
Here, $x^+ \coloneq \max \crl*{x, 0}$ is the zero-clipping operator, $k_t$ is a random variable denoting the policy decisions, and $\Exp_{\pi}$ denotes expectation over bandit arm choices. The zero-clipping ensures that the incremental regret from any policy decision is non-negative. Notably, only the sampling of the optimal arm $a^*$ has both zero incremental cost and quality regret. In practice, the zero-clipping of incremental quality regret $\prn{\mu_{\CS} - \mu_{k_t}}$ penalizes only violations of the feasibility constraint while ignoring reward above $\mu_{\CS}$. Since the goal is to converge onto sampling optimal arm $a^*$, cost regret with zero-clipping only penalizes samples of arms more expensive than $a^*$ while overlooking the cost accrual from cheaper arms. We denote the incremental cost and quality gaps from sampling arm $a_i$ by $\Delta_{C, i} \coloneq c_i - c_{a^*}$ and $\Delta_{Q, i} \coloneq \mu_{\CS} - \mu_{i}$ respectively. Using a standard regret decomposition result, in Equation~\ref{eq:regret_decomposition} we rewrite regret in terms of the expected number of samples of sub-optimal arms.   
\begin{align}
    \costRegret
    \prn{T, \nu, \pi}
    =
    \sum_{i = 1}^{K}
    \Delta_{C, i}^+
    \Exp \brk{ n_i \prn{T} }
    \,
    ,
    \quad
    \qualityRegret
    \prn{T, \nu, \pi}
    =
    \sum_{i=1}^{K}
    \Delta_{Q, i}^+
    \Exp \brk{ n_i \prn{T} }
    .
    \label{eq:regret_decomposition}
\end{align}
The reformulation in Equation~\ref{eq:regret_decomposition} permits the regret analysis of any policy $\pi$ by simply bounding the expected number of samples of sub-optimal arms. Moreover, lower bounds on expected samples imply lower bounds on regret under the decomposition. For the remainder of the paper we assume without loss of generality that bandit arms are indexed in non-decreasing order of cost. Further we characterize the sub-optimal arms as cheap and expensive per Definition~\ref{def:cheap_and_expensive_arms}
\begin{definition}[Cheap and expensive arms]
We define {\em cheap arms} $\mathcal{A}^{-}$ to be arms with cost lower than $c_{a^*}$: $\mathcal{A}^{-} \coloneqq \crl{ a_k \in \mathcal{A} \mid k < a^*}$. And {\em expensive arms} to have cost higher than $c_{a^*}$: $\mathcal{A}^{+} \coloneqq \crl{ a_k \in \mathcal{A} \mid k > a^* }$.
\label{def:cheap_and_expensive_arms}
\end{definition}

\textbf{Lower Bounds for MAB-CS:} This section discusses problem instance-dependent lower bounds on the expected number of samples of sub-optimal arms for the class of consistent policies. The restriction imposed by consistency is available in Appendix~\ref{sec:preliminaries} and amounts to a technical condition to exclude policies that perform unevenly well on certain instances while incurring high regret on others. In addition to informing us about the least samples accrued by a policy for an arm, these lower bounds serve to elucidate the challenges that need to be overcome by any policy that works to minimize regret. The complete proofs of our lower bounds are available in Appendix~\ref{sec:lb_proofs} and the current section provides proof sketches.

Results from information theory relate the expected number of samples drawn by a policy for two bandit instances with overlapping arms $\mathcal{A}$, but different arm reward distributions \citep{garivier2019explore}. The origin of sample lower bounds then lies in the construction of such an instance pair that differ minimally in their reward distributions but have distinct optimal arms. In the remainder of this section, we first discuss lower bounds for cheap arms $a_k \in \mathcal{A}^{-}$ and expensive arms $a_k \in \mathcal{A}^{+}$\footnote{The individual lower bounds for cheap arms, Theorem~\ref{thm:lb_low_cost}, and expensive arms, Theorem~\ref{thm:lb_high_cost}, have appeared in prior work \citep{juneja2025pairwise} and are re-stated for completeness of presentation.}. Finally a joint lower bound on the samples of arms whose expected rewards satisfy a certain constraint is presented\footnote{While the results of Section~\ref{sec:theory_lower_bounds} are for arm reward distributions that are Gaussian with unit variance, the generalization to other distribution families such as Bernoulli is straightforward \citep{lattimore2020bandit}.}. A proof sketch for Theorem~\ref{thm:lb_high_cost} builds intuition ahead of the more involved proof sketch for Theorem~\ref{thm:lb_joint}.  
\begin{figure}[ht]
    \centering
\includegraphics[width=0.99\linewidth]{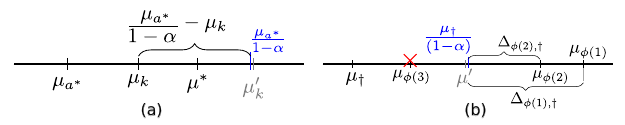}
    \caption{(a) Arm $a^*$ must be deemed feasible by each expensive arm $a_k \in \mathcal{A}^{+}$ introducing dependence on $\mu_{a^*} / (1 - \alpha) - \mu_k$. (b) Each arm in $\mathcal{A}^{\dagger}$ is diminished to make $\mu_{\dagger}$ feasible. $\phi(i)$ denotes the index of the $i^{\text{th}}$ highest reward arm; $a_{\phi(1)}$ and $a_{\phi(2)}$ are in $\mathcal{A}^{\dagger}$ while $a_{\phi(3)}$ is not.}
    \label{fig:lb_main_paper}
\end{figure}
\begin{theorem}[Lower Bound for Cheap Arms]
Expected number of samples for cheap arm $a_k$ satisfy,
\begin{align*}
&\liminf_{T \to \infty}\frac{\Exp
\brk{n_k \prn{T}}}{\log T}
\geq
\frac{2 }{ \Delta_{Q, k}^2 }
,
\,
\forall
\,
a_k 
\in 
\mathcal{A}^{-}
.
\end{align*}
\label{thm:lb_low_cost}
\end{theorem}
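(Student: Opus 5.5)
The plan is the standard change-of-measure argument for consistent policies, with unit-variance Gaussian rewards. Fix a cheap arm $a_k \in \mathcal{A}^{-}$. Because $k < a^*$ and $a^*$ is the cheapest feasible arm, $a_k$ must be infeasible: $\mu_k < (1-\alpha)\mu^*$, so $\Delta_{Q,k} > 0$. I will build an alternative instance $\nu'$ that agrees with $\nu$ on every arm except $a_k$. Arm $a_k$ is moved to $\mathcal{N}(\mu_k', 1)$ with $\mu_k' = \mu_{\CS} + \epsilon = \mu_k + \Delta_{Q,k} + \epsilon$ for a small $\epsilon > 0$.

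The first step is to check that $a_k$ is the optimal arm of $\nu'$, and that $\mu^*$ (hence the threshold) is unchanged. If $\mu_k' \le \mu^*$, the threshold stays $(1-\alpha)\mu^*$, and $a_k$ becomes feasible. It is then the cheapest feasible arm: every arm cheaper than $a_k$ is still infeasible, since those arms were cheaper than $a^*$ and are untouched. We have $\mu_k' \le \mu^*$ whenever $\epsilon < \alpha\mu^*$, which holds for small $\epsilon$ because $\alpha\mu^*>0$ (rewards are nonnegative and $\mu^* > 0$ is implicitly needed for $\mathcal{S}$ to be a proper subset). If costs tie at $c_k$, I assume the indexing breaks ties so that optimality is well defined. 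Under $\nu$, sampling $a_k$ costs quality regret $\Delta_{Q,k}$ per pull. Under $\nu'$, sampling $a^*$ (or any arm other than $a_k$) incurs positive cost regret, since $a^*$ is strictly more expensive than $a_k$; I will assume strict cost ordering or handle ties as above.

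The second step applies the divergence decomposition together with the data-processing inequality of \citep{garivier2019explore}:
\[
\Exp_\nu[n_k(T)]\,\mathrm{KL}(\nu_k,\nu_k') \ge \mathrm{kl}\bigl(\Exp_\nu[Z],\Exp_{\nu'}[Z]\bigr)
\]
for any $[0,1]$-valued $Z$. Here $\mathrm{KL}(\nu_k,\nu_k') = (\Delta_{Q,k}+\epsilon)^2/2$. I take $Z = n_k(T)/T$. Consistency (from Appendix~\ref{sec:preliminaries}) means both cost and quality regret are $o(T^{a})$ for every $a > 0$ on every instance. On $\nu$ this gives $\Exp_\nu[n_k(T)] = o(T^a)$, since $\Delta_{Q,k}>0$. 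On $\nu'$ it gives $T - \Exp_{\nu'}[n_k(T)] = o(T^a)$, since every other arm incurs positive cost or quality regret there. This is the step I expect to need the most care. One must confirm that every arm $j \ne k$ has a strictly positive regret gap under $\nu'$. Cheaper arms are infeasible, so they have a quality gap; more expensive arms have a cost gap, given strict costs. Then the standard bound $\mathrm{kl}(p,q) \ge (1-p)\log\frac{1}{1-q} - \log 2$ yields
\[
\liminf_{T\to\infty} \frac{\mathrm{kl}\bigl(\Exp_\nu[Z],\Exp_{\nu'}[Z]\bigr)}{\log T} \ge 1-a.
\]

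The final step combines the two. Dividing by $\log T$ gives $\liminf_{T\to\infty} \Exp_\nu[n_k(T)]/\log T \ge 2(1-a)/(\Delta_{Q,k}+\epsilon)^2$. Letting $a\to 0$ and $\epsilon \to 0$ produces the claimed $2/\Delta_{Q,k}^2$.
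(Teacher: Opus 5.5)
Your proposal is correct and follows essentially the same route as the paper: a one-arm change of measure lifting $a_k$ to feasibility, the divergence inequality of Garivier et al.\ with $Z = n_k(T)/T$, the bound $\mathrm{kl}(p,q) \ge (1-p)\log\frac{1}{1-q} - \log 2$, and consistency on both instances. The differences are cosmetic: the paper takes the infimum over alternatives via $D_{\inf}(\nu_k,(1-\alpha)\mu^*) = \Delta_{Q,k}^2/2$ rather than fixing $\mu_k' = (1-\alpha)\mu^* + \epsilon$ and sending $\epsilon \to 0$, and because its consistency definition is stated directly as $\mathbb{E}[n_j(T)] = o(T^{\gamma})$ for every non-optimal arm, the positive-gap check you flagged as delicate is immediate (your restriction $\mu_k' \le \mu^*$ is likewise harmless but unnecessary, since a raised threshold only keeps cheaper arms infeasible while $a_k$, as the new best arm, stays feasible).
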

\begin{theorem}[Lower Bound for Expensive Arms]
Expected number of samples for expensive arm $a_k$ satisfy, 
\begin{align*}
\liminf_{T \to \infty}
\frac{\Exp \brk{n_k \prn{T}}}{\log T}
\geq
\frac{2 (1 - \alpha)^2}
{ \prn{
{
\displaystyle \mu_{a^*} 
} 
- 
(1 - \alpha) \mu_k
}^2 
}, \forall \,\, a_k \in \mathcal{A}^+
.
\end{align*}
\label{thm:lb_high_cost}
\end{theorem}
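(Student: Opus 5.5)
The plan is the standard change-of-measure argument for consistent policies, as in \citep{garivier2019explore} and \citep{lattimore2020bandit}, with unit-variance Gaussian arms. Fix an expensive arm $a_k \in \mathcal{A}^{+}$, so $k > a^*$ and $\mu_k$ may be arbitrary. Build an alternative instance $\nu'$ that agrees with $\nu$ on every arm except $a_k$. In $\nu'$, the mean of $a_k$ is raised to $\mu_k' = \mu_{a^*}/(1-\alpha) + \epsilon$ for small $\epsilon > 0$. The bound is only meaningful, and the construction only needed, when $\mu_k < \mu_{a^*}/(1-\alpha)$.

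Under $\nu'$, arm $a_k$ becomes the best arm, with $\mu'^* = \mu_k' > \mu^*$. This is automatic, since $\mu_{a^*} \ge (1-\alpha)\mu^*$ gives $\mu_{a^*}/(1-\alpha) \ge \mu^*$. The new threshold is then $(1-\alpha)\mu_k' > \mu_{a^*}$, so $a^*$ is infeasible in $\nu'$. Every cheap arm $a_j$, $j < a^*$, has $\mu_j < (1-\alpha)\mu^* \le (1-\alpha)\mu_k'$, so it stays infeasible. Arm $a_k$ itself is feasible, being the best arm. Hence the optimal arm of $\nu'$ has index in $(a^*, k]$, strictly more expensive than $a^*$. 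Consequently, sampling $a^*$ in $\nu'$ incurs positive quality regret $(1-\alpha)\mu_k' - \mu_{a^*} > 0$. (If ties in cost were a concern, the indexing convention would handle them.)

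Next, apply the divergence decomposition together with the data-processing inequality: for any event $E$,
\begin{align*}
\Exp_{\nu}\brk{n_k(T)} \,\mathrm{KL}(\nu_k,\nu_k') \ge \mathrm{kl}\prn{\mathbb{P}_{\nu}(E), \mathbb{P}_{\nu'}(E)} \ge \mathbb{P}_{\nu}(E)\log\frac{1}{\mathbb{P}_{\nu'}(E^c)} - \log 2 .
\end{align*}
Take $E = \crl{n_{a^*}(T) > T/2}$. Consistency under $\nu$ makes the samples of every non-optimal arm $o(T^{\beta})$ in expectation. These arms carry positive cost or quality gap, so Markov's inequality gives $\mathbb{P}_{\nu}(E)\to 1$. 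Under $\nu'$, the arm $a^*$ has positive quality regret, so consistency gives $\Exp_{\nu'}\brk{n_{a^*}(T)} = o(T^{\beta})$ for every $\beta>0$. Markov's inequality then yields $\mathbb{P}_{\nu'}(E) \le o(T^{\beta-1})$. Dividing by $\log T$ and letting $T\to\infty$, then $\beta \to 0$, gives
\begin{align*}
\liminf_{T \to \infty} \frac{\Exp\brk{n_k(T)}}{\log T} \ge \frac{1}{\mathrm{KL}(\nu_k,\nu_k')} = \frac{2}{\prn{\mu_{a^*}/(1-\alpha) + \epsilon - \mu_k}^2}.
\end{align*}
Letting $\epsilon \to 0$ and rewriting $\prn{\mu_{a^*}/(1-\alpha) - \mu_k}^{-2} = (1-\alpha)^2\prn{\mu_{a^*} - (1-\alpha)\mu_k}^{-2}$ gives the statement.

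The main obstacle is verifying that the perturbation really changes the optimal arm to one that is costly to confuse with $a^*$. The delicate points are that $a^*$ must be strictly infeasible in $\nu'$, which is why we need $\epsilon > 0$ and cannot take the boundary value, and that the consistency definition in Appendix~\ref{sec:preliminaries} must cover both regret metrics. Only then can the low-probability event under $\nu'$ be argued from $a^*$'s quality gap alone, without needing a cost gap there.
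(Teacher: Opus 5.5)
Your proposal is correct and is essentially the paper's proof: you perturb only $a_k$ to a mean just above $\mu_{a^*}/(1-\alpha)$ so that $a^*$ becomes infeasible, apply the change-of-measure inequality with consistency on both instances, and let the perturbation shrink. The only differences are cosmetic. You test with the event $\{n_{a^*}(T) > T/2\}$ plus Markov's inequality, where the paper uses the sample fraction of the new optimal arm of $\nu'$. Also, the paper's consistency definition is stated directly in terms of samples of non-optimal arms, so your caveats about regret gaps and cost ties are unnecessary.

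Do fix the Bernoulli bound, which must read $\mathrm{kl}(p,q) \ge p\log(1/q) - \log 2$, i.e.\ $\mathbb{P}_{\nu}(E)\log\frac{1}{\mathbb{P}_{\nu'}(E)} - \log 2$. With $\mathbb{P}_{\nu'}(E^c)$ in the denominator, the displayed inequality is false in general, and it would give nothing since $\mathbb{P}_{\nu'}(E^c) \to 1$. Your subsequent lines already use the correct form.
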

\textbf{Proof sketch for Theorem~\ref{thm:lb_high_cost}:} The proof for Theorem~\ref{thm:lb_high_cost} can be illustrated using Fig.~\ref{fig:lb_main_paper}(a). We construct a perturbed bandit instance $\nu'$ such that the expected reward $\mu_k'$ of the expensive arm $a_k \in \mathcal{A}^+$ under consideration is enhanced to be more than $\mu_{a^*}/(1 - \alpha)$, while all others are held constant. The result is that $a^*$ is no longer optimal nor feasible in instance $\nu'$ giving rise to Theorem~\ref{thm:lb_high_cost}. Next, an analogous sketch reveals a joint lower bound on the expected samples of certain high reward arms that can \textbf{eliminate} infeasible arms through pairwise comparisons.
\begin{definition}[Best reward cheap arm $a_{\dagger}$]
$\mu_{\dagger} \coloneqq \max_{a_i \in \mathcal{A}^{-}} \mu_i$ is the highest reward among cheap arms and is borne by arm $a_{\dagger}$.
\label{def:a_dagger}
\end{definition}
\begin{theorem}[Joint Lower Bound]
Expected number of samples of arms in $\mathcal{A}^{\dagger}$ satisfy, 
\begin{align*}
\liminf_{T \to \infty}
\frac{
{
\displaystyle
\sum_{ 
a_i \in \mathcal{A}^{\dagger} 
}
}
\Delta_{i, \dagger}^2
\,
\Exp \brk{n_i \prn{T}} 
}
{\log T}
\geq
2(1 - \alpha)^2
.
\end{align*}
Where $\Delta_{k, \dagger} \coloneqq (1 - \alpha) \mu_k - \mu_{\dagger}$ represent the gap between the subsidized reward of an arbitrary arm $a_k$ and $\mu_{\dagger}$. $\mathcal{A}^{\dagger} \coloneqq \crl{a_k \in \mathcal{A} \mid \Delta_{k, \dagger} > 0}$ represents arms with reward sufficient to eliminate $a_{\dagger}$.
\label{thm:lb_joint}
\end{theorem}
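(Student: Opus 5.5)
The plan is the standard change-of-measure argument for consistent policies, in the form of \citet{garivier2019explore}: for any two instances $\nu,\nu'$ on the same arms,
\[
\sum_i \Exp_\nu[n_i(T)]\,\mathrm{KL}(\nu_i,\nu'_i)\geq \mathrm{kl}\bigl(\Exp_\nu[Z],\Exp_{\nu'}[Z]\bigr)
\]
for any $[0,1]$-valued statistic $Z$. For unit-variance Gaussians, $\mathrm{KL}(\nu_i,\nu'_i)=(\mu_i-\mu'_i)^2/2$. Unlike Theorem~\ref{thm:lb_high_cost}, we perturb \emph{all} arms of $\mathcal{A}^{\dagger}$ at once. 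The goal is an instance $\nu'$ in which $a_\dagger$ becomes the optimal arm, so that $a^*$ (or any arm other than $a_\dagger$) is sub-optimal in $\nu'$. The joint structure of the bound then appears as a weighted sum of KL terms.

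\textbf{Construction.} For a small $\varepsilon>0$, define $\nu'$ by lowering every $a_i\in\mathcal{A}^{\dagger}$ to $\mu'_i=(\mu_\dagger+\varepsilon)/(1-\alpha)$, so that $(1-\alpha)\mu'_i$ sits just above $\mu_\dagger$. All other arms stay unchanged. Their rewards already satisfy $(1-\alpha)\mu_k\le\mu_\dagger$, since $\Delta_{k,\dagger}\le 0$.

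In $\nu'$, the best reward is then at most $(\mu_\dagger+\varepsilon)/(1-\alpha)$, so the threshold $\mu'_{\CS}$ is at most about $\mu_\dagger+\varepsilon$. I expect the main obstacle here: with the $+\varepsilon$ the threshold slightly exceeds $\mu_\dagger$, which makes $a_\dagger$ infeasible. The fix is to lower the arms to exactly $\mu_\dagger/(1-\alpha)-\varepsilon'$ with $\varepsilon'$ small, chosen so that:
\begin{itemize}
\item the perturbed arms stay above every unperturbed reward, using $\mu_\dagger/(1-\alpha)>\mu_\dagger\ge\mu_k$ for cheap arms and care for expensive non-$\mathcal{A}^\dagger$ arms;
\item the new maximum $\mu'^*$ satisfies $(1-\alpha)\mu'^*\le\mu_\dagger$.
\end{itemize}
Then $a_\dagger$ is feasible in $\nu'$. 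Because $a_\dagger\in\mathcal{A}^-$ is cheaper than $a^*$, the optimal arm of $\nu'$ is some cheap arm of cost at most $c_\dagger<c_{a^*}$. Hence $a^*$ is an expensive sub-optimal arm under $\nu'$ with positive cost gap.

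There is a second subtlety to check: $\mathcal{A}^\dagger$ always contains $i^*$ (because $a_\dagger$ is infeasible in $\nu$, so $\mu_\dagger<(1-\alpha)\mu^*$). So $\nu$ indeed has a different optimal arm, and the swap is valid.

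\textbf{Change of measure.} Take $Z=n_{a^*}(T)/T$. Consistency under $\nu$ gives $\Exp_\nu[T-n_{a^*}(T)]=o(T^a)$ for all $a>0$. Under $\nu'$, arm $a^*$ incurs positive cost regret, so $\Exp_{\nu'}[n_{a^*}(T)]=o(T^a)$. The standard bound $\mathrm{kl}(p,q)\ge(1-p)\log\frac{1}{1-q}-\log 2$ then yields
\[
\liminf_T \frac{1}{\log T}\sum_{a_i\in\mathcal{A}^\dagger}\frac{(\mu_i-\mu'_i)^2}{2}\,\Exp_\nu[n_i(T)]\ge 1.
\]
Substituting $\mu_i-\mu'_i=(\Delta_{i,\dagger}+O(\varepsilon'))/(1-\alpha)$ and multiplying through by $2(1-\alpha)^2$ gives the claimed bound with $\Delta_{i,\dagger}+O(\varepsilon')$ in place of $\Delta_{i,\dagger}$. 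Letting $\varepsilon'\to0$ finishes the proof.

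I expect the feasibility and ordering bookkeeping at the boundary to be the delicate step. It requires making sure that no unperturbed expensive arm becomes the new $\mu'^*$ while still keeping $a_\dagger$ feasible. This holds because every unperturbed arm has $(1-\alpha)\mu_k\le\mu_\dagger$ by definition of $\mathcal{A}^\dagger$.
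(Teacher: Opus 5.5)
Your construction is the paper's: lower every arm of $\mathcal{A}^{\dagger}$ to (essentially) $\mu_\dagger/(1-\alpha)$, keep all other arms, and apply the change-of-measure inequality with Gaussian KLs. However, the final kl step fails as you wrote it. With $Z=n_{a^*}(T)/T$ you have $p=\mathbb{E}_\nu[Z]\to 1$ and $q=\mathbb{E}_{\nu'}[Z]\to 0$. Then $(1-p)\log\frac{1}{1-q}\to 0$, and the inequality $\mathrm{kl}(p,q)\ge(1-p)\log\frac{1}{1-q}-\log 2$ gives nothing better than $-\log 2$ asymptotically. That form is the right one for the paper's statistic $Z=n_\dagger(T)/T$, where $p\to 0$ and $q\to 1$, but not for yours. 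For your statistic use the mirrored form $\mathrm{kl}(p,q)=\mathrm{kl}(1-p,1-q)\ge p\log\frac{1}{q}-\log 2$. Since $\mathbb{E}_{\nu'}[n_{a^*}(T)]=o(T^{a})$ and $p\to 1$, this gives $\liminf_{T}\mathrm{kl}(p,q)/\log T\ge 1-a$ for every $a\in(0,1)$, hence $\ge 1$. With that fix the rest of your argument goes through.

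Once repaired, your choice of $Z$ is a genuine and somewhat more robust variant of the paper's. The paper needs the cheap arm to \emph{become} optimal in $\nu'$, i.e.\ no arm cheaper than $a_\dagger$ may be feasible there. You only need $a^*$ to \emph{stop} being optimal, which is immediate once $a_\dagger$ is feasible, whichever cheap arm ends up winning. This matters for your $\varepsilon'$-version: pushing the threshold down to $\mu_\dagger-(1-\alpha)\varepsilon'$ can make a cheaper arm with reward just below $\mu_\dagger$ feasible. That would break the paper's statistic but not yours.

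That said, the $\varepsilon'$ is unnecessary. Feasibility is the non-strict condition $\mu_k\ge(1-\alpha)\max_j\mu'_j$. Lowering $\mathcal{A}^{\dagger}$ exactly to $\mu_\dagger/(1-\alpha)$ gives $\max_j\mu'_j=\mu_\dagger/(1-\alpha)$, because every unperturbed arm has $(1-\alpha)\mu_k\le\mu_\dagger$. So $a_\dagger$ is feasible, each KL is exactly $\Delta_{i,\dagger}^2/(2(1-\alpha)^2)$, and no limit is needed.

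Your first bullet can fail, since an unperturbed arm with $\Delta_{k,\dagger}=0$ sits above arms lowered to $\mu_\dagger/(1-\alpha)-\varepsilon'$, but it is not needed. If you keep $\varepsilon'$, the limit is justified because every $\Delta_{i,\dagger}>0$. Hence $(\Delta_{i,\dagger}+(1-\alpha)\varepsilon')^2\le(1+\eta)\Delta_{i,\dagger}^2$ holds uniformly on the finite set $\mathcal{A}^{\dagger}$, with $\eta\to 0$ as $\varepsilon'\to 0$.
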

\textbf{Proof sketch for Theorem~\ref{thm:lb_joint}: } Intuitively, $\mu_{\dagger}$ represents the reward that shall be most challenging to deem infeasible. The set $\mathcal{A}^{\dagger}$ represents arms that are capable of eliminating $a_{\dagger}$ by virtue of their rewards being higher than $\mu_{\dagger} / (1 - \alpha)$, i.e. the inflated $\mu_{\dagger}$. More concretely, in Fig.~\ref{fig:lb_main_paper}{b}, we construct perturbed instance $\nu'$ by reducing the reward of all the arms lying to the right of inflated $\mu_{\dagger}$ to below $\mu_{\dagger} / (1 - \alpha)$. Since arm $a_{\dagger} \in \mathcal{A}^-$ is cheaper than $a^*$, the role of optimal arm is usurped by $a_{\dagger}$ in $\nu'$. Perturbing $\nu$ in this manner originates Theorem~\ref{thm:lb_joint} which is a strict generalization of a bound from prior work for the case when $\mathcal{A}^{\dagger}$ only contains $i^*$ \citep{juneja2025pairwise}.

\textbf{Takeaways from lower bounds:} Collectively the lower bounds inform us of the differences in expected rewards that must be resolved in MAB-CS and motivate the design of our algorithmic approach. Theorem~\ref{thm:lb_low_cost} informs us that any consistent policy must check the feasibility of each cheap arm separately against $\mu_{\CS}$. Since the position of $a^*$ in the cost-ordered line-up is not known apriori, this structure motivates an episodic and cost-ordered check on the feasibility of arms. Theorem~\ref{thm:lb_high_cost} exposes that a minimal requirement on the samples of an expensive arm $a_k \in \mathcal{A}^{+}$ will come from a comparison between $(1 - \alpha)\mu_k$ and the expected reward of the optimal arm $\mu_{a^*}$. Finally, Theorem~\ref{thm:lb_joint} reveals that the disqualification of an arm is solved optimally by combining together the samples of all arms that are capable of disqualifying it.

\section{Cost-Ordered Feasibility}
\label{sec:cof}

\begin{algorithm}[t]
    \caption{\textbf{C}ost \textbf{O}rdered \textbf{F}easibility (COF)}
    \label{algo:COF}
    \begin{algorithmic}[1]
    \item[] \textbf{Inputs:} $K$ armed bandit instance $\mathcal{A}=\{a_1, \ldots, a_K\}$ indexed in non-decreasing order of their costs $c_1 \le \cdots \le c_K$; Horizon $T$; Subsidy factor $\alpha$; Error tolerance $\delta$.
    \item[] \textbf{Initialize:} For each $a_k\in\mathcal{A}$ set $n_k = 0$, $\hat{\mu}_k = 0, \UCB_k = 0, \LCB_k = 0$; Candidate arm $a_\ell = a_1$.
    
    \State \textbf{for} $a_k \in \mathcal{A}$ \textbf{: } Sample $a_k$; update $n_k$, $\hat{\mu}_k$, $\UCB_k$, $\LCB_k$
    
    \While{{\small ${\displaystyle \sum_{a_k\in\mathcal{A}}} \! n_k < T$} }
        \State $\mathcal{G}_{\ell} \gets
            \crl{a_i \! \in \! \crl{ a_{\ell + 1}, \ldots, a_K } \! \mid \! { \scalebox{0.8}{($1-\alpha)$} } \UCB_i \geq \LCB_{\ell} }$
    
        \If{$\mathcal{G}_{\ell} = \emptyset$}
            \State Deem $a_\ell$ as optimal; sample $a_\ell$ thereafter 
        \Else
            \For{each $a_k \in \mathcal{A}$}
                \State $\epsilon_{k, \ell} \gets \epsilon \left(  n_k, \hat{\mu}_k, \UCB_{\ell}, \alpha \right)$ \Comment{Computed as per Expression~\ref{eq:eps_def}}
            \EndFor
    
            \If{${\displaystyle \prod_{a_k \in \mathcal{A}}} \epsilon_{k, \ell} \leq \delta$}
                \Comment{Combining (aggregating) samples}
                \State Deem $a_{\ell}$ infeasible. Next evaluate $a_{\ell + 1}$
            \Else
                \If{ $n_{\ell} < {\displaystyle \max_{a_i \in \mathcal{G}_\ell}} n_i$ }
                    \State Sample $a_\ell$; update $n_\ell$, $\hat{\mu}_\ell, \UCB_\ell, \LCB_\ell$
                    \Comment{Exclusive sampling}
                \Else
                    \State Sample $a_\ell$; update $n_\ell$, $\hat{\mu}_\ell, \UCB_\ell, \LCB_\ell$
                    \Comment{BAI-filter}
                    \\ 
                    \textbf{and} sample $a_i \! \in \! \mathcal{G}_\ell | \UCB_i \! > \! {\displaystyle \max_{a_j\in\mathcal{G}_\ell} } \LCB_j$;
                    update \! $n_i, \hat{\mu}_i, \UCB_i, \LCB_i \, \forall$ sampled $a_i$
                \EndIf        
            \EndIf
        \EndIf
    \EndWhile
    \end{algorithmic}
\end{algorithm}

The pseudocode for our Cost Ordered Feasibility (COF) algorithm is presented in Algorithm~\ref{algo:COF}. Our algorithm accepts as input a $K$-armed MAB instance with collection of arms $\mathcal{A}$. COF also requires as input the problem horizon $T$, and an error tolerance factor $\delta$. In practice, we tune $\delta$ using horizon $T$ to secure the regret guarantee of Theorem~\ref{thm:cof_regret_ub}. COF evaluates the feasibility of candidate arm $a_{\ell}$ by pitting it against a set of gating arms $\mathcal{G}_{\ell}$. We introduce candidate arms to COF in the sequence in which they are indexed, i.e. in increasing (non-decreasing) order of their costs. Starting from the cheapest arm $a_1$, COF is designed to deem each arm as either feasible or infeasible with sufficient confidence. If determined feasible, we solely sample that arm thereafter. However, if infeasible we move on to the next cheapest arm.

\textbf{Confidence bound scheme:} Whenever an arm is sampled by COF, its sample count is incremented, and its empirical mean, $\UCB$ (Upper Confidence Bound), and $\LCB$ (Lower Confidence Bound) are updated to reflect the observed reward. For any arm $a_k$, $\hat{\mu}_k$ is the running sample mean of observed rewards, and the radius for the confidence bounds $\beta_k (\delta)$ is determined by the error tolerance level $\delta$ and is equal to $\sqrt{ \log \prn*{1  /  \delta}  / 2 n_k }$. $\UCB_k$ and $\LCB_k$ are then given by $\hat{\mu}_k + \beta_k (\delta)$ and $\hat{\mu}_k - \beta_k (\delta)$. Intuitively, $\prn{ \LCB_k, \UCB_k }$ represents a high probability interval for the true expected mean $\mu_k$ of arm $a_k$. In COF we design tests that check both the feasibility and infeasibility of candidate arm $a_\ell$ relative to the the quality constraint $(1 - \alpha) \mu^*$. The design of both COF's feasibility and infeasibility criteria use the insight that for two arbitrary arms if $\LCB_k > \UCB_m$, then $\mu_k > \mu_m$ with high confidence.

\textbf{Determining feasibility:} $a_\ell$ is determined to be \textbf{feasible} only once we find that its expected reward exceeds the $(1 - \alpha)$ subsidized reward of every arm more expensive than itself. This condition is checked while resetting gating arms $\mathcal{G}_\ell$ (Line 3). The set $\mathcal{G}_\ell$ represents arms whose subsidized rewards have not yet been shown to be worse than $\mu_\ell$ with sufficient confidence. We deem arm $a_\ell$ to be feasible once we find $\mathcal{G}_\ell$ to be empty (Line 4). Since cheaper candidates are evaluated prior to more expensive ones, the first candidate deemed feasible is treated as optimal and is sampled for the remaining time-slots (Line 5). Arms cheaper than $a_\ell$, that have already been deemed infeasible in prior episodes, are excluded from $\mathcal{G}_\ell$ since their low reward makes them non-informative about $\mu^*$. 
\begin{figure}[ht]
    \centering
    \includegraphics[width=0.99\linewidth]{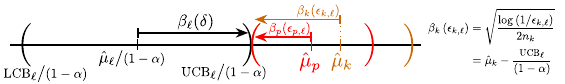}
    \caption{\textbf{Left:} The empirical mean of neither arm $a_p$ nor arm $a_k$ is sufficiently larger than $\UCB_{\ell} / (1 - \alpha)$ to eliminate $a_{\ell}$. Since $\hat{\mu} - \UCB_{\ell} / (1 - \alpha) < \beta_{\ell} (\delta)$ for both $a_p, a_k$. However if $\epsilon_{p, \ell}, \, \epsilon_{k, \ell}$ aggregated are less than $\delta$, then together they can eliminate $a_{\ell}$. \textbf{Right:} $\epsilon_{k, \ell}$ is derived using a back calculation.}
    \label{fig:eps_def}
\end{figure}

\textbf{Determining infeasibility:} Previous approaches have worked to first perform a best arm identification (BAI) round to lock in on $i^*$, and then proceed to make pairwise comparisons be introducing arms in cost-order \citep{juneja2025pairwise}. We forgo a BAI round which makes determining the infeasibility criteria more statistically challenging. Declaring infeasibility requires that candidate $a_{\ell}$ lose to the subsidized reward of any {\em single} arm. Moreover, as we understand from the discussion in Section~\ref{sec:theory_lower_bounds}, the infeasibility comparison is best made not by comparing candidate $a_\ell$ to any single arm, but rather by pooling together the confidence in the infeasibility of $a_\ell$ based on its comparisons with all arms. Corresponding to the comparison between $a_\ell$ and the subsidized reward of arm $a_k$ we compute an upper bound on the probability that $\mu_\ell > (1 - \alpha) \mu_k$ and denote it $\epsilon_{k, \ell}$ (Line 8). Once the product of $\epsilon_{k , \ell}$ across all comparisons drops below tolerance $\delta$, we deem $a_\ell$ to be infeasible (Lines 9-10). We reference this as the \textbf{{\em combining samples}} feature of COF. To compute $\epsilon_{k, \ell}$, we perform a back calculation, Fig.~\ref{fig:eps_def}, and obtain the formula stated in Expression~\ref{eq:eps_def} of Appendix~\ref{sec:expensive_arms_analysis}.

By line 10 of COF (Algorithm~\ref{algo:COF}), if $a_{\ell}$ cannot be conclusively deemed feasible or infeasible we continue to sample arm $a_{\ell}$ and all arms in $\mathcal{G}_{\ell}$ in lines 12-15. The sampling scheme of COF has a filter on $\mathcal{G}_{\ell}$ (line 15) built in. The filter on sampling arms in $\mathcal{G}_{\ell}$ pauses the sampling of arms that are with sufficient confidence not the best reward arm. This is done since the point of uniformly sampling all arms in $\mathcal{G}_{\ell}$ is to obtain an estimate of the feasibility threshold $\mu_{\CS} = (1 - \alpha) \mu^*$. However, if we find that the UCB of a gating arm's reward has fallen below the LCB of another gating arm, we can say with confidence that the arm with low UCB is not the best reward arm. We call this feature the \textbf{{\em BAI-filter}} and it leads to bandit arms having unequal samples.  

An arm whose sampling was paused by the BAI-filter may subsequently be evaluated for feasibility as a candidate feasible arm. If COF finds that the samples of candidate $a_{\ell}$ are trailing those of the most sampled arm in $\mathcal{G}_{\ell}$, COF will sample $a_{\ell}$ exclusively until $a_{\ell}$ catches up. We call this feature \textbf{{\em exclusive~sampling}} (Line 13) and its presence reduces the overall number of sub-optimal samples required by COF. The principle of sample matching for faster resolution between rewards has roots in probability theory and often appears in the MAB context when arms with wider confidence intervals are preferentially sampled \citep{gabillon2012best}. The {\em combining samples} and {\em exclusive sampling} featured by COF are deliberate design choices and their benefits on empirical performance are studied in Appendix~\ref{sec:experiment_details}.

\textbf{Expected regret:} Under normative progression, COF deems cheap arms $a_k \in \mathcal{A}^{-}$ infeasible during episodes leading up to episode $a^*$. Then, it deems $a^*$ feasible and samples it for the remaining time. Due to the formulation of cost and quality regret in terms of the expected number of samples in Equation~\ref{eq:regret_decomposition} we can bound COF's regret by upper bounding the expected number of suboptimal samples i.e. expected samples of cheap and expensive arms. We find that for cheap arm $a_{\ell}$ the sample upper bound can be traced to episode $\ell$ when its feasibility is evaluated, and for an expensive arm $a_k$, the bound emanates from $a_k$'s sampling during episodes $\dagger$ and $a^*$.
\begin{definition}[The quantities $\gamma^{\dagger}_k$ and $\gamma^{a^*}_{k}$]
For any expensive arm $a_k \in \mathcal{A}^+$ $\gamma^{\dagger}_k, \gamma^{a^*}_{k}$ are defined as,
\begin{align}
\gamma^{\dagger}_k
\coloneqq
\min
\crl*{
\frac{(3\sqrt{A} + 1)^2 \log T}{\sum_{i=1}^{A} \Delta_{\phi(i), \dagger}^2}
,
\frac{16 \log T }{\Delta_k^2}
}
,
\quad
\quad
\gamma^{a^*}_{k}
\coloneqq 
\displaystyle
\frac{16 \log T }
{
\prn*{\mu_{a^*} - (1 - \alpha) \mu_k}^2
}
\,\,
.
\end{align}
Where $\phi(k)$ represents the index of the $k^{\text{th}}$ highest reward arm, $\Delta_{\phi(k), \dagger} = (1 - \alpha) \mu_{\phi(k)} - \mu_{\dagger}$ is the gap between the subsidized reward of $a_{\phi(k)}$ and the reward of $a_{\dagger}$, and $A \leq \abs*{\mathcal{A}^{\dagger}}$ are the number of top-reward arms from $\mathcal{A}^{\dagger}$ that participate in deeming $a_{\dagger}$ infeasible during episode $\dagger$.
\label{def:gamma_definition}
\end{definition}
$\gamma^{\dagger}_k$ represents the bounding requirement from samples of $a_k$ during episode $\dagger$ and $\gamma^{a^*}_k$ represents the corresponding quantity during episode $a^*$, both under normative progression of COF. Inside the $\min$ operation for $\gamma^{\dagger}_k$ the first term is the number of times unfiltered arms in $G_{\ell}$ are sampled before $a_{\dagger}$ is deemed infeasible, and the second term is the number of samples beyond which sampling of $a_k$ is paused by the BAI-filter. The $\min$ reflects that the overall samples of arm $a_k$ during episode $\dagger$ will come from the earlier of these two terminations. $\gamma^{a^*}_k$ represents the bound on samples of $a_k$ from episode $a^*$. The BAI-filter does not play a role here since every arm in $\mathcal{G}_{\ell}$ must be eliminated by $a_{\ell}$ for $a^*$ to be deemed feasible. We use $\gamma^{\dagger}_k, \gamma^{a^*}_k$ to state the cost and quality regret bounds for COF.  
\begin{theorem}
For a bandit instance $\nu$ with arms $\mathcal{A}$ the expected cumulative cost and quality regret over horizon $T$ for COF are upper bounded as,  
\begin{align*}
\Exp \brk*{\costRegret \prn*{ T, \nu }}
&\leq
\sum_{a_k \in \mathcal{A}^+}
\max
\crl*{
\gamma^{\dagger}_k
,\,
\gamma^{a^*}_k
}
\Delta_{C, k}^+ 
+
K \sum_{a_k \in \mathcal{A}^+}
\Delta_{C, k}^+
\, 
,
\\
\Exp \brk*{\qualityRegret \prn*{ T, \nu }}
&\leq  
\sum_{a_k \in \mathcal{A}^-}
\frac{16 \log T}{\Delta_{Q, k}^+}
+
\sum_{a_k \in \mathcal{A}^+}
\max
\crl*{
\gamma^{\dagger}_k,
\,
\gamma^{a^*}_k
}
\Delta_{Q, k}^+
+
K \sum_{a_k \in \mathcal{A}} \Delta_{Q, k}^+ .
\end{align*}
\label{thm:cof_regret_ub}
\end{theorem}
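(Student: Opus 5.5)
By the decomposition in Equation~\ref{eq:regret_decomposition}, it suffices to bound $\Exp\brk{n_k(T)}$ for every sub-optimal arm. Cost regret involves only arms in $\mathcal{A}^+$ (since $\Delta_{C,k}^+=0$ otherwise). Quality regret involves only infeasible arms, which may be cheap or expensive.

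The plan is to split on a \emph{good event} $\mathcal{E}$ on which every confidence interval $(\LCB_k,\UCB_k)$ contains $\mu_k$ at every time, and on which no aggregated test $\prod_k \epsilon_{k,\ell}\le\delta$ wrongly fires for a feasible candidate. The radius is $\beta_k(\delta)=\sqrt{\log(1/\delta)/2n_k}$, and the $\epsilon_{k,\ell}$ are by construction upper bounds on $\Pr(\mu_\ell>(1-\alpha)\mu_k)$. A union bound over arms and time steps, with $\delta$ tuned as a power of $1/T$ (e.g.\ $\delta=T^{-2}$, giving $\log(1/\delta)\asymp\log T$), should therefore give $\Pr(\mathcal{E}^c)\lesssim K/T$. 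On $\mathcal{E}^c$ each arm contributes at most $T$ samples, so this event contributes at most $K\Delta^+$ per arm. This is the source of the additive terms $K\sum\Delta_{C,k}^+$ and $K\sum\Delta_{Q,k}^+$; the initialization round's single pull per arm is absorbed there as well.

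On $\mathcal{E}$ I will show that COF follows the \emph{normative progression}: cheap arms are deemed infeasible in order, and $a^*$ is deemed feasible in episode $a^*$. Nothing feasible is rejected, because the product test is a valid confidence statement on $\mathcal{E}$. Nothing infeasible is accepted, because $\mathcal{G}_\ell=\emptyset$ requires $(1-\alpha)\UCB_i<\LCB_\ell$ for every more expensive $i$, including $i^*$ whenever $i^*$ is more expensive. If $i^*$ is cheaper than $\ell$, then $\ell$ is feasible unless it was eliminated earlier, and I will check this edge case separately.

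\textbf{Cheap arms.} A cheap arm $a_\ell$ is sampled only in episode $\ell$, apart from its participation as a gating arm in earlier episodes, where it is cheaper and hence not in $\mathcal{G}$. In episode $\ell$, single-arm comparison with $i^*$ alone suffices: once $n_\ell$ and $n_{i^*}$ both exceed $c\log T/\Delta_{Q,\ell}^2$, the factor $\epsilon_{i^*,\ell}$ alone falls below $\delta$. If $i^*$ is paused by the BAI-filter it is never behind, and exclusive sampling keeps $n_\ell$ matched to it. This yields the $16\log T/\Delta_{Q,\ell}^2$ sample bound, which multiplied by $\Delta_{Q,\ell}$ gives the stated first term.

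\textbf{Expensive arms.} An arm $a_k\in\mathcal{A}^+$ can be sampled as a gating arm in any episode $\ell\le a^*$. Sample counts are cumulative and episodes $\ell<\dagger$ are easier than episode $\dagger$, since $\mu_\dagger$ is the hardest cheap reward to disqualify. Hence $n_k$ at the end of episode $\dagger$ is bounded by the episode-$\dagger$ requirement, which is the minimum of two quantities:
\begin{itemize}
\item the time at which the aggregated product over the top-$A$ arms $\phi(1),\dots,\phi(A)$ drops below $\delta$. Using $\log\epsilon_{i,\dagger}\approx -n\,\Delta_{\phi(i),\dagger}^2/(\ldots)$ with the shared confidence-radius slack, this requires $n\ge (3\sqrt{A}+1)^2\log T/\sum_i\Delta_{\phi(i),\dagger}^2$;
\item the BAI-filter pause time $16\log T/\Delta_k^2$.
\end{itemize}
In episode $a^*$, arm $a_k$ leaves $\mathcal{G}_{a^*}$ once $(1-\alpha)\UCB_k<\LCB_{a^*}$, which on $\mathcal{E}$ with matched sample counts takes $16\log T/(\mu_{a^*}-(1-\alpha)\mu_k)^2$ samples. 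Because counts are cumulative, the total is at most $\max\{\gamma^\dagger_k,\gamma^{a^*}_k\}$, not their sum.

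\textbf{Main obstacle.} The hardest step is the episode-$\dagger$ aggregation bound. It requires the explicit form of $\epsilon$ from Expression~\ref{eq:eps_def} and a lower bound on $-\sum_i\log\epsilon_{\phi(i),\dagger}$ when the arms carry unequal counts due to the BAI-filter. My first attempt will be to lower bound each term by $(\sqrt{n}\,\Delta_{\phi(i),\dagger}/(1-\alpha)-\text{radius terms})^2$, then apply Cauchy--Schwarz across the $A$ arms; Cauchy--Schwarz is where the $3\sqrt{A}$ factor should appear.
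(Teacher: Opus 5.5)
\textbf{Gap in the cheap-arm step.} You bound $n_\ell(T)$ for $a_\ell\in\mathcal{A}^-$ using episode $\ell$ alone, on the grounds that in earlier episodes $a_\ell$ ``is cheaper and hence not in $\mathcal{G}$''. This is backwards. In an episode $\ell'<\ell$ it is the candidate $a_{\ell'}$ that is cheaper, so $a_\ell\in\{a_{\ell'+1},\dots,a_K\}$ is eligible for $\mathcal{G}_{\ell'}$ and can be pulled in that episode's BAI-filter rounds. It is excluded only in episodes after $\ell$.

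Because counts are cumulative, your episode-$\ell$ stopping rule (stop once $n_\ell$ and $n_{i^*}$ pass $16\log T/\Delta_{Q,\ell}^2$) does not cap $n_\ell(T)$. If $\mu_{\ell'}$ is close to $(1-\alpha)\mu^*$ while $\mu_\ell$ is not, episode $\ell'$ can last far longer than $16\log T/\Delta_{Q,\ell}^2$ rounds.

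The missing ingredient is the BAI-filter cap used in the ``During Episode $\ell<i$'' part of Lemma~\ref{lemma:cheap_arms_ub}. On the good event, $i^*\in\mathcal{G}_{\ell'}$, it is never filtered, and $n_{i^*}\ge n_\ell$. So $a_\ell$ is paused once $\mu_\ell+2\beta_\ell<\mu^*-2\beta_{i^*}$, i.e.\ after at most $16\log T/\Delta_\ell^2$ pulls. Since $\Delta_\ell=\Delta_{Q,\ell}+\alpha\mu^*>\Delta_{Q,\ell}$, the count carried into episode $\ell$ is below that episode's threshold. Only then does your argument give $\max\{16\log T/\Delta_\ell^2,\,16\log T/\Delta_{Q,\ell}^2\}=16\log T/\Delta_{Q,\ell}^2$. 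You already use this pause bound for expensive arms; it must be applied here as well.

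\textbf{Comparison with the paper.} Apart from this, your plan has the paper's ingredients:
\begin{itemize}
\item for cheap arms, a single comparison with $i^*$;
\item in episode $\dagger$, the minimum of the aggregated-test time and the BAI pause, with $(3\sqrt A+1)^2$ coming from the RMS--AM (Cauchy--Schwarz) step of Lemma~\ref{lemma:optimization_results_for_infeasible_end};
\item in episode $a^*$, gating elimination against $\mathrm{LCB}_{a^*}$.
\end{itemize}
The bookkeeping differs. You use one global good event and a pathwise cumulative-count argument. The paper uses per-arm events, conditions on the terminal episode $E_{i,\ell}$, and combines via Lemma~\ref{lemma:expectation_less_than_max_lemma} and Lemma~\ref{lemma:max_min_lemma}. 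Your route is cleaner: it avoids the paper's replacement of $\mathbb{P}(G^c\mid E_{i,\ell})$ by $\mathbb{P}(G^c)$, which is unjustified because the terminal episode depends on the same samples. It yields the same $\max\{\min\{\tau_\dagger,16\log T/\Delta_k^2\},\gamma^{a^*}_k\}$, since $\max_\ell\min\{a,b_\ell\}=\min\{a,\max_\ell b_\ell\}$. Apply the ``no harder than episode $\dagger$'' monotonicity to every cheap episode, including those after $\dagger$, since $a_\dagger$ need not be the most expensive cheap arm.

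\textbf{Two soft spots you share with the paper.}
\begin{itemize}
\item The interval event makes each factor $\epsilon_{k,a^*}$ exceed $\delta$, but not their product: two factors equal to $\delta^{0.64}$ already multiply to less than $\delta$. So the no-false-rejection clause needs its own tail bound, and ``by construction'' is not one. The natural bound for a sum of $K$ Hoeffding exponents is $\delta\sum_{j<K}\log^j(1/\delta)/j!$, not $\delta$.
\item A two-sided event over all $K$ arms costs $2KT\delta$. Your additive term therefore comes out near $(2K+1)\Delta^+$ per arm rather than exactly $K\Delta^+$.
\end{itemize}
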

\textbf{Order wise perspective on regret:} On studying Theorem~\ref{thm:cof_regret_ub} we find that while expensive arms $\mathcal{A}^{+}$ contribute to both cost and quality regret, cheap arms $\mathcal{A}^{-}$ only contribute to quality regret since $\Delta_{C, k}^+ = 0$ for them. The leading $O \prn*{\log T}$ term for cost regret stems from episodes $\dagger$ and $a^*$ and the overall contribution from expensive arm is the larger of the two. Both cheap and expensive arms may contribute to quality regret, accordingly two $O \prn*{\log T}$ terms appear in its upper bound. The trailing $O \prn*{1}$ terms in both cost and quality regret stem from the unlikely outcome that COF does not follow its normative trajectory.

\textbf{Comparison with lower bounds:} On dividing the leading $O (\log T)$ term for the cheap arms by the incremental quality regret $\Delta_{Q, k}$ it is apparent that the dependence of the expected samples on the reward gaps is $1 / \Delta_{Q, k}^2$. This is precisely the dependence in the lower bound of Theorem~\ref{thm:lb_low_cost} making COF {\em order-optimal for cheap arms}. The story on the optimality for expensive arms is more complex. The sample bound for expensive arms varies loosely as the $\max$ among $1 / \prn*{\mu_{a^*} - (1 - \alpha)\mu_k}^2$ and $1 / \sum_{i=1}^{A} \Delta_{\phi(i), \dagger}^2$. The former term has a tight match with the individual lower bound for expensive arms stated in Theorem~\ref{thm:lb_high_cost}, however the story with the latter term is not as simple. The joint lower bound of Theorem~\ref{thm:lb_joint} by itself allows for flexibility over the distribution of samples among arms in $\mathcal{A}^{\dagger}$. Imposing uniform sampling as a constraint in Theorem~\ref{thm:lb_joint}, we see the dependence  $1 / \sum_{i=1}^{|\mathcal{A}^{\dagger}|} \Delta_{\phi(i), \dagger}^2$ emerge. COF does not in general leverage all the arms in $\mathcal{A}^{\dagger}$ to eliminate $a_{\dagger}$. Moreover, for expensive arms the reciprocal sum of gap squared dependence is contingent on the arm being sampled to the end of episode $a_{\dagger}$. This need not happen as is revealed by the $\min$ in $\gamma^{\dagger}_k$ with the BAI-filter sample bound. Which is why for expensive arms there is a {\em mismatch} between the lower bound and COF's regret upper bound. This discrepancy arises in part from there being room for improvement in the sample lower bounds, particularly from incorporating the reward gaps $\Delta_k = \mu^* - \mu_k$. We construct an example to demonstrate this in Appendix~\ref{sec:lb_proofs}.

\textbf{Bounds compared to prior work:} When it comes to regret upper bounds, COF offers an improvement over the bounds proved for prior MAB-CS algorithms. ETC-CS is an algorithm proposed by \citet{pmlr-v130-sinha21a} that operates with a fixed exploration budget. Although order-optimal on worst-case regret, due to its exploration being non-adaptive, ETC-CS ends up with $O(T^{2/3})$ instance dependent regret; order-wise worse than COF. PE-CS does have $O (\log T)$ regret, and like COF, the algorithm is order-optimal for cheap arms \citep{juneja2025pairwise}. Due to different constant factors, a precise ordering between the COF and PE-CS bounds cannot be established. Therefore, we compare their bounds based on their dependence on reward gaps from the bandit instance. For expensive arms, PE-CS always accrues samples that vary as $1 / \Delta_{k}^2$. Moreover, for arm $i^*$ PE-CS has a regret upper-bound dependence on the smallest gap: $\max_{a_k \in \mathcal{A}} 1 / \Delta_k^2$. We improve on PE-CS in cases when COF can deem $a_{\dagger}$ infeasible sooner than the BAI-filter gets to $a_k$. When $\gamma^{a^*}_k$ dominates, we can see that the gap $\mu_{a^*} / (1 - \alpha) - \mu_k$ is a strictly larger gap than $\Delta_k = \mu^* - \mu_k$ making the instance-dependence of COF superior.

\section{Experiments}
\label{sec:experiments}

\begin{figure}[ht]
    \centering
    \includegraphics[width=0.99\linewidth]{_figures/vary_alpha_collated.pdf}
    \caption{\textbf{Panels (a) and (c)} show the evolution of summed cost and quality regret over a horizon of 5 million samples for COF and other algorithms. Data is averaged over 50 runs and error bars represent the 20-80 percentile band. Left panels are for subsidy factor $\alpha=0.3$. \textbf{Panels (b), (d)} contain more comprehensive results for $\alpha$ varying between 0.01 and 0.60. Each $\alpha$ column has 50 markers per algorithm corresponding to 50 independent runs \textbf{Top:} Goodreads and \textbf{Bottom:} MovieLens.}
    \label{fig:gr_ml_vary_alpha}
\end{figure}
To validate COF empirically, we conduct experiments on the Goodreads \citep{wan2018item} and Movielens \citep{harper2016movielens} datasets from the recommendation systems literature. Goodreads contains crowd-sourced book ratings over 2.3 million unique books from 870,000 users. The books are organized into eight genres and each genre has between 36,514 and 335,449 books annotated with it. The number of aggregated reviews per genre are between 150,000 and 3.5 million. For MovieLens we use the 25M variant which consists of 25 million movie ratings over 62,000 movies rated by 162,000 users. Each movie is tagged with one or more genres with a total of 20 unique genres.

We simulate a scenario where a streaming service needs to make a decision over the genre of content served to their customer while minimizing the costs incurred from serving feasible quality content, a documented real-world objective \citep{aguiar2024platform, spotify_royalties_guide, zielnicki2025value}. Each genre is modeled as a bandit arm with a Bernoulli reward distribution and known cost. The reward represents the users quality of experience and the cost represents royalties paid out by the platform. The expected reward of an arm is computed as the average of ratings given to content annotated with a certain genre. The costs for our experiments are not native to the original datasets and are sampled uniformly at random between 0 and 1. Following this method we obtain Goodreads and MovieLens bandit instances detailed in Appendix~\ref{sec:experiment_details}. We simulate COF and four algorithms from the literature on these instances. These are PE-CS \citep{juneja2025pairwise}, UCB-CS, TS-CS, and ETC-CS \citep{pmlr-v130-sinha21a}. The results from the simulations on the Goodreads and MovieLens bandit instances are available in Fig.~\ref{fig:gr_ml_vary_alpha}.

Among the algorithms competing with COF, only \textbf{ETC-CS and PE-CS} have theoretical regret guarantees. \citet{pmlr-v130-sinha21a} do not offer precise criteria for selecting the exploration budget for ETC-CS so we pick it to be 20\% of the available horizon. Overall, from Fig.~\ref{fig:gr_ml_vary_alpha} we see that COF is better than both ETC-CS and PE-CS, consistent with our expectations from Section~\ref{sec:cof}. ETC-CS does not adapt to a problem instance and needs a conservatively chosen exploration budget. This leads to consistently high regret on both Goodreads and MovieLens. PE-CS explores in a an instance-adaptive manner, however it runs a best arm identification (BAI) routine to get the feasibility threshold $\mu_{\CS}$. Even as the feasibility criteria becomes looser with increasing subsidy factor $\alpha$, the terminal regret of PE-CS remains trapped on a high plateau. COF on the other hand does not suffer from such plateauing. For the smallest values of $\alpha$ we find that PE-CS outperforms COF by a small margin. Small alpha values in effect make accruing a large number of samples of the best reward arm $i^*$ necessary, so when it comes to this \textbf{small-alpha regime} PE-CS has a slight advantage over COF by virtue of tighter confidence intervals \citep{auer2010ucb}.

\textbf{UCB-CS and TS-CS} work by constructing a set of empirically feasible arms and then choosing to sample the cheapest arm from this set. These approaches lack guarantees on the correctness of the empirically feasible set and can in general include cheap arms in the set. While for some $\alpha$ values UCB-CS and TS-CS are successful at outperforming COF, their lack of reliability is evident in their outright failure on other scenarios. UCB-CS exhibits \textbf{linear regret} for several cases in both the Goodreads and MovieLens simulations. TS-CS fares better overall, however it too lacks reliability and has several outliers that drive a linear regret trend. For larger $\alpha$, 0.40 and beyond, UCB-CS and TS-CS fair well despite their unprincipled nature due to all arms being feasible. This means that their empirically feasible sets cannot accidentally include a cheap arm.

The regret vs. time trends of Fig.~\ref{fig:gr_ml_vary_alpha}(a) and (c) reveal how regret accrues over time. Optimal arm $a^*$ is the only arm that incurs zero incremental cost and quality regret. Regardless of the experiment or the specific independent run, a principled algorithm should start with an exploration phase where it accrues regret. Then, once the arm $a^*$ has been identified with sufficient confidence, the regret should flatten out. We see this for COF, PE-CS, and ETC-CS. However for TS-CS and UCB-CS we either find there to be a linear trend driven by all independent runs, UCB-CS on Goodreads (for $0.15 \leq \alpha \leq 0.35$), or by a subset of runs driving a less steep but still linear regret trend, TS-CS on Goodreads ($\alpha = 0.3, 0.35$) and MovieLens ($\alpha = 0.25$). On display is the fragility of UCB-CS. Once a cheap arm incorrectly makes into the empirically feasible set of UCB-CS, the algorithm will commence sampling it exclusively and not be able to recover from its mistake; Fig.~\ref{fig:gr_ml_vary_alpha}(c) starting about $t = 2$ million. Additional experiments demonstrating the \textbf{impact of ablating} the {\em exclusive-sampling} and {\em combining samples} features of COF are documented in Section~\ref{sec:experiment_details}.

\section{Related Work}
\label{sec:related_work}

Multi-armed bandits have a rich literature. In the following we first discuss cost-sensitive and multi-objective bandit frameworks. Then we highlight other problems where, similar to MAB-CS, the criteria for exploration are defined relative to unknown expected rewards in the bandit instance.\\ 
\textbf{Cost sensitive MABs:} Broadly, costs manifest as costs from {\em sampling} an arm \citep{badanidiyuru2018knapsacks, tran2012knapsack} or as costs caused by {\em observing} reward \citep{elumar2025probes}. The MAB-CS framework only incurs sampling costs that are deterministic and arm dependent. Most prior work from the sampling costs category \citep{badanidiyuru2018knapsacks, cayci2020budget, ding2013multiarmed, tran2010epsilon, tran2012knapsack, xia2015thompson, xia2016budgeted} deals with a fixed budget. However, budget constrained MABs are inappropriate for real-world decision-making settings where accrued cost simply has to be minimized while honoring a constraint on reward.\\
\textbf{Multi-objective problems:} Prior work by \citet{kanarios2024cost} minimizes accumulated cost regardless of budget, and works with the objective of identifying the best reward arm using the least cost. However, \citet{kanarios2024cost} have no consideration for feasible reward or regret. \citet{drugan2013designing} introduced the multi-objective framework of MO-MAB that can incorporate the dual objectives of cost and quality regret. However, their framework works to exploit any pareto-optimal bandit arm and does not provision a feasibility criteria like MAB-CS.\\
\textbf{Broader exploration objectives:} Examples of MAB frameworks where the usual goal of exploring to identify high reward arms is fused with another constraint include not dropping below baseline rewards \citep{du2021one, pmlr-v48-wu16}, identifying all $\epsilon$-optimal arms \citep{al2022complexity, mason2020finding}, and identifying top $m$ reward arms \citep{bubeck2013multiple, gabillon2012best, kalyanakrishnan2010efficient}. The statistical principles leveraged by these methods overlap with our work, especially probability aggregation \citep{kalyanakrishnan2010efficient}. However these methods are fundamentally different to MAB-CS since they lack the notion of cost. In our work we want to explore just enough to determine the cheapest feasible arm while being agnostic to the complete set of feasible arms. Our method prioritizes cheaper arms and only evaluates an arm's feasibility if cheaper arms have been deemed infeasible with high confidence.

\section{Conclusions and Future Work}
\label{sec:conclusions}

In this paper we worked on multi-armed bandits with cost subsidy (MAB-CS) with the minimum tolerated quality set to be a fraction of the unknown best reward. First, we developed lower bounds on the expected number of samples needed from sub-optimal arms. Then, we developed the COF algorithm to solve MAB-CS, analyzed its expected cost and quality regret, and validated COF through experiments. We found a discrepancy between the instance dependence of our joint sample lower bound, and COF’s corresponding upper bound, which we shall address in future research.

COF leverages a set of gating arms to determine reward feasibility and is constrained to uniformly sample the gating arms retained by the BAI-filter. Depending on the application it may be unwise to fixate on minimizing the number of sub-optimal samples and instead cost-biased sampling may be more suitable. Our joint lower bound result reveals flexibility on gating sample origins. If the highest reward arms are too expensive, COF should lean on more samples from less expensive arms of sufficient quality. A query contextual variant of COF with cost-biased sampling would be the appropriate scheme for LLM routers due to models having an uneven cost distribution with a large spread. We motivate the need for developing a cost-biased variants of COF using a synthetic example in Appendix~\ref{sec:experiment_details} and leave formal analysis and its contextual generalization to future work. 

\begin{ack}
\end{ack}

\bibliography{main}
\bibliographystyle{plainnat}

\newpage
\newgeometry{left=2cm, right=2cm, top=2.5cm, bottom=2.5cm}
\appendix
\section{Additional Experiments and Details}
\label{sec:experiment_details}

In this section we include all the empirical results and documentation supporting the experiments for reproducibility. All variants of COF were run by setting error tolerance parameter $\delta = \frac{K^2}{T^2}$ where $K$ is the number of bandit arms. Although this setting of $
\delta$ differs from the $\delta = \frac{1}{T^2}$, the theoretical bounds for our choice of $\delta$ are identical to those for $\delta = \frac{1}{T^2}$ and only differ on constant factors.

\subsection{Details on Main Paper Experiments}

\subsubsection*{Statement on the use of datasets in this paper}

We use the Goodreads Book Graph dataset released by Wan and McAuley and the MovieLens 25M dataset released by GroupLens. We cite the requested dataset publications for Goodreads \citep{wan2018item, wan2019fine} and MovieLens \citep{harper2016movielens}. Both datasets are used only for non-commercial academic research. We do not redistribute the raw datasets, and we comply with the stated usage terms: the Goodreads dataset is made available for academic use only and may not be redistributed or used commercially; the MovieLens 25M dataset may be used for research purposes with acknowledgment, without redistribution, without implying endorsement by the University of Minnesota or GroupLens, and without commercial or revenue-bearing use unless permission is obtained.

\subsubsection*{Instructions to recreate experiments}

We have shared a folder of supplemental materials with the structure:
\dirtree{%
.1 .
.1 data.
.1 scripts.
.1 source.
}

\begin{itemize}
\item \texttt{data} contains bandit instance data in plain text format
\item \texttt{scripts} contain bash files needed to generate all the log files and the plots
\item \texttt{source} contains the python source code implementing the algorithm and plotting logic.
\end{itemize}
Scripts contains the following bach scripts.
\dirtree{%
.1 .
.2 compress\_comb\_samp.sh.
.2 compress\_exclusive\_samp.sh.
.2 plot\_comb\_samp\_bespoke.sh.
.2 plot\_comb\_samp.sh.
.2 plot\_exclusive\_samp\_bespoke.sh.
.2 plot\_exclusive\_samp.sh.
.2 plot\_only.sh.
.2 plot\_regret\_vs\_time.sh.
.2 readme.md.
.2 run\_comb\_samp.sh.
.2 run\_exclusive\_samp.sh.
.2 run\_good\_reads\_ablation.sh.
.2 run\_good\_reads.sh.
.2 run\_movie\_lens\_ablation.sh.
.2 run\_movie\_lens.sh.
}
\begin{itemize}
    \item To generate Fig.~\ref{fig:gr_ml_vary_alpha}(a) perform the following sequence of steps:
    \begin{enumerate}
        \item Run \texttt{./scripts/run\_good\_reads.sh} from the project parent directory. All resource path based commands from here on out will be relative to the parent directory. This will save the log file into \texttt{results/run\_logs}
        \item Run \texttt{./scripts/plot\_regret\_vs\_time.sh} after making sure the correct destination of log files is hard coded into the bash file
    \end{enumerate}
    \item To generate Fig.~\ref{fig:gr_ml_vary_alpha}(b) perform the following sequence of steps:
        \begin{enumerate}
            \item Run \texttt{./scripts/run\_good\_reads.sh} from the project parent directory. All resource path based commands from here on out will be relative to the parent directory. This will save the log file into \texttt{results/run\_logs}
            \item Run \texttt{./scripts/plot\_only.sh} after making sure the correct destination of log files is hard coded into the bash file
        \end{enumerate}
        \item To generate Fig.~\ref{fig:gr_ml_vary_alpha}(c) perform the following sequence of steps:
        \begin{enumerate}
            \item Run \texttt{./scripts/run\_movie\_lens.sh} from the project parent directory. All resource path based commands from here on out will be relative to the parent directory. This will save the log file into \texttt{results/run\_logs}
            \item Run \texttt{./scripts/plot\_regret\_vs\_time.sh} after making sure the correct destination of log files is hard coded into the bash file
        \end{enumerate}
        \item To generate Fig.~\ref{fig:gr_ml_vary_alpha}(d) perform the following sequence of steps:
        \begin{enumerate}
            \item Run \texttt{./scripts/run\_movie\_lens.sh} from the project parent directory. All resource path based commands from here on out will be relative to the parent directory. This will save the log file into \texttt{results/run\_logs}
            \item Run \texttt{./scripts/plot\_only.sh} after making sure the correct destination of log files is hard coded into the bash file
        \end{enumerate}
\end{itemize}

The other bash scripts in \texttt{./scripts} are used to generate the plots available here in this supplemental experiments section.

\subsubsection*{Computational resources and wall-clock execution time}
To run our bandit experiments we used a machine with the below configuration:
\begin{table}[h]
\centering
\begin{tabular}{lllllllll}
\hline
OS & Access Methods & GPU & Cores & Memory (GB) & Scratch Storage (GB) \\
\hline
RHEL8 & ssh & NVidia Tesla T4 & 16 & 256 & 512 \\
\hline
\end{tabular}
\end{table}
Using this configuration, 50 independent runs of the tested MAB-CS algorithms for s single value of $\alpha$ takes between 5-10 minutes of wall clock time to run depending on the choice of algorithm. 

\begin{figure}[ht]
    \centering
\includegraphics[width=0.99\linewidth]{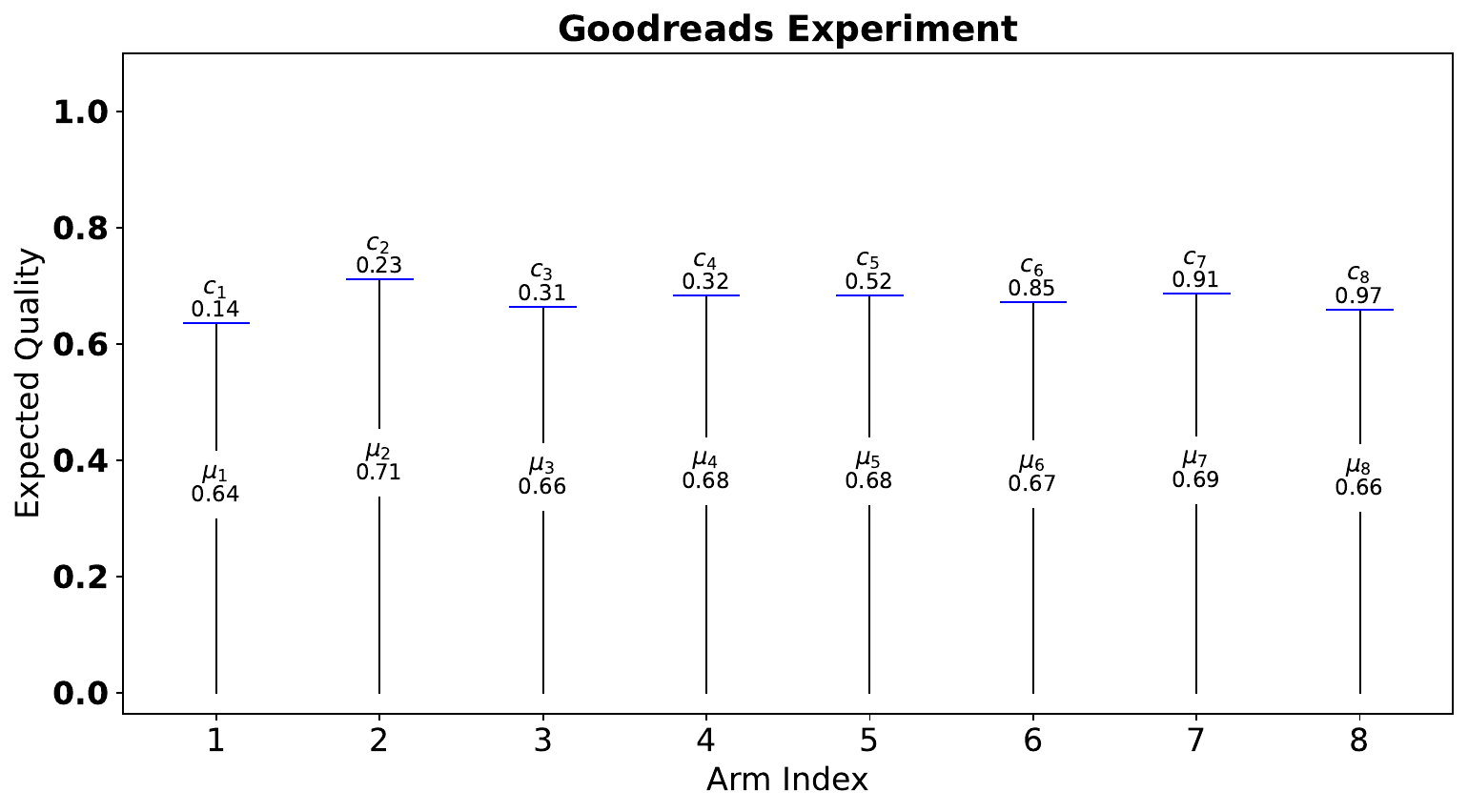}
    \caption{Good reads bandit instance}
    \label{fig:good_reads_instance}
\end{figure}

\begin{figure}[ht]
    \centering
\includegraphics[width=0.99\linewidth]{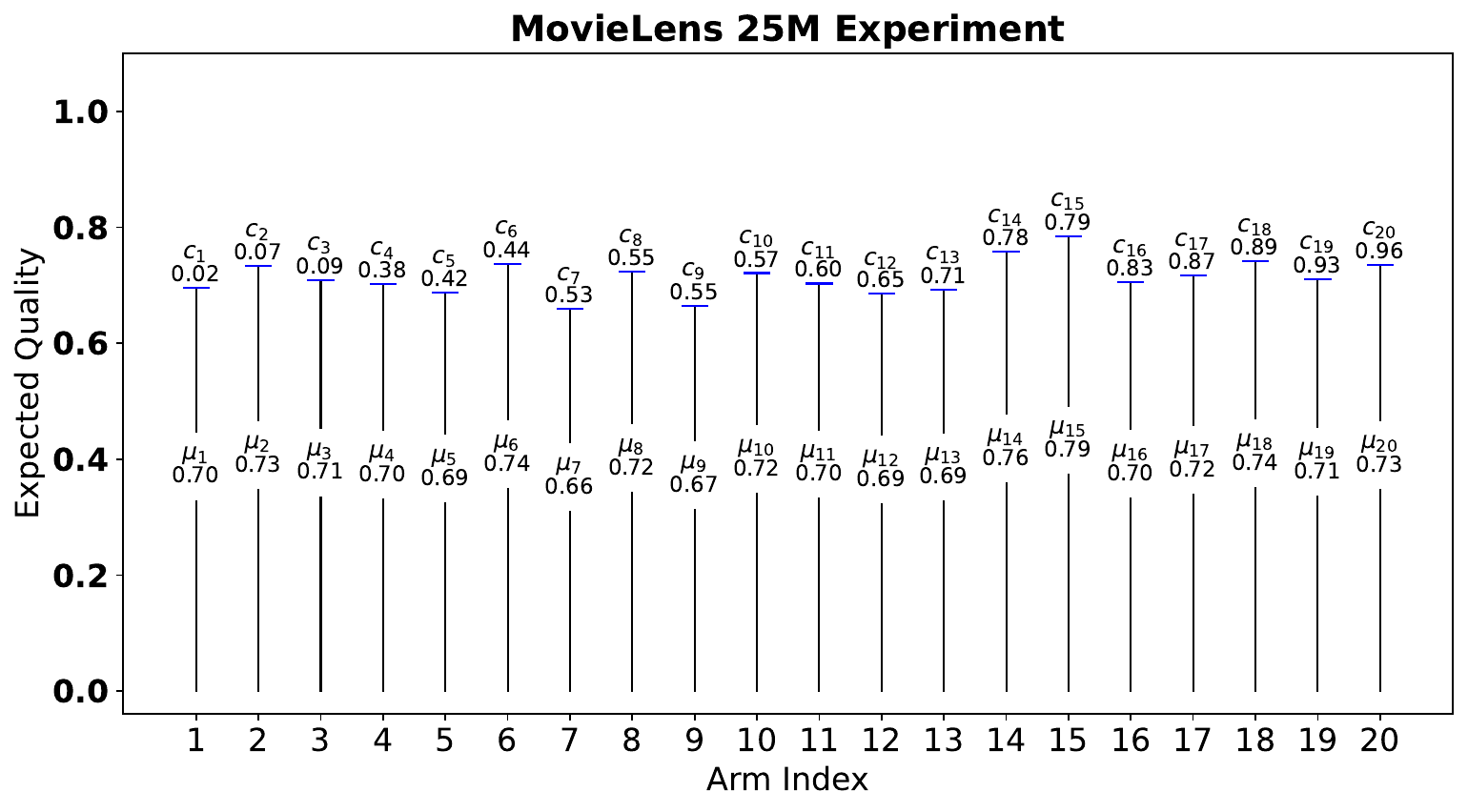}
    \caption{MovieLens bandit instance}
    \label{fig:movie_lens_instance}
\end{figure}

\subsection{Ablation Experiments}

\subsubsection*{Ablation instance $\nu_1$}

$\nu_1$ with arm reward array $\bm{\mu} = \brk*{0.15, 0.24, 0.96, 0.95, 0.99, 0.98, 0.97}$ and costs $1, 2, \ldots, 7$. We use $\alpha = 0.8$ making $\mu_{\CS} = 0.198$ and $a^* = 2$.

\subsubsection*{Ablation instance $\nu_2$}

$\nu_2$ with $\bm{\mu} = \brk*{0.44, 0.46, 0.48, 0.7, 0.71, 0.704, 0.714, 0.702, 0.716, 0.708, 0.712, 0.706}$ and costs $1, 2, \ldots, 12$. We use $\alpha = 0.3$ making  $\mu_{\CS} = 0.501$ and $a^* = 4$

\begin{figure}[ht]
    \centering
    \includegraphics[width=0.8\linewidth]{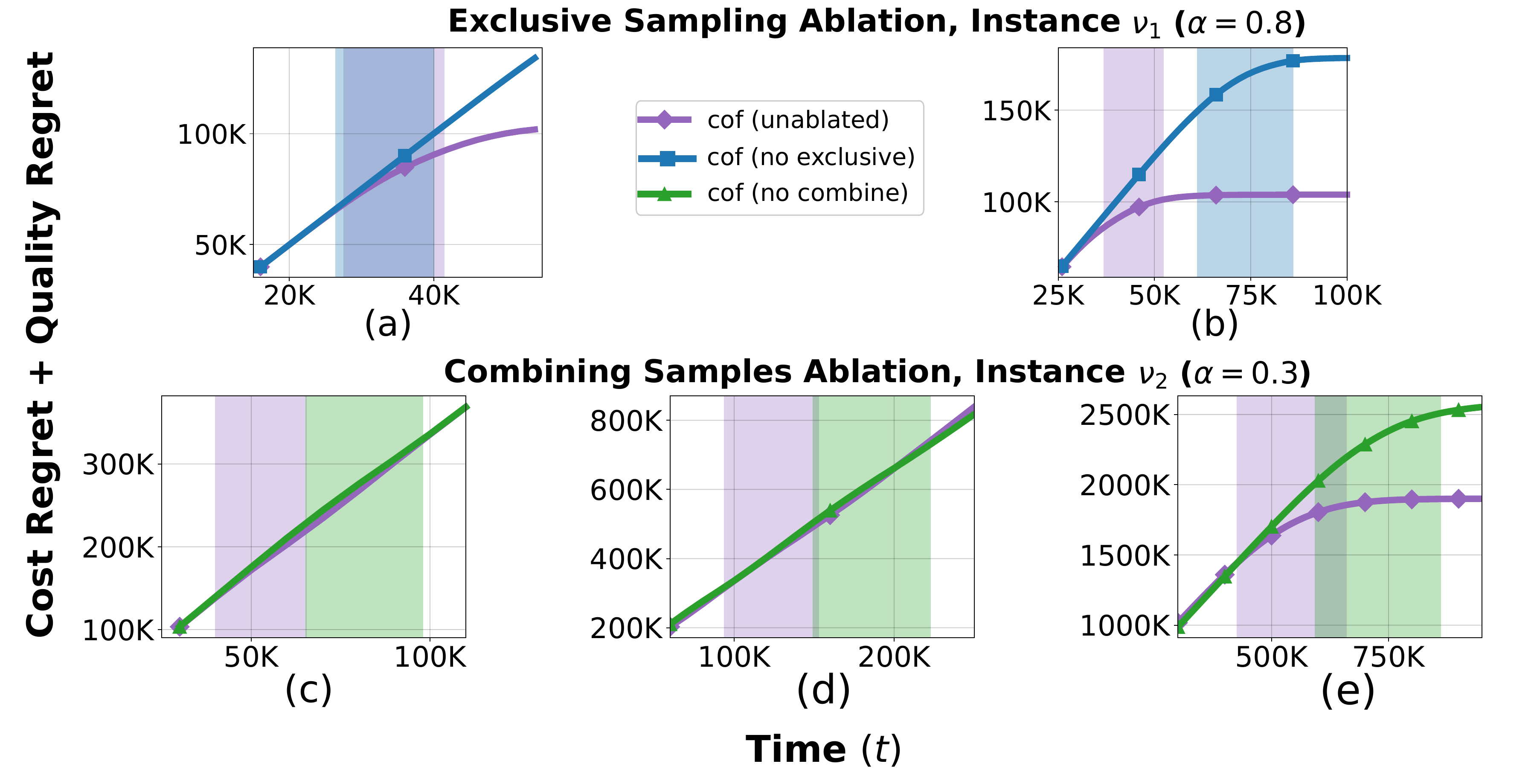}
    \caption{\textbf{Top:} Panels (a) and (b) depict windows from ablation on exclusive sampling experiment with instance $\nu_1$. Subsidy factor $\alpha = 0.8$, horizon $T = 200,000$ samples. The event rectangles in panels (a) and (b) represent the observed times at which $a_1$ was eliminated and $a_2$ was deemed optimal respectively. \textbf{Bottom:} Panels (c), (d), and (e) depict windows from ablation on combining samples experiment with instance $\nu_2$. Subsidy factor $\alpha = 0.3$, horizon $T = 1,000,000$. The event rectangles in panels (a), (b), and (c) represent the observed times at which arms $a_1$, $a_2$, and $a_3$ respectively were deemed infeasible. Data in figure is based on 1000 independent runs. All event rectangles capture one standard deviation of times at which the event occurred.}
    \label{fig:ablations}
\end{figure}

Next we highlight the role of the {\em exclusive sampling} and {\em combining samples} features of COF by ablating them from COF and comparing the performance of the COF to its ablated variants on cost and quality regret. Towards this end we define two variants of COF. First COF (no exclusive) always samples both the arm $a_{\ell}$ currently under evaluation for feasibility and arms in $\mathcal{G}_{\ell}$ (those not excluded by the BAI-filter as discussed in Section~\ref{sec:cof}). Second COF (no combine) does not aggregate the error probabilities (Line 9 of Algorithm~\ref{algo:COF}) and instead checks if any gating arm individually is capable of eliminating $a_{\ell}$. To highlight the role of exclusive sampling we design bandit instance where the second cheapest arm is optimal however is low reward leading to its sampling being paused early when it was a gating arm by the BAI-filter during the first episode that deems the cheapest arm infeasible. However a lot more of the samples of this second cheapest arm are needed to evaluate its own feasibility. We create 7-armed bandit instance $\nu_1$. Unablated COF has much better regret and the comparison of regret between COF and COF (no exclusive) is available in Fig.~\ref{fig:ablations} panels (a) and (b) which represent two zoomed in time windows.  

To illustrate the efficiency coming from combining samples (error probability aggregation) we choose a 12 armed bandit instance $\nu_2$. As seen from the results in Fig.~\ref{fig:ablations} panels (c)-(e) unablated COF is able to deem arms $a_1, a_2, a_3$ with fewer samples and achieve substantially lower regret than its ablated variant that does not combine samples of expensive arms. We highlight here that while {\em combining samples} will always lower both the cost and quality regret of COF, this is not true for {\em exclusive sampling}. One example of problem where fewer exclusive sampling leads to worse regret is when fewer total samples from a pair of arms, with one of the arms being the optimal $a^*$ comes at the cost of more samples from the arm that is sub-optimal among the two. However, we shall continue to have fewer total sub-optimal samples even in this example since without the catch-up caused by exclusive sampling, more samples of the sub-optimal arm will be needed to deem it infeasible.

\subsection{Non-uniform sampling with COF}

Experiment on cost-biased sampling. Bandit instance used was:
\begin{align*}
\bm{\mu} &= \brk*{0.38, 0.18, 0.50, 0.23, 0.44, 0.33, 0.28, 0.74, 0.80, 0.90, 0.96, 0.95, 0.48} \\
\bm{c} &= \brk*{1.0, 1.5, 2.0, 3.0, 4.0, 5.0, 6.0, 8.0, 10.0, 40.0, 80.0, 800.0, 850.0}
\end{align*}

\begin{figure}[ht]
    \centering
    \includegraphics[width=0.99\linewidth]{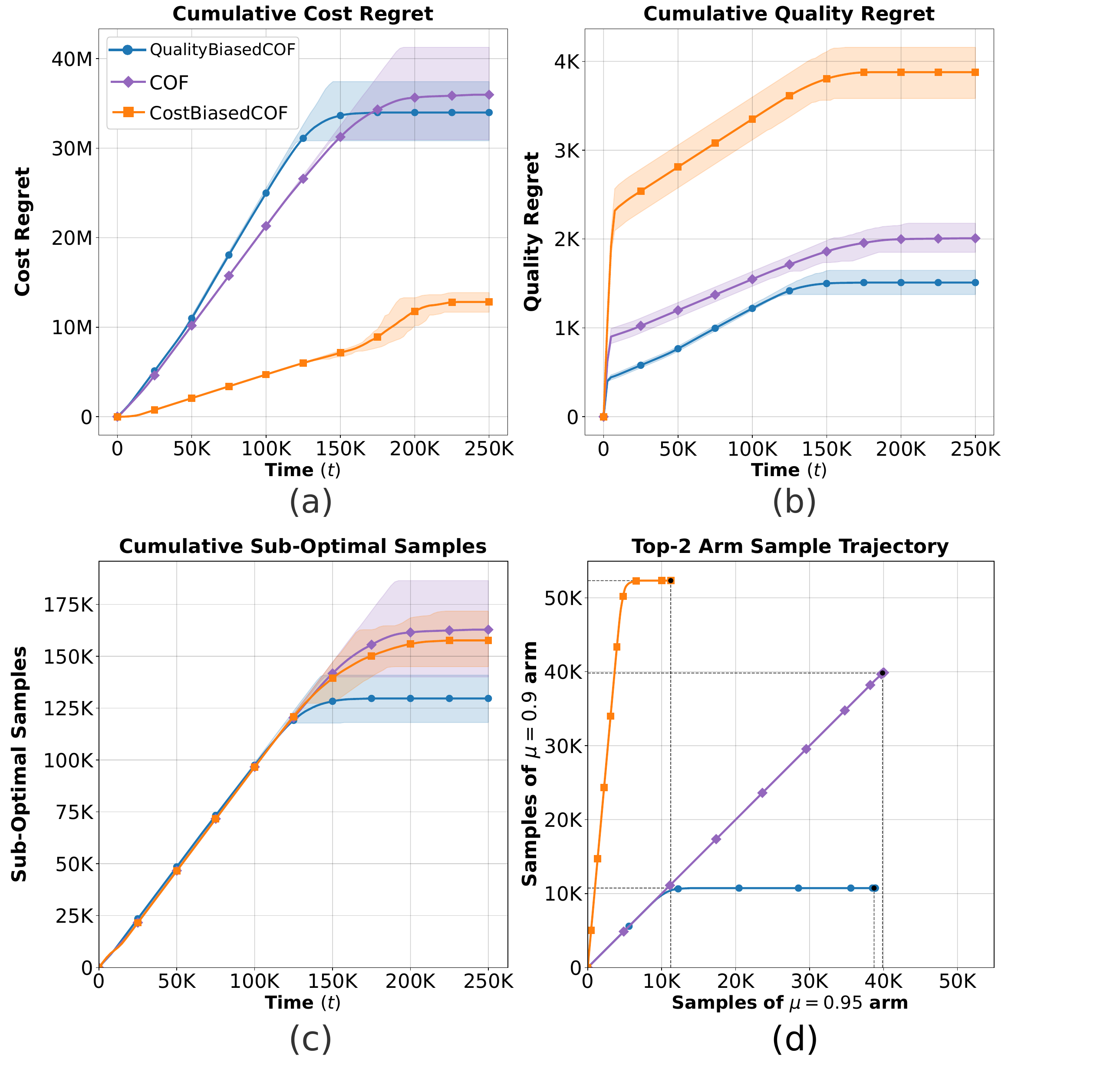}
    \caption{Variants of COF with uniform and non-uniform sampling}
    \label{fig:cof_non_uniform_sampling}
\end{figure}

\clearpage

\section{Mathematical Preliminaries}
\label{sec:preliminaries}

We present here some mathematical preliminaries that appear repeatedly in the proofs of Theorem's stated in the main paper. These results act as building blocks for the regret lower and upper bound analyses. Instead of proving results that are standard in the MAB literature, we provide a reference to a source that prove them. First, in Section~\ref{sec:lb_prelims} we state the tools leveraged in proving the lower bounds in Section~\ref{sec:lb_proofs}. Then, in Section~\ref{sec:cof_ub_prelims} we provide the results utilized in Section~\ref{sec:cof_ub} to analyze the upper bounds on the expected number of sub-optimal arms by COF (Algorithm~\ref{algo:COF}) and consequently the cost and quality regret incurred by COF. Throughout this Section, and for the analysis generally, we use the same notation as the main paper and introduce and reference new notation whenever necessary.

\subsection{Preliminaries for Lower Bound Analysis}
\label{sec:lb_prelims}

In this section we consider an MAB-CS instance with arms represented $\mathcal{A}$, and known costs of sampling each arm $a_i \in \mathcal{A}$. Just like in the main paper, arms are indexed in the non-decreasing order of their costs, and number of arms $\abs{ \mathcal{A} } = K$. An instance $\nu$ is completely specified as a collection of reward distributions $\crl{\nu_i}_{a_i \in \mathcal{A} }$, and each reward distribution $\nu_i$ is assumed to be drawn from some common distribution family $\mathcal{M}$, and $\Exp \brk{ \nu_i }$ is denoted $\mu_i$. The feasible arms $\mathcal{S} \subseteq \mathcal{A}$ are those whose expected reward $\mu_i \geq \mu_{\CS}$. Since $\mu_{\CS} = (1 - \alpha) \mu^*, \,\, \alpha \in [0, 1) $, means $\mathcal{S} \coloneqq \crl*{ a_i \in \mathcal{A} \mid \mu_i \geq (1 - \alpha) \mu^* }$. The optimal arm indexed $a^*$ is then the least cost feasible arm $a_{a^*} = \arg \min_{a_i \in \mathcal{S}} c_i$. The MAB-CS instance dependent lower bound on expected samples of sub-optimal arms is for the class of {\em consistent policies}.  

\begin{definition}[Consistent Policy $\pi$]
A policy $\pi$ is consistent if for all bandit instances $\nu$ and for all arms $i \neq a^*$,
$\Exp \brk{ n_i \prn{T} } = o(T^{\gamma})$ for all $0 < \gamma \leq 1$.
\label{def:consistent_policy}
\end{definition}

Next we state the fundamental inequality based on an information theory result used frequently in the MAB literature for characterizing problem instance dependent lower bounds on the expected number of samples of sub-optimal arms. 
\begin{lemma}[Fundamental inequality adapted from \citep{garivier2019explore}]
    Consider two MAB-CS instances $\nu$ and $\nu'$ over the same set of bandit arms $\mathcal{A}$. The reward distribution associated with sampling arm $a_i \in \mathcal{A}$ is $\nu_i$ in case of the first instance and $\nu_i'$ in case of the second. The expected number of samples $\Exp_{\nu} \brk{ n_i (T) }$ of an arm $a_i$ under any policy interacting with instance $\nu$ over $T$ sampling rounds must satisfy,  
    \begin{align}
        \sum_{a_i \in \mathcal{A}}
        \Exp_{\nu}
        \brk*{
        n_i (T)
        }
        \KL \prn{\nu_i, \nu_i'}
        &\geq
        \kl \prn*{\Exp_{\nu} \brk{Z}, \Exp_{\nu'} \brk{Z} }
        .
    \end{align}
    Where $\kl$ denotes the Kullback–Leibler (KL) divergence between two Bernoulli distributions, i.e.,
    \begin{align*}
        \forall \,\,
        p, q
        \in 
        \brk{0, 1}^2,
        \quad
        \kl \prn*{p, q}
        &= 
        p 
        \log \frac{p}{q}
        +
        (1 - p) 
        \log \frac{1 - p}{1 - q},
    \end{align*}
    and where $Z$ is any random variable with support lying in $\brk{0, 1}$. $Z$ must be measurable with respect to the probability space that all random variables are defined on.
    \label{lemma:lb_fundamental_inequality}
\end{lemma}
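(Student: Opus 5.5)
We establish the fundamental inequality by the standard data-processing argument for the law of the full observation history, following \citep{garivier2019explore}. Fix a policy $\pi$ and the horizon $T$. Let $\mathbb{P}_{\nu}$ and $\mathbb{P}_{\nu'}$ be the laws of the history $\mathcal{H}_{T+1} = \crl{k_1, r_1, \ldots, k_T, r_T}$ when $\pi$ interacts with $\nu$ and with $\nu'$. These laws are built on the same probability space: internal randomization of $\pi$ plus a reward table. Any $[0,1]$-valued $Z$ measurable with respect to this space is therefore covered. We assume $\KL(\nu_i,\nu_i') < \infty$ whenever $\Exp_\nu[n_i(T)]>0$; otherwise the left side is infinite and there is nothing to prove.

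\textbf{Step 1 (divergence decomposition).} We show that $\KL(\mathbb{P}_{\nu}, \mathbb{P}_{\nu'}) = \sum_{a_i \in \mathcal{A}} \Exp_{\nu}[n_i(T)] \KL(\nu_i, \nu_i')$. Write the density of the history as a product over rounds $t$. Each factor is the policy kernel $\pi_t(k_t \mid \mathcal{H}_t)$ times the reward density $p_{k_t}(r_t)$ under $\nu$, or $p'_{k_t}(r_t)$ under $\nu'$. The policy kernels are identical under both instances and cancel in the log-likelihood ratio, which leaves $\sum_{t=1}^T \log \frac{p_{k_t}(r_t)}{p'_{k_t}(r_t)}$. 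Take $\Exp_\nu$ and condition on $\mathcal{H}_t, k_t$ via the tower rule: each term contributes $\KL(\nu_{k_t}, \nu'_{k_t})$. Grouping the rounds by which arm was pulled gives the sum weighted by $\Exp_\nu[n_i(T)]$.

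\textbf{Step 2 (data processing to a Bernoulli).} Let $Z$ be a $[0,1]$-valued function of the history together with any extra randomness whose law is the same under both instances. We show $\KL(\mathbb{P}_\nu,\mathbb{P}_{\nu'}) \ge \kl(\Exp_\nu[Z], \Exp_{\nu'}[Z])$. If $Z$ is $\{0,1\}$-valued, this is the data-processing inequality applied to the pushforward of the history under $Z$. For general $Z\in[0,1]$, we enlarge the space with an independent $U \sim \mathrm{Unif}[0,1]$ and set $B = \mathbb{1}\{U \le Z\}$. Then $B$ is Bernoulli with $\Exp[B] = \Exp[Z]$ under each instance. The joint law of (history, $U$) has the same KL divergence as the history alone, because $U$ has the same law under both instances and is independent. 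Data processing to $B$ then gives the claim. Alternatively, one can invoke joint convexity of $\kl$ as in \citep{garivier2019explore}.

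Combining Steps 1 and 2 yields the lemma. The main obstacle is measure-theoretic rather than conceptual. Step 1 needs the reward densities to exist with respect to a common dominating measure, and it needs the policy's internal randomization to be handled so that its kernels genuinely cancel. We handle both by conditioning on the policy's random seed, which has the same law in both instances, and by noting that absolute continuity fails only when the left side is already infinite. For the unit-variance Gaussian family used in this section everything is explicit, with $\KL(\nu_i,\nu_i')=(\mu_i-\mu_i')^2/2$. This is the form in which the lemma is later combined with $Z = n_{a^*}(T)/T$ and consistency to obtain Theorems~\ref{thm:lb_low_cost}--\ref{thm:lb_joint}.
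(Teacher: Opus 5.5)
Your proof is correct and is the standard argument from \citep{garivier2019explore}, which the paper cites rather than reproving: you decompose the divergence between the history laws by the chain rule, then apply data processing to a Bernoulli variable via an independent uniform extension. One caution on scope: the argument covers only $Z$ that are functions of the observed history and of randomness independent of the rewards (such as the policy's seed), so your opening claim that any $Z$ measurable on the space carrying the full reward table is covered is false, since a $Z$ built from unobserved rewards can violate the inequality; the paper's own statement has the same overbreadth, and it does no harm for the choices $Z = n_k(T)/T$ used in the lower-bound proofs.
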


\begin{lemma}[Lower bound on $\kl \prn{p, q}$ from \citep{garivier2019explore}]
    The KL divergence between two Bernoulli distributions with parameters $p, q \in \brk{0, 1}^2$ is lower bounded as,
    \begin{align}
        \kl \prn*{p, q}
        &=
        p \log \frac{p}{q}
        +
        (1 - p) \log \frac{1 - p}{1 - q}
        \leq
        (1 - p) \log \frac{1}{1 - q} - \log 2
        .
    \end{align}
    \label{lemma:lb_on_bernoulli_kl}
\end{lemma}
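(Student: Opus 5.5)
The lemma's title and its later use both call for a \emph{lower} bound on $\kl(p,q)$, namely $\kl(p,q) \geq (1-p)\log\frac{1}{1-q} - \log 2$, which is the form in \citep{garivier2019explore}. The displayed ``$\leq$'' looks like a typo: taking $p = q$ small gives $0$ on the left and a negative number on the right, so the ``$\leq$'' version fails. I would prove the ``$\geq$'' version, by expanding the logarithms and discarding nonnegative terms.

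\textbf{Step 1: split the divergence.} I write
\begin{align*}
\kl(p,q) = \bigl[p\log p + (1-p)\log(1-p)\bigr] + p\log\frac{1}{q} + (1-p)\log\frac{1}{1-q}.
\end{align*}
The bracketed term is $-h(p)$, where $h$ is the binary entropy in nats.

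\textbf{Step 2: bound the two extra terms.}
\begin{itemize}
\item The binary entropy satisfies $h(p) \leq \log 2$, since $h$ is concave and symmetric about $1/2$ with $h(1/2) = \log 2$. Hence $-h(p) \geq -\log 2$.
\item Since $q \in [0,1]$, we have $\log(1/q) \geq 0$, so $p\log(1/q) \geq 0$.
\end{itemize}
Dropping the second term and replacing the first by $-\log 2$ gives the claimed bound.

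\textbf{Step 3: boundary cases.} The only care needed is at the endpoints. I use the conventions $0\log 0 = 0$ and $\log(1/0) = +\infty$.
\begin{itemize}
\item If $q = 0$ and $p > 0$, the left side is $+\infty$ and the bound holds trivially.
\item If $q = 1$ and $p < 1$, both sides are $+\infty$, which is the usual convention, and the left side is at least the right.
\item If $q = 1$ and $p = 1$, the right side has the form $0 \cdot \infty$, which is read as $0$, giving $0 \geq -\log 2$.
\end{itemize}
No real obstacle is expected beyond this bookkeeping and stating the corrected direction of the inequality.
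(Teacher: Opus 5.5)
Your proof is correct, and you are right that the printed ``$\leq$'' must be read as ``$\geq$''. Your $p=q$ example refutes the printed direction, and the paper itself applies the bound in the ``$\geq$'' direction in the proof of the cheap-arm lower bound.

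The paper gives no proof of its own; it imports the inequality from the cited reference. There the bound comes from the same standard argument you use: write the divergence as cross-entropy minus binary entropy, bound the entropy by $\log 2$, and drop a nonnegative cross-entropy term. So your approach matches, endpoint conventions included.
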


Next we define some key quantities needed for the lower bound analysis,
\begin{definition}[Key quantities $D_{\inf}, \tilde{D}_{inf}$ from \citep{juneja2025pairwise}]
    Let $\KL$ denote the Kullback-Leibler divergence between two probability distributions. Given a distribution $\nu_i \in \mathcal{M}$ and real number $x$, we define
    \begin{align}
        D_{\inf} (\nu_i, x)
        = 
        \inf
        \crl*{
        \KL \prn{ \nu_i, \nu_i' }
        \mid
        \nu_i' \in \mathcal{M},
        \text{ and }
        \Exp \brk{ \nu_i' } > x
        },
        \\
        \tilde{D}_{\inf}
        \prn{ \nu_i, x }
        = 
        \inf
        \crl*{
        \KL \prn{\nu_i, \nu_i'}
        \mid
        \nu_i' \in \mathcal{M},
        \text{ and }
        \Exp \brk{\nu_i'} \leq x
        }.
    \end{align}
    \label{def:D_inf_definition}
\end{definition}
Where $\Exp \brk{ \nu_i' }$ denotes the mean of distribution $\nu_i'$. In the lower bound proofs of Section~\ref{sec:lb_proofs}, $D_{\inf}, \tilde{D}_{\inf}$ serve the role of quantifying the least perturbation to distributions $\crl*{ \nu_i }_{a_i \in \mathcal{A}}$ needed to induce a change in optimal arm $a^*$. We use $D_{\inf} (\nu_i, x)$ when $\mu_i < x$, and $\tilde{D}_{\inf} (\nu_i, x)$ when $\mu_i > x$.

\begin{lemma}[Explicit formulas for $D_{\inf}$, $\tilde{D}_{\inf}$ from \citep{lattimore2020bandit}]
    When the family of reward distributions $\mathcal{M}$ is all Gaussian distributions with a common variance $\sigma^2$, the key terms $D_{\inf}$ and $\tilde{D}_{\inf}$ take the form,
    \begin{align*}
        D_{\inf} \prn{ \nu_i, x }
        = 
        \tilde{D}_{\inf} \prn{ \nu_i, x }
        =
        \frac{\prn{ \mu_i - x }^2}{2 \sigma^2}
        .
    \end{align*}
     \label{lemma:d_inf_explicit_formula}
\end{lemma}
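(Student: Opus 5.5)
The plan is a direct computation: first get a closed form for the KL divergence between two members of $\mathcal{M}$, then carry out the one-dimensional minimizations in Definition~\ref{def:D_inf_definition}. Since $\mathcal{M}$ is the family of Gaussians with common variance $\sigma^2$, every candidate $\nu_i'$ in either infimum has the form $\mathcal{N}(b,\sigma^2)$ for a real $b = \Exp\brk{\nu_i'}$. So each infimum over distributions becomes an infimum over the scalar mean $b$, restricted to $b > x$ for $D_{\inf}$ and to $b \leq x$ for $\tilde{D}_{\inf}$.

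The first step is the standard identity
\begin{align*}
\KL\prn{\mathcal{N}(\mu_i,\sigma^2), \mathcal{N}(b,\sigma^2)} = \frac{(\mu_i - b)^2}{2\sigma^2}.
\end{align*}
To obtain it, take the expectation under $\nu_i$ of the log-likelihood ratio $\frac{(y-b)^2 - (y-\mu_i)^2}{2\sigma^2}$. Expanding the numerator gives $(\mu_i - b)(2y - \mu_i - b)$, which is linear in $y$. Substituting $\Exp\brk{y} = \mu_i$ then yields $(\mu_i - b)^2$, and the normalizing constants cancel because the variances agree. Hence
\begin{align*}
D_{\inf}(\nu_i,x) = \inf_{b > x} \frac{(\mu_i-b)^2}{2\sigma^2}, \qquad \tilde{D}_{\inf}(\nu_i,x) = \inf_{b \leq x} \frac{(\mu_i-b)^2}{2\sigma^2}.
\end{align*}

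The second step minimizes the function $b \mapsto (\mu_i - b)^2$, which is increasing in $|b - \mu_i|$. For $D_{\inf}$ we work in the regime where the paper uses it, namely $\mu_i < x$. Every admissible $b > x$ is then farther from $\mu_i$ than $x$ is. The infimum is therefore approached as $b \downarrow x$, giving $(\mu_i - x)^2/(2\sigma^2)$; it is not attained, since the constraint is strict, but it is still the infimum. For $\tilde{D}_{\inf}$ the regime is $\mu_i > x$. The constraint $b \leq x$ is closed, so the minimum is attained at $b = x$, again giving $(\mu_i-x)^2/(2\sigma^2)$.

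The only delicate point is the regime qualification, and I would state it explicitly in the proof. If $\mu_i > x$, then $\nu_i$ itself is admissible for $D_{\inf}$ and the infimum is $0$; likewise $\tilde{D}_{\inf}(\nu_i,x) = 0$ when $\mu_i \leq x$. So the common formula holds exactly in the regimes $\mu_i < x$ for $D_{\inf}$ and $\mu_i > x$ for $\tilde{D}_{\inf}$, as stipulated after Definition~\ref{def:D_inf_definition}. At the boundary $\mu_i = x$ the formula gives $0$, which also matches the actual infimum in both cases, so the lemma holds as stated whenever it is applied. With $\sigma^2 = 1$ this gives the $1/\Delta^2$-type constants that appear in Theorems~\ref{thm:lb_low_cost}--\ref{thm:lb_joint}.
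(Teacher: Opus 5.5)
Your proof is correct. The paper gives no argument of its own and simply cites Lattimore and Szepesv\'ari; your computation (the equal-variance Gaussian KL is $(\mu_i-b)^2/(2\sigma^2)$, followed by a one-dimensional minimization over the admissible mean $b$) is the standard derivation behind that citation. Your explicit regime caveat is also worthwhile. The statement's chain of equalities holds only if $D_{\inf}$ is read for $\mu_i \le x$ and $\tilde{D}_{\inf}$ for $\mu_i \ge x$; otherwise the respective infimum is $0$. The paper imposes this restriction only in the prose after Definition~\ref{def:D_inf_definition}.
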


\subsection{Preliminaries for Upper Bound Analysis}
\label{sec:cof_ub_prelims}

\begin{definition}[Subgaussian random variable]
    \label{def:subgaussian}
    We say that $X$ is  $\sigma$-subgaussian if for any $\epsilon \geq 0$,
    \begin{align*}
        \Pr{ X - \Exp \brk{X} \geq \epsilon } &\leq 
        \exp{\prn{\frac{-\epsilon^2}{2 \sigma^2}}}.
    \end{align*}
\end{definition}

\begin{lemma}[Bounded random variables are Subgaussian, example 5.6(c) in \citep{lattimore2020bandit}]
    If Random Variable $ X \in \brk{a, b} $ almost surely, then $X$ is $\frac{b - a}{2}$ subgaussian.
    \label{lemma:bounded_is_subgaussian}
\end{lemma}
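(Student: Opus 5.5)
The plan is the standard Chernoff argument built on Hoeffding's lemma. With $\sigma = (b-a)/2$, the target tail bound $\exp\prn*{-\epsilon^2/(2\sigma^2)}$ equals $\exp\prn*{-2\epsilon^2/(b-a)^2}$. So it suffices to show that the moment generating function satisfies
\begin{align*}
\Exp\brk*{e^{\lambda (X - \Exp[X])}} \leq \exp\prn*{\lambda^2 (b-a)^2/8}
\qquad \text{for every } \lambda \geq 0,
\end{align*}
and then optimize over $\lambda$ in Markov's inequality.

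\textbf{Reduction to mean zero.} Replace $X$ by $Y = X - \Exp[X]$. Then $Y$ lies almost surely in $[a', b'] = [a - \Exp X, b - \Exp X]$, the interval still has length $b - a$, $\Exp[Y] = 0$, and $a' \leq 0 \leq b'$. If $a' = b'$, then $Y = 0$ almost surely and the claim is trivial, so assume $b' > a'$.

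\textbf{Convexity bound.} Since $y \mapsto e^{\lambda y}$ is convex, for $y \in [a', b']$ we have
\begin{align*}
e^{\lambda y} \leq \frac{b' - y}{b' - a'} e^{\lambda a'} + \frac{y - a'}{b' - a'} e^{\lambda b'} .
\end{align*}
Take expectations and use $\Exp[Y] = 0$. Set $p = -a'/(b'-a') \in [0,1]$ and $h = \lambda(b'-a')$. The result is $\Exp\brk*{e^{\lambda Y}} \leq e^{L(h)}$ with
\begin{align*}
L(h) = -hp + \log\prn*{1 - p + p e^{h}} .
\end{align*}

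\textbf{Bounding $L$ (main obstacle).} I expect this step to be the main obstacle. Direct computation gives $L(0) = 0$ and $L'(0) = -p + p = 0$. For the second derivative, $L''(h) = q(1-q)$ where $q = p e^h/(1 - p + p e^h) \in [0,1]$, so $L''(h) \leq 1/4$. Taylor's theorem with Lagrange remainder then gives
\begin{align*}
L(h) \leq \frac{h^2}{8} = \frac{\lambda^2 (b-a)^2}{8},
\end{align*}
which is exactly the moment generating function bound above.

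\textbf{Chernoff step.} For any $\lambda > 0$, Markov's inequality applied to $e^{\lambda Y}$ gives
\begin{align*}
\Pr{Y \geq \epsilon} \leq \exp\prn*{-\lambda\epsilon + \lambda^2 (b-a)^2/8}.
\end{align*}
Choosing $\lambda = 4\epsilon/(b-a)^2$ yields $\exp\prn*{-2\epsilon^2/(b-a)^2} = \exp\prn*{-\epsilon^2/(2\sigma^2)}$ with $\sigma = (b-a)/2$. This is precisely the condition in Definition~\ref{def:subgaussian}.
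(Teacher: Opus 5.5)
Your proof is correct and follows the standard route. Hoeffding's lemma (the convexity interpolation, then the bound $L''(h) = q(1-q) \le 1/4$ with Taylor's theorem) gives the moment generating function bound, and a Chernoff step turns it into the tail bound. The paper does not prove this lemma itself; it cites Lattimore--Szepesv\'ari, whose argument is exactly this one.

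One point is worth noting. The book defines subgaussianity through the moment generating function of a mean-zero variable, whereas Definition~\ref{def:subgaussian} in the paper is a one-sided tail bound on $X - \mathbb{E}[X]$. Your explicit centering and the Chernoff optimization over $\lambda$ are therefore the bridge that the citation leaves implicit.
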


\begin{lemma}[Hoeffding Bound, Section 5.4 in \citep{lattimore2020bandit}]
    Let $X_1, X_2, \ldots, X_n$ be $n$ independent random variables, each bounded within the interval $[a, b]$ : $a \leq X_i \leq b$. The empirical mean of these variables is given by,
    \begin{align*}
        \Bar{X} &= \frac{1}{n} \sum_{i=1}^{n} X_i.
    \end{align*}
    Then Hoeffding's inequality states,
    \begin{align*}
        \Pr{ \Bar{X} - \Exp \brk{ X } \geq t } 
        &\leq 
        \exp \prn{ -\frac{2 n t^2}{ (b - a)^2 } },\\
        \Pr{ \Bar{X} - \Exp \brk{ X } \leq -t } 
        &\leq 
        \exp \prn{ -\frac{2 n t^2}{ (b - a)^2 } }.
    \end{align*}
    \label{lemma:hoeffding_bound}
\end{lemma}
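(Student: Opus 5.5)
The plan is the standard Chernoff (exponential moment) argument, combined with Hoeffding's lemma on the moment generating function of a bounded, centered variable. I read $\Exp\brk{X}$ in the statement as $\Exp\brk{\Bar{X}} = \frac{1}{n}\sum_{i}\Exp\brk{X_i}$; this reduces to the common mean when the $X_i$ are identically distributed, which is the case of bandit rewards. Write $Y_i = X_i - \Exp\brk{X_i}$. Then $\Bar{X} - \Exp\brk{\Bar X} = \frac1n \sum_i Y_i$, each $Y_i$ has mean zero, and each $Y_i$ lies in an interval of length $b-a$.

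\textbf{Step 1 (Chernoff bound).} For any $\lambda > 0$, Markov's inequality applied to $e^{\lambda \sum_i Y_i}$ together with independence gives
\begin{align*}
\Pr{\Bar X - \Exp\brk{\Bar X} \ge t} \le e^{-\lambda n t}\prod_{i=1}^n \Exp\brk*{e^{\lambda Y_i}}.
\end{align*}

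\textbf{Step 2 (Hoeffding's lemma).} I will show $\Exp\brk{e^{\lambda Y}} \le \exp\prn{\lambda^2 (b-a)^2/8}$ for any mean-zero $Y$ supported in an interval $[a',b']$ with $b'-a' = b-a$. This mgf bound is exactly the content behind Lemma~\ref{lemma:bounded_is_subgaussian}. That lemma is stated only as a tail bound, so I would prove the mgf bound directly rather than cite it.
\begin{itemize}
\item By convexity of $y\mapsto e^{\lambda y}$, we have $e^{\lambda Y} \le \frac{b'-Y}{b'-a'}e^{\lambda a'} + \frac{Y-a'}{b'-a'}e^{\lambda b'}$. Taking expectations with $\Exp\brk{Y}=0$ gives $\Exp\brk{e^{\lambda Y}} \le e^{L(h)}$, where $h = \lambda(b'-a')$, $p = -a'/(b'-a')$, and $L(h) = -hp + \log(1-p+pe^h)$.
\item One checks that $L(0)=L'(0)=0$ and that $L''(h) = q(1-q) \le 1/4$, with $q = pe^h/(1-p+pe^h)$.
\item Taylor's theorem then gives $L(h)\le h^2/8$, which is the claimed bound.
\end{itemize}
This calculus step is the only non-routine part, and I expect it to be the main obstacle. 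The delicate points are that $p\in[0,1]$ because $a'\le 0\le b'$, and that $q$ is a genuine probability, so that $q(1-q)\le 1/4$.

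\textbf{Step 3 (optimize $\lambda$).} Substituting Step 2 into Step 1 gives the bound $\exp\prn{-\lambda n t + n\lambda^2(b-a)^2/8}$. Choosing $\lambda = 4t/(b-a)^2$ yields $\exp\prn{-2nt^2/(b-a)^2}$, which is the first inequality.

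\textbf{Step 4 (lower tail).} The second inequality follows by applying the same argument to $-X_i \in [-b,-a]$. These variables are independent and supported in an interval of the same length $b-a$, so the bound is unchanged.
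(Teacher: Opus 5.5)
Your proof is correct and is the standard one. The paper gives no argument of its own and simply cites Lattimore and Szepesv\'ari, where the bound is obtained exactly as you do it: Hoeffding's lemma $\mathbb{E}[e^{\lambda Y}]\le e^{\lambda^2(b-a)^2/8}$ for a centered variable in an interval of length $b-a$ (that is, $(b-a)/2$-subgaussianity in the moment-generating-function sense), combined with the Cram\'er--Chernoff bound for the independent sum and the choice $\lambda = 4t/(b-a)^2$.

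You are also right that the paper's tail-only Definition~\ref{def:subgaussian} would not tensorize over the $n$ summands, so proving the mgf bound directly is the appropriate move. The one thing to add is that your argument, like the inequality itself, requires $t \ge 0$, which the statement leaves implicit: your $\lambda$ is positive only for $t>0$, and the case $t=0$ is trivial.
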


\begin{corollary}[Hoeffding Bound for Bernoulli random variables]
    In the Hoeffding bound of Lemma \ref{lemma:hoeffding_bound}, when $\crl{X_i}_{i=1}^{n}$ are independent and identically distributed Bernoulli random variables with $\Exp \brk{X_i} = \mu$, then,
    \begin{align*}
        \Pr{ \hat{\mu} - \mu \geq t } 
        &\leq 
        \exp \prn{ -2 n t^2 },
        \\
        \Pr{ \hat{\mu} - \mu \leq -t } 
        &\leq
        \exp \prn{ -2 n t^2 }.
    \end{align*}
    Where $\hat{\mu} = \frac{1}{n} \sum_{i = 1}^n X_i$ denotes the sample mean over $n$ samples. 
    \label{corr:hoeffding_for_bernoulli}
\end{corollary}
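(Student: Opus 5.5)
The plan is to obtain the corollary as a direct specialization of the Hoeffding bound stated in Lemma~\ref{lemma:hoeffding_bound}; no new probabilistic argument is needed. First, observe that each $X_i$ is a Bernoulli random variable, so $X_i \in \crl{0,1} \subseteq [0,1]$ almost surely. We may therefore apply Lemma~\ref{lemma:hoeffding_bound} with $a = 0$ and $b = 1$, giving $(b-a)^2 = 1$. The independence hypothesis of the lemma is supplied by the i.i.d.\ assumption. Identical distribution moreover gives a common mean $\Exp\brk{X_i} = \mu$ for every $i$, so the quantity $\Exp\brk{X}$ in the lemma is unambiguously $\mu$. Finally, the empirical mean $\Bar{X}$ of the lemma coincides with the sample mean $\hat{\mu} = \frac{1}{n}\sum_{i=1}^n X_i$ of the corollary.

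Substituting these identifications into the two tail inequalities of Lemma~\ref{lemma:hoeffding_bound} yields
\begin{align*}
\Pr{\hat{\mu} - \mu \geq t} \leq \exp\prn{-2nt^2},
\qquad
\Pr{\hat{\mu} - \mu \leq -t} \leq \exp\prn{-2nt^2},
\end{align*}
which are exactly the two claimed bounds. Both hold for every $t \geq 0$ and every $n \geq 1$.

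The only point requiring any care is bookkeeping rather than a genuine obstacle. One should check that the lemma is stated for independent bounded variables that need not share a distribution, so that the i.i.d.\ Bernoulli case is strictly covered. One should also note that the mean appearing in the lemma should be read as the average of the individual means, which here equals $\mu$. As an optional sanity check, the same conclusion follows from Lemma~\ref{lemma:bounded_is_subgaussian}: each $X_i$ is $\tfrac{1}{2}$-subgaussian, so the average $\hat{\mu}$ is $\tfrac{1}{2\sqrt{n}}$-subgaussian. Definition~\ref{def:subgaussian} then gives the tail bound $\exp\prn{-\epsilon^2 / (2 \cdot \tfrac{1}{4n})} = \exp(-2n\epsilon^2)$, matching the first inequality. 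Applying the same argument to $-X_i$ gives the second.
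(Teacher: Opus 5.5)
Your proposal is correct and matches the paper, which states this corollary without a separate proof because it is the immediate specialization of Lemma~\ref{lemma:hoeffding_bound} with $a=0$, $b=1$ (so $(b-a)^2=1$), exactly as you do. One small caution on your optional sanity check: the step ``$\tfrac12$-subgaussian summands give a $\tfrac{1}{2\sqrt{n}}$-subgaussian average'' relies on the moment-generating-function definition used in the cited textbook, not on the tail-bound form of Definition~\ref{def:subgaussian} as the paper states it, so that aside is not self-contained within the paper (your main argument does not depend on it).
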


\begin{lemma}[Probability of inaccurate $\UCB$ or $\LCB$ for Bernoulli rewards]
    For a bandit arm $a_i$ with reward distributed $\distBern{(\mu_i)}$, when the upper and lower confidence bound of the arm are defined with confidence radius $\beta_i (\delta) = \sqrt{\frac{\log (1 / \delta)}{2 n_i}}$ as $\UCB_i = \hat{\mu}_i + \beta_i (\delta)$, and $\LCB_i = \hat{\mu}_i - \beta_i (\delta)$, then we have,
    \begin{align*}
        \Pr{\UCB_i < \mu_i} &\leq \delta,\\
        \Pr{\LCB_i > \mu_i} &\leq \delta.
    \end{align*}
    \label{lemma:ucb_lcb_incorrect}
\end{lemma}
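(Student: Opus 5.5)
This follows from the Hoeffding bound for Bernoulli rewards (Corollary~\ref{corr:hoeffding_for_bernoulli}) applied at the deviation level equal to the confidence radius. We treat the number of samples $n_i$ as fixed, so that $\hat{\mu}_i$ is the mean of $n_i$ i.i.d.\ $\distBern(\mu_i)$ draws. We then prove the two inequalities separately, since they are symmetric.

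For the first inequality, write the event as a deviation of the sample mean. The event $\UCB_i < \mu_i$ is the same as $\hat{\mu}_i - \mu_i < -\beta_i(\delta)$, which is contained in $\hat{\mu}_i - \mu_i \leq -\beta_i(\delta)$. The lower-tail statement of Corollary~\ref{corr:hoeffding_for_bernoulli} with $t = \beta_i(\delta)$ bounds this probability by $\exp(-2 n_i \beta_i(\delta)^2)$. By construction, $2 n_i \beta_i(\delta)^2 = \log(1/\delta)$, so the bound equals $\delta$.

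For the second inequality, the event $\LCB_i > \mu_i$ is the same as $\hat{\mu}_i - \mu_i > \beta_i(\delta)$. The upper-tail bound of the corollary, with the same $t$, again gives at most $\delta$.

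The only real subtlety is that inside COF the count $n_i$ is random and depends on the history, so the samples are not a fixed-size i.i.d.\ batch. I would state the lemma for a fixed sample count, which is how it is phrased: $\beta_i$ is written in terms of $n_i$, and the bound is read conditionally on having exactly $n_i$ samples. I would defer the randomness of $n_i$ to the later analysis, which handles it with a union bound over the at most $T$ possible values of $n_i$. With the tuning $\delta = 1/T^2$, or $K^2/T^2$, that union bound costs only a factor $T$ and still leaves a summable failure probability. In this step I would use the standard fact that the $s$-th sample of arm $a_i$ can be taken from a pre-drawn i.i.d.\ reward table, so the mean of the first $s$ draws is an honest i.i.d.\ mean for each $s$. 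Given that convention, the lemma itself needs no further work.
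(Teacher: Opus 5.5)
Your proof is correct and matches the paper's: rewrite each event as a one-sided deviation of $\hat{\mu}_i$ from $\mu_i$ by $\beta_i(\delta)$, apply the Bernoulli Hoeffding corollary with $t=\beta_i(\delta)$, and use $2n_i\beta_i(\delta)^2=\log(1/\delta)$. Your caveat about treating $n_i$ as a fixed count (with the reward-table convention and a later union bound) is a sound addition that the paper leaves implicit.
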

\begin{proof}
    Rearranging terms we can write,
    \begin{align*}
        \Pr{\UCB_i < \mu_i}
        &= \Pr{\hat{\mu}_i + \beta_{i} (\delta) < \mu_i}\\
        &= \Pr{\hat{\mu}_i - \mu_i < -\beta_i (\delta)}\\
        &\leq \exp \prn{- 2 n_i \beta^2_i (\delta)} = \delta.
    \end{align*}
    Where we have used the probability bound from Corollary \ref{corr:hoeffding_for_bernoulli}. The second part of the Lemma statement can be proved analogously.
\end{proof}

\begin{lemma}[Iterated expectation lemma]
Let $X$ be any integrable random variable over probability space $\prn*{ \Omega, \mathcal{F}, \mathbb{P} }$, and let $\crl*{E_i}_{i=1}^{n}$ be a collection of mutually exclusive and exhaustive measurable events. That is $\bigcup_{i=1}^{n} E_i = \Omega$ and $E_i \cap E_j = \phi, \, \forall \, i, j \in [n], i \neq j $. Then the following identity holds,
\begin{align*}
    \Exp
    \brk*{
        X
    }
    &=
    \sum_{i=1}^{n}
    \Exp
    \brk*{
        X \mid E_i
    }
    \Pr{E_i}
    .
\end{align*}
As a special case if the events are just some $E$ and its complement $E^c$, then,
\begin{align*}
    \Exp
    \brk*{
        X
    }
    &=
    \Exp
    \brk*{
        X \mid E
    }
    \Pr{E}
    +
    \Exp
    \brk*{
        X \mid E^c
    }
    \Pr{E^c}
    .
\end{align*}
\label{lemma:iterated_expectation_lemma}
\end{lemma}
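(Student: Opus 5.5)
The plan is to reduce the identity to linearity of expectation applied to a decomposition of $X$ along the partition $\crl*{E_i}_{i=1}^{n}$. Since the events are mutually exclusive and exhaustive, every $\omega \in \Omega$ lies in exactly one $E_i$. Hence the indicator functions satisfy $\sum_{i=1}^{n} \mathbb{1}_{E_i} = 1$ pointwise on $\Omega$, and so
\begin{align*}
X = \sum_{i=1}^{n} X \mathbb{1}_{E_i} .
\end{align*}
Each summand is measurable because $E_i \in \mathcal{F}$. Each summand is integrable because $\abs*{X \mathbb{1}_{E_i}} \leq \abs*{X}$ and $X$ is integrable by assumption. Linearity of expectation over this finite sum then gives $\Exp \brk*{X} = \sum_{i=1}^{n} \Exp \brk*{X \mathbb{1}_{E_i}}$.

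Next I will identify each term $\Exp \brk*{X \mathbb{1}_{E_i}}$ with $\Exp \brk*{X \mid E_i} \Pr{E_i}$, using the elementary definition of conditioning on an event. When $\Pr{E_i} > 0$, the conditional expectation is $\Exp \brk*{X \mid E_i} \coloneqq \Exp \brk*{X \mathbb{1}_{E_i}} / \Pr{E_i}$. Multiplying through by $\Pr{E_i}$ recovers exactly $\Exp \brk*{X \mathbb{1}_{E_i}}$. Summing over $i$ then yields the displayed identity.

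The special case follows by taking $n = 2$ with $E_1 = E$ and $E_2 = E^c$. These two events are disjoint, their union is $\Omega$, and both belong to $\mathcal{F}$ because $\mathcal{F}$ is closed under complements.

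The only delicate point, which I expect to be the main obstacle, is an event with $\Pr{E_i} = 0$, for which $\Exp \brk*{X \mid E_i}$ is not defined by the formula above. I will handle it by adopting the standard convention that the product $\Exp \brk*{X \mid E_i} \Pr{E_i}$ is taken to be $0$ in that case. This convention is consistent with the decomposition, because $X \mathbb{1}_{E_i} = 0$ almost surely when $\Pr{E_i} = 0$, and so $\Exp \brk*{X \mathbb{1}_{E_i}} = 0$ as well. With this convention, the term-by-term identification holds for every $i$, and the lemma follows.
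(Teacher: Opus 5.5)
Your proof is correct, but it follows a different and more elementary route than the paper. The paper argues via the tower property. It introduces the sub-$\sigma$-algebra $\mathcal{G}$ generated by the partition and writes $\mathbb{E}[X] = \mathbb{E}[\mathbb{E}[X \mid \mathcal{G}]]$. It then reads off $\sum_{i} \mathbb{E}[X \mid E_i]\,\mathbb{P}(E_i)$ because $\mathbb{E}[X \mid \mathcal{G}]$ is constant on each atom $E_i$. You instead decompose $X = \sum_i X \mathbf{1}_{E_i}$, use only linearity of the integral, and then apply the elementary definition $\mathbb{E}[X \mid E_i] = \mathbb{E}[X \mathbf{1}_{E_i}]/\mathbb{P}(E_i)$.

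The paper's version is shorter and places the identity within general conditional-expectation theory. However, its second equality tacitly requires $\mathbb{E}[X \mid \mathcal{G}] = \sum_i \mathbb{E}[X \mid E_i] \mathbf{1}_{E_i}$ almost surely, and checking that is exactly your computation $\mathbb{E}[X \mathbf{1}_{E_i}] = \mathbb{E}[X \mid E_i]\,\mathbb{P}(E_i)$. In addition, the set the paper lists, $\{\emptyset, E_1, \ldots, E_n, \Omega\}$, is not closed under unions or complements once $n \geq 3$. It should be $\sigma(E_1, \ldots, E_n)$, the family of all unions of the $E_i$. Your argument is self-contained and sidesteps both points. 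It also makes explicit the convention for null events $\mathbb{P}(E_i) = 0$, which the paper never addresses.
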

\begin{proof}
    Define a sub $\sigma$-algebra of $\mathcal{F}$, $\mathcal{G} = \crl*{\phi, E_1, E_2, \ldots, E_n, \Omega}$. Then,
    \begin{align}
        \Exp
        \brk*{
            X
        }
        &=
        \Exp
        \brk*{
            \Exp
            \brk*{
                X
                \mid
                \mathcal{G}
            }
        }
        \text{ (Because $\mathcal{G} \subset \mathcal{F}$)}\\
        &=
        \sum_{i=1}^{n}
        \Exp
        \brk*{
            X \mid E_i
        }
        \Pr{E_i}.
    \end{align}
\end{proof}

\begin{lemma}[Expectation is at most equal to larger of the conditioned expectations]
    Let $X$ be any integrable random variable over probability space $\prn*{ \Omega, \mathcal{F}, \mathbb{P} }$, and let $\crl*{E_i}_{i=1}^{n}$ be a collection of mutually exclusive and exhaustive measurable events. That is $\bigcup_{i=1}^{n} E_i = \Omega$ and $E_i \cap E_j = \phi, \, \forall \, i, j \in [n], i \neq j $. Then,
    \begin{align}
        \Exp
        \brk*{
            X
        }
        &\leq
        \max_{i \in [n]} 
        \crl*
        {
        \Exp
            \brk*{
                X \mid E_i
        }
        }.
    \end{align}
    \label{lemma:expectation_less_than_max_lemma}
\end{lemma}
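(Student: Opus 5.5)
The plan is to reduce the statement to the previous lemma (the iterated expectation identity, Lemma~\ref{lemma:iterated_expectation_lemma}) combined with a convex-combination argument. Because $\crl*{E_i}_{i=1}^{n}$ is a finite partition of $\Omega$, the weights $p_i \coloneqq \Pr{E_i}$ are nonnegative and sum to one. Lemma~\ref{lemma:iterated_expectation_lemma} then writes $\Exp \brk*{X}$ as the weighted average $\sum_{i=1}^{n} p_i \, \Exp \brk*{X \mid E_i}$.

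The key step is the standard fact that a convex combination never exceeds the largest of its terms. Set $M \coloneqq \max_{i \in [n]} \Exp \brk*{X \mid E_i}$. Then $p_i \Exp \brk*{X \mid E_i} \leq p_i M$ for each $i$, since $p_i \geq 0$. Summing over $i$ gives
\begin{align*}
    \Exp \brk*{X}
    =
    \sum_{i=1}^{n} p_i \, \Exp \brk*{X \mid E_i}
    \leq
    M \sum_{i=1}^{n} p_i
    =
    M ,
\end{align*}
which is the claimed inequality.

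The only technical point is events of probability zero, where $\Exp \brk*{X \mid E_i}$ is undefined. I will handle this by restricting both the sum and the maximum to indices with $\Pr{E_i} > 0$. This is harmless: null events contribute nothing to the decomposition in Lemma~\ref{lemma:iterated_expectation_lemma}, the remaining weights still sum to one, and a maximum over fewer indices can only be smaller, so the bound remains valid. Integrability of $X$ ensures that every remaining conditional expectation is finite, so $M$ is well defined. I do not expect any real obstacle beyond stating this convention.
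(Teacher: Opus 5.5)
Your proof is correct and matches the paper's argument: you write $\mathbb{E}[X]$ as the convex combination $\sum_{i} \mathbb{P}(E_i)\,\mathbb{E}[X \mid E_i]$ via Lemma~\ref{lemma:iterated_expectation_lemma} and bound it by its largest term. Your handling of probability-zero events, by restricting the sum and the maximum to indices with $\mathbb{P}(E_i) > 0$, is extra care that the paper's proof leaves implicit.
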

\begin{proof}
    Lemma \ref{lemma:expectation_less_than_max_lemma} can be considered a Corollary to Lemma \ref{lemma:iterated_expectation_lemma} as is illustrated by the following proof,
    \begin{align}
        \Exp
        \brk*{
            X
        }
        &=
        \sum_{i=1}^{n}
        \Exp
        \brk*{
            X
            \mid
            E_i
        }
        \Pr{ E_i }
        \text{ (From the proof of Lemma \ref{lemma:iterated_expectation_lemma}).}
        \\
        &\leq
        \prn*{
        \sum_{i=1}^n
        \Pr{ E_i }
        }
        \cdot
        \prn*{
        \max_{i \in [n]}
        \crl*{
        \Exp
        \brk*{
            X
            \mid
            E_i
        }
        }
        }
        \\
        &=
        \max_{i \in [n]}
        \crl*{
            \Exp
            \brk*{
                X
                \mid
                E_i
            }
        }.
    \end{align}
\end{proof}

In the analysis of the sample complexity of a sub-optimal arm $a_i \in \mathcal{A}, \, i \neq a^*$ incurred by COF (Algorithm~\ref{algo:COF}) we upper bound its expected samples $\Exp \brk*{n_i (T)}$ over horizon $T$ using Lemma~\ref{lemma:iterated_expectation_lemma} and conditioning on an event that ensures the normative progression of the algorithm. We formalize this templated approach through Lemma~\ref{lemma:bound_by_conditioning}
\begin{lemma}[Bounding expected samples through conditional expectation]
    For any event $G$ that is measurable with respect to the probability space that all reward random variables are defined on,
    \begin{align*}
        \Exp
        \brk{
        n_i (T)
        }
        &\leq
        \Exp
        \brk{
        n_i (T)
        \mid
        G
        }
        +
        T 
        \Pr{G^c}
        .
    \end{align*}
    \label{lemma:bound_by_conditioning}
\end{lemma}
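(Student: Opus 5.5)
The plan is to apply the special case of Lemma~\ref{lemma:iterated_expectation_lemma} with the two events $G$ and $G^c$, and then bound each of the two resulting terms separately. Since $G$ is measurable, $G$ and $G^c$ form a partition of $\Omega$. This gives
\begin{align*}
    \Exp\brk{n_i(T)}
    =
    \Exp\brk{n_i(T) \mid G}\Pr{G}
    +
    \Exp\brk{n_i(T) \mid G^c}\Pr{G^c}.
\end{align*}
If either event has probability zero, the corresponding term is simply dropped and that conditional expectation is never needed. This is the only degenerate case to watch.

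For the first term, $n_i(T)$ is a nonnegative count, so $\Exp\brk{n_i(T)\mid G} \ge 0$. Together with $\Pr{G}\le 1$ this gives $\Exp\brk{n_i(T)\mid G}\Pr{G} \le \Exp\brk{n_i(T)\mid G}$. For the second term, I will use the deterministic bound $0 \le n_i(T) \le T$, which holds because at most one arm is pulled per round and the loop runs for $T$ rounds. Hence $\Exp\brk{n_i(T)\mid G^c} \le T$, and the second term is at most $T\Pr{G^c}$.

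Adding the two bounds yields the claim. There is no real obstacle here. The only points needing care are the zero-probability edge case and the fact that $n_i(T)$ is integrable, which follows from it being bounded by $T$. In the later analysis, the substance moves into choosing $G$ as a good event on which all confidence bounds are correct, with $\Pr{G^c} = O(KT\delta)$. With $\delta \sim 1/T^2$ this makes $T\Pr{G^c} = O(K)$, which produces the trailing $O(1)$ terms of Theorem~\ref{thm:cof_regret_ub}.
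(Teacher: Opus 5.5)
Your proposal is correct and follows the paper's route: split $\Exp\brk{n_i(T)}$ over $G$ and $G^c$ using Lemma~\ref{lemma:iterated_expectation_lemma}, then bound the first term using $\Pr{G} \leq 1$ (together with nonnegativity of $n_i(T)$) and the second using $n_i(T) \leq T$. Your remarks on the zero-probability case and on integrability are small refinements that the paper leaves implicit.
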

\begin{proof}
    The proof follows trivially from the iterated expectation Lemma~\ref{lemma:iterated_expectation_lemma}, and the fact that the total number of samples of any arm can be at most $T$.
    \begin{align*}
        \Exp
        \brk*{ n_i (T) }
        &=
        \Exp
        \brk*{ n_i (T) \mid G} 
        \Pr{G}
        +
        \Exp
        \brk*{ n_i (T) \mid G^c}
        \Pr{G^c}
        \quad
        \text{ (Lemma~\ref{lemma:iterated_expectation_lemma})}
        \\
        &\leq
        \Exp
        \brk*{ n_i (T) \mid G} 
        +
        T
        \Pr{G^c}
        \quad
        \text{($\Pr{G} \leq 1$ and $n_i (T) \leq T$)}
        .
    \end{align*}
\end{proof}

\begin{lemma}
    For any collection of $N$ non-negative numbers $\crl{x_i}_{i = 1}^N$, $x_i \geq 0$, their root mean square is larger than their arithmetic mean,  
    \begin{align}
        \sqrt{
        \frac{\sum_{i = 1}^N x_i^2}{N}
        }
        &\geq
        \frac{\sum_{i = 1}^N x_i}{N}
    \end{align}
    \label{lemma:rms_geq_am}
\end{lemma}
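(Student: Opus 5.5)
The statement to prove is Lemma~\ref{lemma:rms_geq_am}, the root-mean-square versus arithmetic-mean inequality for non-negative reals. I would derive it from the Cauchy--Schwarz inequality applied to the vectors $(x_1, \ldots, x_N)$ and $(1, \ldots, 1)$ in $\mathbb{R}^N$. This gives
\begin{align*}
\sum_{i=1}^N x_i \cdot 1
\leq
\sqrt{\sum_{i=1}^N x_i^2}\,\sqrt{\sum_{i=1}^N 1^2}
=
\sqrt{N}\,\sqrt{\sum_{i=1}^N x_i^2}.
\end{align*}
Dividing both sides by $N>0$ yields $\frac{1}{N}\sum_i x_i \leq \sqrt{\frac{1}{N}\sum_i x_i^2}$, which is exactly the claim.

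As a self-contained alternative that avoids invoking Cauchy--Schwarz, I would set $m = \frac{1}{N}\sum_i x_i$ and expand
\begin{align*}
0 \leq \sum_{i=1}^N (x_i - m)^2 = \sum_{i=1}^N x_i^2 - N m^2 .
\end{align*}
This gives $\frac{1}{N}\sum_i x_i^2 \geq m^2$. Since every $x_i \geq 0$, we have $m \geq 0$, so taking non-negative square roots preserves the inequality and produces $\sqrt{\frac{1}{N}\sum_i x_i^2} \geq m$.

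There is no real obstacle here. The only point requiring care is the final square-root step: it is valid because $m \geq 0$, and this is where the non-negativity hypothesis enters. For arbitrary reals the inequality still holds, since the left side is non-negative, but then one would compare with $|m|$ instead.
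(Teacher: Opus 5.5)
Your proof is correct and matches the paper's: both apply Cauchy--Schwarz to $(x_1,\ldots,x_N)$ and the all-ones vector. The only cosmetic difference is that the paper works with the squared form $N\sum_i x_i^2 \geq (\sum_i x_i)^2$ and then takes square roots using $x_i \geq 0$, whereas you use the unsquared form directly; your variance-expansion alternative is also valid.
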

\begin{proof}
    The proof follows from applying the Cauchy–Schwarz inequality to the $N$ length vectors $\mathbf{x} = \brk*{x_1, x_2, \ldots, x_N}$ and $\mathbf{1} = \brk*{1, 1, \ldots, 1}$ 
    \begin{align}
        \prn*{
        \sum_{i = 1}^{N} x_i^2
        }
        \cdot
        \prn*{
        \sum_{i = 1}^{N} 1^2
        }
        &\geq
        \prn*{
        \sum_{i = 1}^N
        x_i \cdot 1
        }^2
        \quad
        \text{ (Cauchy-Schwarz inequality)}
        \\
        \implies
        N
        \sum_{i = 1}^N
        x_i^2
        &\geq
        \prn*{
        \sum_{i = 1}^{N}
        x_i
        }^2
        \\
        \implies
        \frac{\sum_{i = 1}^N
        x_i^2}{N}
        &\geq
        \frac{1}{N^2}
        \prn*{
        \sum_{i = 1}^{N}
        x_i
        }^2
        .
  \end{align}
    Since each $x_i \geq 0$, taking principal square roots on both sides establishes the stated result.
\end{proof}

\begin{lemma}[Max-min Lemma]
    For any function $f: \reals^2 \to \reals$,
    \begin{align*}
        \max_{x} \min_{y} f (x, y)
        \leq
        \min_{y} \max_{x} f (x, y)
        .
    \end{align*}
    Provided the maximum and minimum are achievable.
    \label{lemma:max_min_lemma}
\end{lemma}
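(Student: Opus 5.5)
The statement to prove is the Max-min Lemma (Lemma~\ref{lemma:max_min_lemma}): $\max_x \min_y f(x,y) \le \min_y \max_x f(x,y)$, with the extrema assumed attained. I would use the standard pointwise sandwich argument, which needs no structure on $f$ beyond the existence of the extrema.

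First, fix arbitrary $x_0$ and $y_0$ in the domain. By the definition of the minimum over $y$ and of the maximum over $x$,
\begin{align*}
\min_{y} f(x_0, y) \le f(x_0, y_0) \le \max_{x} f(x, y_0).
\end{align*}
Define $g(x) \coloneqq \min_y f(x,y)$ and $h(y) \coloneqq \max_x f(x,y)$; both are well defined real numbers by the attainment hypothesis. The display then says $g(x_0) \le h(y_0)$ for every pair $(x_0, y_0)$.

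Second, since this holds for all $x_0$ with $y_0$ held fixed, $h(y_0)$ is an upper bound for $g$. The maximum of $g$ is attained at some $x^\star$, so $\max_x g(x) = g(x^\star) \le h(y_0)$. Because $y_0$ was arbitrary, $\max_x g(x)$ is a lower bound for $h$. The minimum of $h$ is attained at some $y^\star$, which gives $\max_x g(x) \le h(y^\star) = \min_y h(y)$, the claim.

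There is no real obstacle here. The only point needing care is the role of attainment, which guarantees that $g$, $h$, $\max_x g$ and $\min_y h$ are genuine attained values rather than suprema and infima. Even without it, the identical argument with $\sup$ and $\inf$ gives the same inequality, so the hypothesis is only notational convenience.
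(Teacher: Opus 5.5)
Your proof is correct and is essentially the paper's argument: the same pointwise sandwich $\min_y f(x_0,y) \le f(x_0,y_0) \le \max_x f(x,y_0)$, followed by passing to the extrema. The only difference is the order: the paper takes the minimum over $y_0$ first and then the maximum over $x_0$, while you do the reverse, which makes no difference. Your remark that attainment is inessential is also correct, since the same argument goes through verbatim with $\sup$ and $\inf$.
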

\begin{proof}
For all $x, y \in \reals$ by the definition of $\min, \max$,
\begin{align*}
    \min_u f(x, u)
    &\leq
    f(x, y)
    \leq
    \max_t f(t, y)
    \\
    \implies
    \min_u f(x, u)
    &\leq
    \min_s \max_t f(t, s)
    \\
    \implies
    \max_v \min_u f(v, u)
    &\leq
    \min_s \max_t f(t, s)
    .
\end{align*}
Where $u, v, s, t \in \reals$ are arbitrary variables. 
\end{proof}

\begin{lemma}[Regret Decomposition Lemma, Lemma 4.5 in \citep{lattimore2020bandit}]
    For any policy $\pi$ and stochastic bandit environment $\nu$ with $K$ arms, for horizon $T$, the Expected Cumulative Regret $\textrm{Reg}_{\pi} \prn*{T, \nu}$ of policy $\pi$ in $\nu$ satisfies, 
    \begin{align*}
        \Exp
        \brk*
        {
            \text{Reg}_{\pi} \prn*{T, \nu}
        }
        &=
        \sum_{i \in [K]}
        \Delta_i
        \Exp
        \brk*{
            n_i (T)
        }.
    \end{align*}
    This result may be trivially generalized to other notions of regret where the gap determining the incremental regret due to arm $i$ is some arbitrary $\Delta_{X, i}$. In this case, the regret decomposition shall be,
    \begin{align*}
        \Exp
        \brk*{
            \text{Reg}^X_{\pi} \prn*{T, \nu}
        }
        &=
        \sum_{i \in [K]}
        \Delta_{X, i}
        \Exp
        \brk*{
            n_i (T)
        }.
    \end{align*}
    In particular in our problem we have Cost and Quality regret which are,
    \begin{align*}
        \Exp \brk*{ \costRegret(T, \nu) }
        &= 
        \sum_{i \in [K]} \Delta^{+}_{C, i}
        \Exp \brk*{ n_i \prn*{T} }
        \\
        \Exp \brk*{ \qualityRegret(T, \nu) } 
        &= \sum_{i \in [K]} \Delta^{+}_{Q, i}
        \Exp \brk*{ n_i \prn*{T} }.
\end{align*}
\label{lemma:regret_decomposition_lemma}
\end{lemma}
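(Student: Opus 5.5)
The plan is to prove the decomposition by rewriting the per-round incremental regret as a sum over arms using indicator functions, and then applying linearity of expectation. The argument does not depend on the specific gap, so I will carry it out once for a generic nonnegative gap $\Delta_{X,i}$ that depends only on the arm. Then I will specialize to $\Delta_{C,i}^+$ and $\Delta_{Q,i}^+$.

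First, starting from the definition in Equation~\ref{eq:regret_definitions}, the incremental regret at round $t$ is a deterministic function of the chosen arm $k_t$: it equals $g(k_t)$ with $g(i) = \Delta_{X,i}$. For cost regret, $g(i) = (c_i - c_{a^*})^+ = \Delta_{C,i}^+$; for quality regret, $g(i) = (\mu_{\CS} - \mu_i)^+ = \Delta_{Q,i}^+$. Since exactly one arm is pulled at each round, I will write
\begin{align*}
g(k_t) = \sum_{i \in [K]} \Delta_{X,i}\, \mathbb{1}\crl{k_t = i}.
\end{align*}
Summing over $t = 1,\ldots,T$ and exchanging the two finite sums gives
\begin{align*}
\sum_{t=1}^{T} g(k_t) = \sum_{i\in[K]} \Delta_{X,i} \sum_{t=1}^T \mathbb{1}\crl{k_t = i} = \sum_{i \in [K]} \Delta_{X,i}\, n_i(T),
\end{align*}
where $n_i(T)$ is the number of pulls of arm $i$ up to the horizon.

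Next, I will take expectations with respect to the randomness of the policy and the rewards. All quantities are bounded: each $\Delta_{X,i}$ is a finite constant and $0 \le n_i(T) \le T$. Linearity of expectation over the finite sum therefore yields $\sum_{i} \Delta_{X,i} \Exp\brk{n_i(T)}$. Matching this with $\sum_t \Exp_\pi\brk{g(k_t)}$ from the definition gives the claim. This is the same identity as in Lemma 4.5 of \citep{lattimore2020bandit}, with $\Delta_i = \mu^* - \mu_i$ replaced by $\Delta_{X,i}$.

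There is no real obstacle here. The only point requiring care is to check that the incremental regret depends on round $t$ only through $k_t$, so that the gaps are arm-dependent constants rather than random variables. This holds because $c_{a^*}$, $\mu_{\CS}$, $c_i$ and $\mu_i$ are all fixed instance parameters.
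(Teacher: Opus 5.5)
Your argument is correct and is essentially what the paper relies on. The paper gives no separate proof but defers to Lemma~4.5 of \citep{lattimore2020bandit}, whose proof is exactly your indicator rewriting $\sum_{t} g(k_t) = \sum_{i} \Delta_{X,i}\, n_i(T)$ followed by linearity of expectation. The only difference is that Lattimore and Szepesv\'ari define regret through realized rewards, so they need an extra tower-rule step, $\Exp\brk{r_t \Ind\crl{k_t = i}} = \mu_i \Exp\brk{\Ind\crl{k_t = i}}$; you correctly observe this step is unnecessary for the cost and quality regrets here, because Equation~\ref{eq:regret_definitions} already writes the per-round regret as a fixed function of $k_t$.
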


\clearpage

\section{Lower Bound Proofs}
\label{sec:lb_proofs}

This section proves the MAB-CS lower bounds stated in Section~\ref{sec:theory_lower_bounds} for a consistent policy (Definition~\ref{def:consistent_policy}) on an MAB-CS instance with arms $a_i \in \mathcal{A}$ and reward distributions $\nu = \crl{ \nu_i' }_{a_i \in \mathcal{A}}$. For each proof we introduce a perturbed instance $\nu' = \crl{ \nu_i' }_{a_i \in \mathcal{A}}$.

The optimal arm is defined as usual $a_{a^*} = \arg \min_{a_i \in \mathcal{S}} c_i$ where $\mathcal{S} = \crl*{ a_i \in \mathcal{A} \mid \mu_i \geq (1 - \alpha) \mu^* }$ is the set of feasible arms. Instance $\nu'$ shall be constructed in a manner that its optimal arm differs from the optimal arm of instance $\nu$. In particular, in each proof we will use the fundamental inequality stated in Lemma~\ref{lemma:lb_fundamental_inequality} with $Z = \frac{ \Exp \brk{n_k (T)} }{T}$ representing the sample fraction of a certain arm $a_k \in \mathcal{A}$ that is sub-optimal for $\nu$ but optimal for $\nu'$.

\begin{figure}[ht]
    \centering
\includegraphics[width=0.7\linewidth]{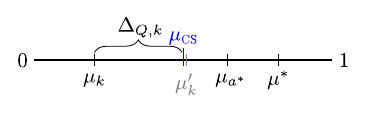}
    \caption{Cheap arms $a_k \in \mathcal{A}^-$ are infeasible as indicated by the gap $\Delta_{Q, k}$.}
    \label{fig:lb_cheap}
\end{figure}
We outline the proof of Theorem~\ref{thm:lb_low_cost} using the reward line illustration of Fig.~\ref{fig:lb_cheap}. From the original bandit instance $\nu$ with optimal arm $a^*$ that contains $a_k \in \mathcal{A}^{-}$, we construct a perturbed bandit instance $\nu'$ in which the expected reward $\mu_k'$ of $a_k$ is enhanced to be more than the feasibility threshold $\mu_{\CS}$, and all other expected rewards are held constant. Since $a_k$ is cheaper than $a^*$, the optimal arm in instance $\nu'$ shall be $a_k$. The lower bound of Theorem~\ref{thm:lb_low_cost} stems from a policy solving both instances $\nu$ and $\nu'$. No amount of reward enhancement can make an expensive arm optimal so their sample lower bound does not follow Theorem~\ref{thm:lb_low_cost}.

\begin{theorem}[Lower bound for samples of cheap arms]
    For a bandit instance $\nu$, over horizon $T$ for all consistent policies the expected number of samples of a cheap arm are lower bounded as,
    \begin{align}
        \liminf_{T \to \infty}
        \frac{\Exp_{\nu} \brk{n_i (T)} }{\log T}
        &\geq
        \frac{1}{D_{\inf} \prn*{\nu_i, (1 - \alpha)\mu^*} }
        \,\,
        \forall
        \,
        a_i \in \mathcal{A}^{-},
    \end{align}
    where $D_{\inf}$ is from Definition~\ref{def:D_inf_definition}, and $\mathcal{A}^-$ are the set of cheap arms with cost-ordered index less than $a^*$. 
    \label{thm:lb_low_cost_dinf}    
\end{theorem}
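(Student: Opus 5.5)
We follow the standard change-of-measure argument built on Lemma~\ref{lemma:lb_fundamental_inequality}. Fix a cheap arm $a_i \in \mathcal{A}^{-}$, so $i < a^*$ and $\mu_i < (1-\alpha)\mu^*$. For an arbitrary $\eta > 0$ we pick, by the definition of $D_{\inf}$ (Definition~\ref{def:D_inf_definition}), a distribution $\nu_i' \in \mathcal{M}$ with $\Exp[\nu_i'] > (1-\alpha)\mu^*$ and $\KL(\nu_i,\nu_i') \leq D_{\inf}(\nu_i,(1-\alpha)\mu^*) + \eta$. The perturbed instance $\nu'$ replaces $\nu_i$ by $\nu_i'$ and leaves every other arm unchanged.

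The first step is to check that $a_i$ is optimal in $\nu'$. This needs care, because raising $\mu_i$ could raise $\mu^*$ and with it the threshold. We use the freedom in choosing $\nu_i'$ to take $\Exp[\nu_i']$ only slightly above $(1-\alpha)\mu^*$. Since $\alpha \in (0,1)$, this value is strictly below $\mu^*$, so the best reward in $\nu'$ is still $\mu^*$, carried by the unperturbed arm $i^*$. The threshold is therefore unchanged, $a_i$ becomes feasible, and because it is cheaper than $a^*$ every arm with index below $i$ keeps its mean and stays infeasible. Hence $a_i$ is the cheapest feasible arm of $\nu'$, while $a^*$ is sub-optimal there.

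Restricting to means just above the threshold is harmless in the infimum. For the Gaussian family (Lemma~\ref{lemma:d_inf_explicit_formula}), $\KL(\nu_i,\nu_i')$ is increasing in the perturbed mean above $\mu_i$, so the near-threshold choice is also near-optimal. For general families we simply define the infimum over means in $((1-\alpha)\mu^*, \mu^*)$. This is the point we expect to be the main technical obstacle to state cleanly. If ties in cost with $a^*$ arise, we break them by the paper's index ordering.

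Next we apply Lemma~\ref{lemma:lb_fundamental_inequality} with $Z = n_i(T)/T \in [0,1]$. Only arm $i$ differs between the two instances, so the left-hand side reduces to $\Exp_\nu[n_i(T)]\,\KL(\nu_i,\nu_i')$. For the right-hand side we use Lemma~\ref{lemma:lb_on_bernoulli_kl} in its standard direction, $\kl(p,q) \geq (1-p)\log\frac{1}{1-q} - \log 2$, with $p = \Exp_\nu[Z]$ and $q = \Exp_{\nu'}[Z]$. Consistency (Definition~\ref{def:consistent_policy}) in $\nu$, where $a_i$ is sub-optimal, gives $p = o(T^{\gamma-1}) \to 0$. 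Consistency in $\nu'$ applied to all arms other than $a_i$ gives $1 - q = \sum_{j \neq i}\Exp_{\nu'}[n_j(T)]/T = o(T^{\gamma-1})$. Hence $\log\frac{1}{1-q} \geq (1-\gamma)\log T - o(\log T)$.

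Dividing by $\log T$ and taking $\liminf$ yields
\[
\liminf_{T\to\infty} \frac{\Exp_\nu[n_i(T)]}{\log T} \geq \frac{1-\gamma}{D_{\inf}(\nu_i,(1-\alpha)\mu^*) + \eta}.
\]
Letting $\gamma \to 0$ and $\eta \to 0$ gives the claimed bound. Specializing to unit-variance Gaussians through Lemma~\ref{lemma:d_inf_explicit_formula} recovers Theorem~\ref{thm:lb_low_cost}, since $D_{\inf} = \Delta_{Q,i}^2/2$.
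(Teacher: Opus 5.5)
Your argument is the paper's proof. You perturb only $\nu_i$ and apply Lemma~\ref{lemma:lb_fundamental_inequality} with $Z=n_i(T)/T$. You lower bound the Bernoulli divergence as $\mathrm{kl}(p,q)\ge (1-p)\log\frac{1}{1-q}-\log 2$; this is the correct direction, and the $\leq$ in the paper's lemma statement is a typo. You then use consistency in both instances and optimize over $\nu_i'$. Your $\eta$-near-minimizer is equivalent to the paper's ``the bound holds for every admissible $\nu_i'$, so take the tightest.''

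The one thing to fix is the step you flag as the main obstacle, because your workaround makes the result weaker for general families. You do not need to keep the perturbed mean below $\mu^*$. For any $\nu_i'$ with mean $\mu_i' > (1-\alpha)\mu^*$, arm $a_i$ is the cheapest feasible arm of $\nu'$:
\begin{itemize}
\item If $\mu_i' \geq \mu^*$, the new threshold is $(1-\alpha)\mu_i' < \mu_i'$, so $a_i$ is feasible.
\item The threshold has only gone up, so every $a_j$ with $j<i$ stays infeasible. These arms satisfy $\mu_j<(1-\alpha)\mu^*$ because $j<a^*$.
\end{itemize}

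Defining the infimum over means in $((1-\alpha)\mu^*,\mu^*)$ instead gives a quantity $\tilde D \geq D_{\inf}(\nu_i,(1-\alpha)\mu^*)$. The resulting bound $1/\tilde D \le 1/D_{\inf}$ is then weaker than the theorem for a general family $\mathcal{M}$. The two coincide for Gaussians only because the KL divergence is monotone in the mean there. Drop the restriction and take the infimum over the full set in Definition~\ref{def:D_inf_definition}; the proof then goes through. This also supplies the check behind the optimality claim that the paper's proof asserts without justification.
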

\begin{proof}
    Given bandit instance $\nu$, and cheap arm $a_i \in \mathcal{A}^-$ we construct instance $\nu'$ such that $\nu'_k = \nu_k \, \forall \, a_k \in \mathcal{A}, \, a_k \neq a_i$, and $\nu'_i \in \mathcal{M}$ is such that $\Exp \brk{ \nu_i' } = \mu_i \geq (1 - \alpha) \mu^*$. If $a^*$ denotes the index of the optimal arm for $\nu$, $c_i < c_{a^*}$, and arm $a_i$ is optimal for $\nu'$ since expected reward of $a_i$ is feasible and it is cheaper than $a^*$.

    Theorem~\ref{thm:lb_low_cost_dinf} can be shown by applying Lemma~\ref{lemma:lb_fundamental_inequality} to the instance pair $\nu, \nu'$ with $[0, 1]$ bounded random variable $Z = \dfrac{ \brk{ n_i (T) } }{ T }$ denoting the sample fraction of arm $a_i$ under any consistent policy.
    \begin{align}
        \sum_{a_k \in \mathcal{A}}
        \Exp_{\nu} \brk{ n_k (T) }
        \KL \prn*{ \nu_k, \nu_k' }
        &\geq
        \kl 
        \prn*{ \frac{ \Exp_{\nu} \brk*{ n_i (T) } }{T}, \frac{ \Exp_{\nu'} \brk*{ n_i (T) } }{T} }
        \nonumber
        \\
        \implies
        \Exp_{\nu} \brk{ n_i (T) } 
        \KL \prn*{ \nu_i, \nu_i' }
        &\geq
        \prn*{ 1 - \frac{ \Exp_{\nu} \brk*{n_i (T)} }{T} }
        \log 
        \prn*{ \frac{T}{ T - \Exp_{\nu'} \brk{ n_i (T) } } }
        - \log 2
        .
        \label{eq:initial_lb}
    \end{align}
    Where Equation~$\ref{eq:initial_lb}$ follows from instance $\nu$ and $\nu'$ being identical outside of arm $a_i$, and from the lower bound on the KL divergence between Bernoulli random variables (Lemma~\ref{lemma:lb_on_bernoulli_kl}). 

    For a consistent policy $\Exp_{\nu} \brk*{ n_i (T) } = o \prn{ T^\gamma } $, and $\Exp_{\nu'} \brk*{ n_i (T) } = T - o \prn{ T^\gamma } $, since total samples are $T$ and $a_i$ is optimal for $\nu'$. So,
    \begin{align*}
        \liminf_{ T \to \infty }
        \frac{1}{\log T}
        \prn*{ 1 - \frac{ \Exp_{\nu} \brk*{n_i (T)} }{T} }
        \log 
        \prn*{ \frac{T}{ T - \Exp_{\nu'} \brk{ n_i (T) } } }
        &\geq 
        1 - \gamma, 
        \, \forall \,
        \gamma \in (0, 1].
    \end{align*}
    Integrating bound in Equation~\ref{eq:initial_lb} with the limiting bound on its right hand side,
    \begin{align*}
        &\liminf_{T \to \infty}
        \frac{ \Exp \brk*{ n_i (T) } \KL \prn{\nu_i, \nu_i'} }{ \log T }
        \geq
        1
        \\
        \implies
        &\liminf_{T \to \infty}
        \frac{ \Exp_{\nu} \brk*{ n_i (T) } }{ \log T }
        \geq
        \frac{1}{ \KL \prn{ \nu_i, \nu_i' } }
        \,\, \forall \,
        \nu_i' \in \mathcal{M} \text{ such that } \Exp \brk{\nu_i'} = \mu_i \geq (1 - \alpha) \mu^*. 
    \end{align*}
    Therefore the tightest limiting lower bound is the one stated in Theorem~\ref{thm:lb_low_cost_dinf}.
\end{proof}
\begin{proof}[Proof of Theorem~\ref{thm:lb_low_cost}]
    The proof of Theorem~\ref{thm:lb_low_cost} follows in a straight forward manner from imposing the restriction that the distribution family $\mathcal{M}$ is all Gaussian distributions with unit variance on Theorem~\ref{thm:lb_low_cost_dinf}. Under the stated restriction on $\mathcal{M}$, $D_{\inf} \prn{ \nu_i, (1 - \alpha)\mu^* } = \frac{\prn{(1 - \alpha)\mu^* - \mu_i}^2}{2} = \Delta_{Q, i}^2 / 2 \,\, \forall \, a_i \in \mathcal{A}^-$. Where the explicit formula for $D_{\inf}$ is from Lemma~\ref{lemma:d_inf_explicit_formula}.
\end{proof}

\begin{theorem}[Lower bound for samples of expensive arms]
    For a bandit instance $\nu$, over horizon $T$ for all consistent policies the expected number of samples of an expensive arm are lower bounded as,
    \begin{align}
        \liminf_{T \to \infty}
        \frac{ \Exp_{\nu} \brk{n_i (T)} }{ \log T }
        &\geq
        \frac{1}{ D_{\inf} \prn*{ \nu_i, \frac{\mu_{a^*}}{1 - \alpha} } }
        \,\,
        \forall
        \,
        a_i \in \mathcal{A}^+,
    \end{align}
    where $D_{\inf}$ is from Definition~\ref{def:D_inf_definition}, and $\mathcal{A}^+$ are expensive arms with cost-ordered index more than $a^*$.
    \label{thm:lb_high_cost_dinf}    
\end{theorem}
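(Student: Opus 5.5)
We mirror the proof of Theorem~\ref{thm:lb_low_cost_dinf}, changing only the perturbed instance. Fix an expensive arm $a_i \in \mathcal{A}^+$, so $c_i > c_{a^*}$ (strictly, by Definition~\ref{def:cheap_and_expensive_arms}). Build $\nu'$ by setting $\nu'_k = \nu_k$ for every $a_k \neq a_i$, and choose $\nu'_i \in \mathcal{M}$ with $\Exp\brk{\nu'_i} > \mu_{a^*}/(1-\alpha)$.

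First we check the optimal arm of $\nu'$. The new best reward is $\mu'^* \geq \mu'_i > \mu_{a^*}/(1-\alpha)$, so $(1-\alpha)\mu'^* > \mu_{a^*}$, and $a^*$ is infeasible in $\nu'$. Every arm cheaper than $a^*$ was already infeasible in $\nu$, since $\mu_k < (1-\alpha)\mu^*$. The threshold can only rise, because $\mu'^* \geq \mu^*$, and those arms' rewards are unchanged. So they remain infeasible in $\nu'$. The feasible set of $\nu'$ is therefore contained in $\mathcal{A}^+$, and it is nonempty because it contains the arm attaining $\mu'^*$.

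The optimal arm of $\nu'$ need not be $a_i$, since a cheaper expensive arm could still be feasible. So we do not take $Z = n_i(T)/T$. Instead we take $Z = n_{a^*}(T)/T$, the sample fraction of $a^*$. Under consistency:
\begin{itemize}
\item In $\nu$, $a^*$ is optimal, so $\Exp_\nu\brk{n_{a^*}(T)} = T - o(T^\gamma)$.
\item In $\nu'$, $a^*$ is suboptimal, so $\Exp_{\nu'}\brk{n_{a^*}(T)} = o(T^\gamma)$.
\end{itemize}
Apply Lemma~\ref{lemma:lb_fundamental_inequality}. Only arm $a_i$ differs between the instances, so the left side collapses to $\Exp_\nu\brk{n_i(T)}\KL(\nu_i,\nu'_i)$. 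Now use Lemma~\ref{lemma:lb_on_bernoulli_kl} with the roles of $p$ and $q$ swapped. This is equivalent to applying it to $1-Z$, and $\kl(p,q) = \kl(1-p,1-q)$, so no generality is lost. With $p' = 1 - \Exp_\nu\brk{Z}$ and $q' = 1 - \Exp_{\nu'}\brk{Z}$, we get
\begin{align*}
\Exp_\nu\brk{n_i(T)}\KL(\nu_i,\nu'_i) \geq \prn*{\frac{\Exp_\nu\brk{n_{a^*}(T)}}{T}}\log\frac{T}{\Exp_{\nu'}\brk{n_{a^*}(T)}} - \log 2 .
\end{align*}

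The main obstacle is this limiting step: controlling the right-hand side with the consistency rates under the swapped roles, and keeping the signs correct. The prefactor $\Exp_\nu\brk{n_{a^*}(T)}/T$ tends to $1$. Since $\Exp_{\nu'}\brk{n_{a^*}(T)} = o(T^\gamma)$, it is at most $T^\gamma$ for large $T$, so the logarithm is at least $(1-\gamma)\log T$. Dividing by $\log T$, taking $\liminf$, and then letting $\gamma \to 0$ gives
\begin{align*}
\liminf_{T\to\infty} \frac{\Exp_\nu\brk{n_i(T)}}{\log T} \geq \frac{1}{\KL(\nu_i,\nu'_i)}
\end{align*}
for every admissible $\nu'_i$.

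Finally, take the infimum over $\nu'_i \in \mathcal{M}$ with $\Exp\brk{\nu'_i} > \mu_{a^*}/(1-\alpha)$. By Definition~\ref{def:D_inf_definition} this infimum of $\KL(\nu_i,\nu'_i)$ is exactly $D_{\inf}\prn*{\nu_i, \mu_{a^*}/(1-\alpha)}$, which yields the statement.

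If $\mu_i$ already exceeds $\mu_{a^*}/(1-\alpha)$, then $a^*$ would be infeasible in $\nu$ itself, a contradiction. So $\mu_i \leq \mu_{a^*}/(1-\alpha)$, and $D_{\inf}$ is used in its intended regime $\mu_i < x$. In the boundary case of equality, $D_{\inf} = 0$ and the bound holds trivially.
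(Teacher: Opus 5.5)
Your proof is correct and follows the paper's argument: perturb only $\nu_i$ so its mean exceeds $\mu_{a^*}/(1-\alpha)$, apply the fundamental inequality to a sample-fraction statistic, use consistency to get $(1-\gamma)\log T$ growth, and take the infimum to reach $D_{\inf}$. The only variations are minor. You test with $Z = n_{a^*}(T)/T$ and the mirrored kl bound, whereas the paper uses the sample fraction of whichever arm is optimal in $\nu'$; the two choices are interchangeable. Also, your boundary remark is misstated: when $D_{\inf}=0$ the right-hand side is $+\infty$, so the bound is not trivial there, but your main argument already yields it because $\sup 1/\KL(\nu_i,\nu_i') = +\infty$ over admissible $\nu_i'$.
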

\begin{proof}
    Along the lines of the proof of Theorem~\ref{thm:lb_low_cost_dinf}, to prove a lower bound on the expected samples accrued of expensive arm $a_i \in \mathcal{A}^+$ we construct $\nu'$ such that $\nu'_k = \nu_k \, \forall \, a_k \in \mathcal{A}, \, a_k \neq a_i$, and $\nu'_i \in \mathcal{M}$ is such that $\Exp \brk{ \nu_i' } = \mu_i > \frac{\mu_{a^*}}{1 - \alpha} \geq \mu^*$. Since arm $a_i$ is more expensive than arm $a_{a^*}$, the only mechanism by which a perturbation to $\nu_i$ can upend the optimality of $a_{a^*}$ is by raising the feasibility threshold $\mu_{\CS}$ so that $a_{a^*}$ is infeasible in instance $\nu'$. 
    
    Let arm $a_k$ denote the arm that is sub-optimal for $\nu$ but optimal for $\nu'$ and random variable $Z = \frac{ n_k (T)}{ T }, \, Z \in [0, \, 1]$ be the sample fraction of $a_k$ by any policy. Using the fundamental inequality of Lemma~\ref{lemma:lb_fundamental_inequality},
    \begin{align*}
        \Exp \brk*{ n_i (T) }
        \KL \prn{\nu_i, \nu_i'}
        &\geq
        \kl
        \prn*{ 
        \frac{ \Exp_{\nu} \brk*{ n_k (T) } }{ T },
        \frac{ \Exp_{\nu'} \brk*{ n_k (T) } }{ T }
        }
        .
    \end{align*}
    Similar to the proof of Theorem~\ref{thm:lb_low_cost_dinf}, due to the consistency requirement, and the same policy operating on $\nu$ where $a_k$ is sub-optimal, and on $\nu'$ where $a_k$ is optimal, the following holds,
    \begin{align*}
        \liminf_{ T \to \infty }
        \frac
        { \kl
        \prn*{ 
        \frac{ \Exp_{\nu} \brk*{ n_k (T) } }{ T },
        \frac{ \Exp_{\nu'} \brk*{ n_k (T) } }{ T }
        } }
        { \log T }
        &\geq
        1 - \gamma
        \quad \forall \,
        \gamma \in (0, 1]
        .
    \end{align*}
    Since the KL divergence $\KL \prn{ \nu_i, \nu_i' }$ does not depend on $T$, the initial lower bound can be combined with the bound on $\kl \prn{ \Exp_{\nu} \brk{Z}, \Exp_{\nu'} \brk{Z} }$ as, 
    \begin{align}
        \liminf_{T \to \infty}
        \frac{\Exp_{\nu} \brk*{ n_i (T) } }{\log T}
        &\geq
        \frac{1}{ \KL \prn*{ \nu_i, \nu_i' } }
        \,\, \forall \, 
        \nu_i' \in \mathcal{M} 
        \text{ such that }
        \Exp \brk{\nu_i'} = \mu_i' > \frac{\mu_{a^*}}{1 - \alpha},
    \end{align}
    Theorem~\ref{thm:lb_high_cost_dinf} is then just the tightest lower bound on the left hand side quantity.
\end{proof}
\begin{proof}[Proof of Theorem~\ref{thm:lb_high_cost}]
    Along the lines of the proof of Theorem~\ref{thm:lb_low_cost}, the proof of Theorem~\ref{thm:lb_high_cost} follows from imposing the distribution family $\mathcal{M}$ being Gaussian with unit variance onto Theorem~\ref{thm:lb_high_cost_dinf}. Under this restriction, $D_{\inf} \prn{ \nu_i, \frac{\mu_{a^*}}{1 - \alpha} } = \dfrac{\prn{ \mu_{a^*} - (1 - \alpha) \mu_i }^2}{2 (1 - \alpha)^2} \,\, \forall \, a_i \in \mathcal{A}^+$.
\end{proof}

\begin{proof}[Proof of Theorem~\ref{thm:lb_joint}]
    The joint lower bound is based on the insight that the optimality of $a_{a^*}$ may be upended by a cheap arm $a_k \in \mathcal{A}^-$ not only by enhancing the reward of said cheap arm to $(1 - \alpha) \mu^*$, but also by diminishing the reward of any arm $a_i \in \mathcal{A}$ whose reward is more than $\frac{\mu_i}{1 - \alpha}$. Let $\mathcal{A}^{k} \coloneqq \crl*{ a_i \in \mathcal{A} \mid (1 - \alpha) \mu_i > \mu_k }$ denote the set of all arms standing in the way of cheap arm $a_k$ being optimal. We construct $\nu'$ such that $\nu'_i = \nu_i \, \forall \, a_i \notin \mathcal{A}^k$, and $\nu'_i \in \mathcal{M}$ is such that $\Exp \brk{ \nu_i' } = \mu_i' \leq \frac{\mu_{k}}{1 - \alpha} \, \forall \, a_i \in \mathcal{A}^k$.

    We start by applying the fundamental inequality of Lemma~\ref{lemma:lb_fundamental_inequality} to the instance pair $\nu, \nu'$ with random variable $Z = \frac{\Exp \brk{n_k (T)}}{T}$. At the outset, we note that arm $a_k$ is sub-optimal for instance $\nu$, but optimal for $\nu'$ allowing us to leverage the consistency of the policy in a manner identical to the proofs of Theorems~\ref{thm:lb_low_cost_dinf}, and \ref{thm:lb_high_cost_dinf}. 
    \begin{align}
        \sum_{ a_i \in \mathcal{A}^{k} }
        \Exp \brk*{ n_k (T) } 
        \KL \prn*{ \nu_i, \nu_i' }
        &\geq
        \kl 
        \prn*{ 
        \frac{ \Exp_{\nu} \brk*{n_k (T)} }{ T }, 
        \frac{ \Exp_{\nu'} \brk*{n_k (T)} }{ T } 
        }
        \nonumber
        \\
        \implies
        \liminf_{T \to \infty}
        \frac
        {\displaystyle \sum_{ a_i \in \mathcal{A}^{k} }
        \Exp \brk*{ n_k (T) } 
        \KL \prn*{ \nu_i, \nu_i' }}
        {\log T}
        &\geq
        1
        \,\, \forall \, 
        \nu_i' \in \mathcal{M} 
        \text{ such that }
        \Exp \brk{\nu_i'} = \mu_i' \leq \frac{\mu_{k}}{1 - \alpha},
        \text{ for each } a_i \in \mathcal{A}^k
        \nonumber
        \\
        \implies
        \liminf_{T \to \infty}
        \frac
        {\displaystyle \sum_{ a_i \in \mathcal{A}^{k} }
        \Exp \brk*{ n_k (T) } 
        \tilde{D}_{\inf} \prn*{ \nu_i, \frac{\mu_{k}}{1 - \alpha} }
        }
        {\log T}
        &\geq
        1
        \quad
        \text{(using Definition~\ref{def:D_inf_definition} for $\tilde{D}_{\inf}$ to get the tightest lb)}
        .
        \nonumber
    \end{align}
    Once the restriction that the reward distribution family $\mathcal{M}$ is all Gaussian distributions with unit variance is imposed, we can leverage Lemma~\ref{lemma:d_inf_explicit_formula} to replace $\tilde{D}_{\inf} \prn*{ \nu_i, \frac{\mu_{k}}{1 - \alpha} }$ with $\dfrac{ \prn*{ (1 - \alpha)\mu_i - \mu_k }^2 }{2 (1 - \alpha)^2}$ to write,
    \begin{align*}
        \liminf_{T \to \infty}
        \frac
        {
        \displaystyle \sum_{ a_i \in \mathcal{A}^{k} }
        \prn*{ (1 - \alpha)\mu_i - \mu_k }^2
        \Exp \brk*{ n_k (T) } 
        }
        {\log T}
        &\geq
        2 (1 - \alpha)^2
        .
    \end{align*}
    The result above represents a collection of $\abs{ \mathcal{A}^- } = a^* - 1$ joint lower bounds, one for each cheap arm $a_k \in \mathcal{A}^{-}$. In Section~\ref{sec:theory_lower_bounds} we defined $a_{\dagger} = \arg \max_{a_i \in \mathcal{A}^- } \mu_i$ as the best reward cheap arm. This means that $\mathcal{A}^{\dagger} \subseteq \mathcal{A}^{k} \, \forall \, a_k \in \mathcal{A}^-$. Moreover, $ \prn*{ (1 - \alpha)\mu_i - \mu_{\dagger} }^2 \leq \prn*{ (1 - \alpha)\mu_i - \mu_k }^2 \, \forall \, a_k \in \mathcal{A}^-$. These two facts put together imply that the joint lower bound stated in Theorem~\ref{thm:lb_joint} is simply the tightest among these $\abs{\mathcal{A}^-}$ overlapping lower bounds.
\end{proof}

\begin{remark}[Joint lower bound is a generalization of \citep{juneja2025pairwise}]
    As stated in Section~\ref{sec:theory_lower_bounds} while we come to the same conclusion about the lower bound for cheap arms and the individual lower bound for expensive arms as \citep{juneja2025pairwise}, we highlight here how our Theorem~\ref{thm:lb_joint} is a strict generalization of the lower bound on $\Exp \brk*{n_{i^*} (T)}$ shown in \citep{juneja2025pairwise}. In \citep{juneja2025pairwise} the lower bound based on reduction of reward for $\Exp \brk*{n_{i^*} (T)}$ is,
    \begin{align}
        \liminf_{T \to \infty}
        \frac
        { \Exp \brk*{ n_{i^*} (T) } }
        {\log T}
        &\geq
        \max_{i < a^*}
        \frac
        {2 (1 - \alpha)^2}
        {
        \Delta_{Q, i}^2
        }
        \text{ if } 
        \mu_{\dagger} \geq (1 - \alpha) \mu_{(2)},
        \label{eq:iclr_lower_bound}
    \end{align}
    where $\mu_{(2)} \coloneqq \max_{i \neq i^*} \mu_i$ is the second largest reward. The bound in Equation~\ref{eq:iclr_lower_bound} is a special case of Theorem~\ref{thm:lb_joint}. When $\mu_{\dagger} \geq (1 - \alpha) \mu_{(2)}$, only the best reward arm $a_{i^*}$ stands in the way of cheap arm $a_{\dagger}$ being optimal and Theorem~\ref{thm:lb_joint} reduces to,
    \begin{align*}
        \liminf_{T \to \infty}
        \frac
        {
        \prn*{ (1 - \alpha)\mu^{*} - \mu_k }^2
        \Exp \brk*{ n_k (T) } 
        }
        {\log T}
        &\geq
        2 (1 - \alpha)^2
        \\
        \implies
        \liminf_{T \to \infty}
        \frac
        {
        \Exp \brk*{ n_k (T) } 
        }
        {\log T}
        &\geq
        \frac{ 2 (1 - \alpha)^2 }{ \Delta_{Q, \dagger}^2 }
        \\
        &=
        \max_{i < a^*}
        \frac{ 2 (1 - \alpha)^2 }{ \Delta_{Q, i}^2 },
    \end{align*}
    since $\mu_{\dagger} = \max_{a_i \in \mathcal{A}^-} \mu_i $.
\end{remark}

\subsubsection*{Example to demonstrate room for improvement in lower bound}

The following example illustrated how there is room for improvement in the lower bound on the expected number of samples of an expensive arm beyond the results proved in this paper. Using notation from the paper consider a four armed MAB-CS instance with arms $\mathcal{A} = \crl*{a_{\dagger}, a^*, a_k, i^*}$, and only best reward arm $i^*$ being capable of deeming $a_{\dagger}$ infeasible. The lower bound on the expected samples required from arbitrary $a_k$ can be tightened. Currently there is no dependence on the lower bound of $\Exp \brk*{n_k (T)}$ on the gap $\Delta_k = \mu^* - \mu_k$ even though in practice resolving $\Delta_k$ is necessary for directing sampling from $a_k$ towards $i^*$ to deem $a_{\dagger}$ infeasible.

\clearpage

\section{Analysis for COF}
\label{sec:cof_ub}

COF is carefully designed such that deviations from its intended execution have low probability. COF evaluates an arm $a_{\ell}$ as either feasible or infeasible by comparing it against all the arms more expensive than itself. First the feasibility criteria is checked while constructing the set of gating arms $\mathcal{G}_\ell$ (Line 3, Algorithm~\ref{algo:COF}), then the infeasibility criteria is checked (Lines 7-10, Algorithm~\ref{algo:COF}). If $a_{\ell}$ is found to be feasible then we sample it until the $T$ slots run out. In contrast, once $a_{\ell}$ is deemed infeasible, we move on to evaluating the next cheapest $a_{\ell + 1}$. If a decision about $a_{\ell}$ cannot yet be made with sufficient confidence, we further sample $a_{\ell}$ and arms in $\mathcal{G}_{\ell}$ (Lines 12-15, Algorithm~\ref{algo:COF}). 

Sampling of $a_\ell$ and arms $a_i \in \mathcal{G}_\ell$ is uniform barring two considerations. First, if the samples of $a_{\ell}$ lag behind those of arms in $\mathcal{G}_{\ell}$ then we exclusively sample $a_{\ell}$ until the disparity is amended. Second, if the $\UCB$ of an arm in $\mathcal{G}_{\ell}$ falls below the largest $\LCB$ among arms in $\mathcal{G}_{\ell}$ then its sampling is foregone in favor of arms that are better poised to gauge the feasibility of $a_{\ell}$. We refer to the former feature as {\em exclusive sampling} and the latter as the {\em BAI-filter} on sampling. In this section we build up to the bounds on cost and quality regret for COF (Algorithm~\ref{algo:COF}) that were stated in Theorem~\ref{thm:cof_regret_ub} by separately bounding the expected samples of cheap arms $\mathcal{A}^-$ and expensive arms $\mathcal{A}^{+}$. 

COF operates in an episodic fashion where the $\ell^{\text{th}}$ episode is responsible for evaluating arm $a_{\ell}$. The premise for analysis is that any arm $a_i$ can only be sampled in episodes $1, 2, \ldots, i$. In episodes $1, 2, \ldots, i - 1$ arm $a_i$ may be sampled as part of $\mathcal{G}_{\ell}$ and in episode $i$ as the candidate feasible arm. Analyzing $\Exp \brk*{n_i (T)}$ for a sub-optimal arm $i \neq a^*$ involves conditioning its number of samples on a carefully constructed event that secures normative sampling progression for the arm across episodes.    

\begin{definition}[The event $E_{i, \ell}$]
    $E_{i, \ell}$ denotes the event that the terminal sample for arm $i$ was sampled during episode $i$. To bound the samples of any sub-optimal arm $a_i \in \mathcal{A}$ we leverage the fact the collection of events $\crl*{ E_{i, \ell} }$ are mutually exclusive and exhaustive.
    \label{def:E_i_ell}
\end{definition}

\subsection{Bound Samples of Cheap Arms}
\label{sec:cheap_arms}
\begin{lemma}[Bound on number of samples of a cheap arm under COF]
    When the error tolerance is $\delta = T^{-2}$, the expected number of samples of arm $a_i \in \mathcal{A}^{-}$ by COF over horizon $T$ is upper bounded as,
    \begin{align*}
        \Exp 
        \brk{
            n_{i} (T)
        }
        &\leq
        \frac{16 \log T}{ \Delta_{Q, i}^2 } 
        +
        2
        .
    \end{align*}
    \label{lemma:cheap_arms_ub}    
\end{lemma}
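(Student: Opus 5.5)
The plan is to apply Lemma~\ref{lemma:bound_by_conditioning} with a "good" event $G$ under which every confidence interval that COF ever computes is correct. Concretely, let $G$ be the event that for every arm $a_k \in \mathcal{A}$ and every sample count $n \in \{1,\ldots,T\}$, $\LCB_k \le \mu_k \le \UCB_k$ holds after the $n$-th sample of $a_k$. By Lemma~\ref{lemma:ucb_lcb_incorrect} and a union bound over at most $KT$ (arm, count) pairs and both sides, $\Pr{G^c} \le 2KT\delta = 2K/T$ when $\delta = T^{-2}$. This makes $T\Pr{G^c}$ a constant; to land exactly at $+2$ I would either restrict the union bound to the pair $\{a_i, i^*\}$ (the only arms the argument actually uses), giving $4T \cdot T^{-2}$ type terms, or accept a $K$-dependent constant and absorb it. Pinning the constant is secondary.

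On $G$ the argument runs as follows. First, a cheap arm $a_i$ is never wrongly deemed feasible. If $\mathcal{G}_\ell = \emptyset$ at $\ell = i$, then $(1-\alpha)\UCB_{i^*} < \LCB_i$, which on $G$ gives $(1-\alpha)\mu^* < \mu_i$. This contradicts $a_i \in \mathcal{A}^-$ (note $i^* \neq i$, and $i^*$ is more expensive than any infeasible arm's episode index when relevant, or else $i^*$ was deemed infeasible, which is impossible on $G$).

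Second, I bound the samples of $a_i$. These come from earlier episodes as a gating arm and from episode $i$ as candidate. Exclusive sampling and uniform sampling keep $n_i$ at most one more than the gating count, so $n_i$ at the end of episode $i$ is governed by when elimination fires.

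The key step is to show that elimination happens once $n_i \ge 16\log T/\Delta_{Q,i}^2$. For the comparison against $i^*$ alone, I use the fact that $\epsilon_{i^*,i} \le \delta$ (hence the product $\le \delta$, since every other $\epsilon \le 1$) as soon as $(1-\alpha)\LCB_{i^*} > \UCB_i$. I will take Expression~\ref{eq:eps_def} to be monotone in this way. On $G$ this holds when $2\beta_i + 2(1-\alpha)\beta_{i^*} < \Delta_{Q,i}$. As long as $i^*$ is in $\mathcal{G}_i$ and unfiltered (it is never filtered on $G$, since its $\UCB$ dominates every $\LCB$), we have $n_{i^*} \ge n_i$. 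It therefore suffices that $4\beta_i < \Delta_{Q,i}$, i.e. $n_i > \frac{8\log(1/\delta)}{\Delta_{Q,i}^2} = \frac{16\log T}{\Delta_{Q,i}^2}$.

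The main obstacle is the case where $i^*$ is cheaper than $a_i$, or where $i^*$ has already left $\mathcal{G}_i$. Then I need another arm, or the product over several arms, to certify elimination. I would handle this by noting that on $G$, $i^* \notin \mathcal{G}_i$ would require $(1-\alpha)\UCB_{i^*} < \LCB_i$, which is impossible for an infeasible $a_i$, so $i^* \in \mathcal{G}_i$ whenever $i^* > i$. If $i^* < i$, then $i^*$ would have been deemed feasible in its own episode on $G$, and episode $i$ is never reached. That would leave only the samples of $a_i$ taken as a gating arm in earlier episodes, which are bounded by the same count via the BAI-filter, whose gap is larger. Adding the $+1$ overshoot from the final sample gives the stated $\frac{16\log T}{\Delta_{Q,i}^2} + 2$.
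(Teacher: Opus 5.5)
Your argument is the paper's. You condition on a good event and certify elimination of $a_i$ through $i^*$ alone, via $(1-\alpha)\LCB_{i^*} > \UCB_i \Rightarrow \epsilon_{i^*,i} < \delta$. On the good event you reduce this to $4\beta_i < \Delta_{Q,i}$ using $n_{i^*} \ge n_i$. Samples from earlier episodes are bounded by the BAI-filter, whose gap $\Delta_i = \mu^* - \mu_i$ exceeds $\Delta_{Q,i}$.

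Two of your choices are more careful than the paper's. First, you check explicitly that $a_i$ is never deemed feasible. Second, your event covers all arms. The paper's event controls only $a_i$ and $i^*$, which cannot stop another gating arm with an inflated $\LCB$ from BAI-filtering $i^*$, and $n_{i^*} \ge n_i$ needs $i^*$ sampled in every non-exclusive round. Make that step explicit by induction over episodes:
on your $G$, $i^*$ lies in every $\mathcal{G}_\ell$ with $\ell \le i$ and is never filtered;
a non-candidate arm is sampled only in rounds where $i^*$ is also sampled, so $n_j \le n_{i^*}$ for all $j$;
exclusive sampling therefore raises $n_i$ only to $\max_{j \in \mathcal{G}_i} n_j = n_{i^*}$, after which the two advance in lock-step.

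What your proposal does not deliver is the constant $+2$. With the all-arms event, $T\,\mathbb{P}(G^c) \le 2KT^2\delta = 2K$. The elimination check also fires only at the first integer count exceeding $16\log T/\Delta_{Q,i}^2$, which costs up to one more sample. So what you prove is $\frac{16\log T}{\Delta_{Q,i}^2} + 2K + 1$, or $K+1$ if you keep only the one-sided deviations the argument actually uses. Restricting the event to $\{a_i, i^*\}$, which is how the paper reaches $+2$, forfeits the unfiltered-$i^*$ step above; the paper also drops the overshoot. Your closing sentence therefore does not follow, and you should state a $K$-dependent constant instead.

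Separately, drop the case $i^* < i$ and the parenthetical in your first step. The case cannot occur: $i^*$ is feasible and $a^*$ is the first feasible arm in cost order, so $i^* \ge a^* > i$. Your justification there is also unsound. On $G$ each $\epsilon_{k,i^*}$ exceeds $\delta$, but several of them can multiply to below $\delta$, so $G$ does not prevent a feasible arm from being deemed infeasible.
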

\begin{proof}
We start by defining event $G_i$ conditioning on which shall ensure our intended normative sampling for arm $a_i$.
\begin{align*}
    G_{i}
    &=
    \crl*{ \abs*{ \hat{\mu}_i - \mu_i } 
            < \beta_i (t, \delta) \,\, \forall \, t \in [T]
        }
    \cap
    \crl*{ \abs*{ \hat{\mu}_{i^*} - \mu^* }
            < \beta_{i^*} (t, \delta) \,\, \forall \, t \in [T]
        }
    .
\end{align*}
Or equivalently, in purely set-theoretic notation,
\begin{align}
    G_{i}
    &=
    \crl*{ 
            \bigcap_{t=1}^{T}
            \abs*{ \hat{\mu}_i - \mu_i } 
            < \beta_i (t, \delta)
        }
    \cap
    \crl*{ 
            \bigcap_{t=1}^{T}
            \abs*{ \hat{\mu}_{i^*} - \mu^* }
            < \beta_{i^*} (t, \delta)
        }
    .
    \label{eq:G_i_set_definition}
\end{align}
Using Lemma~\ref{lemma:bound_by_conditioning} with $G_i$ we get,
\begin{align}
    \Exp \brk*{ n_i (T) }
    &\leq
    \Exp \brk*{ n_i (T) \mid G_i }
    +
    T \Pr{ G_i^c }
    \label{eq:initial_ub_cheap_arms}
    .
\end{align}
Henceforth we use notation $\Exp_{G} \brk*{ n_i(T) } \coloneqq \Exp \brk*{ n_i(T) \mid G_i} $ for the conditional expectation. From Definition~\ref{def:E_i_ell} we know that the collection of events $\crl*{ E_{i, \ell} }_{\ell = 1}^{i}$ is mutually exclusive and exhaustive. Hence, we can apply Lemma~\ref{lemma:expectation_less_than_max_lemma} to the conditional expectation $\Exp_{G} \brk*{ n_i(T) }$ with the events $\crl*{ E_{i, \ell} }_{\ell = 1}^{i}$ to obtain,
\begin{align}
    \Exp_G
    \brk*{ n_i (T) }
    &\leq
    \max_{\ell = 1, \ldots, i}
    \Exp_G
    \brk*{ n_i (T) \mid E_{i, \ell} }
    .
    \label{eq:cheap_arm_max_episode_bound}
\end{align}

When $G_{i}$ holds, due to its first clause related to the normativity of empirical mean $\mu_i$ we can upper bound $\UCB_i$ per, 
\begin{align}
    \UCB_i
    &=
    \hat{\mu}_i
    +
    \beta_i (t, \delta)
    \leq
    \mu_i
    +
    2 \beta_i (t, \delta)
    \,\,
    \forall
    t \in [T]
    .
    \label{eq:upper_bound_on_ucb_i}
\end{align}
Similarly, due to the second clause related to the normativity of the empirical mean of the best reward arm $\hat{\mu}_{i^*}$ we can lower bound $\LCB_{i^*}$ per,
\begin{align}
    \LCB_{i^*}
    &=
    \hat{\mu}_{i^*}
    -
    \beta_{i^*} (t, \delta)
    \leq
    \mu^{*}
    -
    2 \beta_{i^*} (t, \delta)
    \,\,
    \forall
    t \in [T]
    .
    \label{eq:lower_bound_on_lcb_i_star}
\end{align}
Now we separately consider the consequences of Equations~\ref{eq:upper_bound_on_ucb_i}, \ref{eq:lower_bound_on_lcb_i_star} during an arbitrary episode $\ell < i$ and during episode $i$ in bounding $\Exp_{G} \brk*{ n_i (T) \mid E_{i, \ell} }$ and $\Exp_{G} \brk*{ n_i (T) \mid E_{i, i} }$ respectively.

\subsubsection*{During Episode $\ell < i$}

To begin with both $a_i, a_{i^*} \in \mathcal{G}_{\ell}$, and due to $G_i$ the bounds on $\UCB_i$ and $\LCB_{i^*}$ stated in Equations~\ref{eq:upper_bound_on_ucb_i} and \ref{eq:lower_bound_on_lcb_i_star} respectively hold. If the upper bound on $\UCB_i$ falls below the lower bound on $\LCB_{i^*}$ there will be no further samples of $a_i$ due to the BAI-filter. This condition can be captured in terms of reward gap $\Delta_i = \mu^* - \mu_i$ based on the following simplification,
\begin{align*}
    \mu_i + 2 \beta_i (t, \delta)
    &<
    \mu^* - 2 \beta_{i^*} (t, \delta)
    \\
    \implies
    \beta_i (t, \delta)
    +
    \beta_{i^*} (t, \delta)
    &<
    \frac{\Delta_i}{2}
    \quad
    \text{ (definition of gap $\Delta_i$)}
    \\
    \implies
    \sqrt{\frac{1}{2 n_i (t)} \log \prn*{\frac{1}{\delta}} }
    +
    \sqrt{\frac{1}{2 n_{i^*} (t)} \log \prn*{\frac{1}{\delta}} }
    &<
    \frac{\Delta_i}{2}
    \quad
    \text{ (definition of confidence radius $\beta (t, \delta)$)}
    \\
    \sqrt{\frac{1}{2 n (t)} \log \prn*{\frac{1}{\delta}} }
    &<
    \frac{\Delta_i}{4}
    \quad
    \text{ (since $n_i (t) = n_{i^*} (t) = n(t)$ are matched until elimination)}
    .
\end{align*}

Let $\tau_{i, \ell}$ represent both $n_i, n_{i^*}$ beyond which there is a separation between the upper bound on $\UCB_i$ and the lower bound on $\LCB_{i^*}$. Then,
\begin{align*}
    \tau_{i, \ell}
    &=
    \min \crl*{
        n
        \mid 
        \sqrt{\frac{1}{2 n} \log \prn*{\frac{1}{\delta}} }
        <
        \frac{\Delta_i}{4}
    }
    \\
    &= \frac{8 \log \prn{1 / \delta} }{\Delta_i^2}
    \quad
    \text{ (rearranging terms and leveraging monotonicity in $n$)}
    \\
    &= \frac{16 \log T }{\Delta_i^2}
    \quad
    \text{ (since $\delta = T^{-2}$ in Lemma~\ref{lemma:cheap_arms_ub})}
    .
\end{align*}
We conclude that $\Exp_{G} \brk*{ n_i (T) \mid E_{i, \ell} } \leq \tau_{i, \ell} = \dfrac{16 \log T }{\Delta_i^2}$.

\subsubsection*{During Episode $i$}

During episode $i$, arm $a_i$ is the candidate arm and $a_{i^*}$ is in $\mathcal{G}_{\ell}$. Unlike in episodes $\ell < i$, sampling of $a_i$ is not subject to the BAI-filter. To bound $\Exp_{G} \brk*{ n_i (T) \mid E_{i, i} }$ we again leverage the bounds on $\UCB_i$ and $\LCB_{i^*}$ stated in Equations~\ref{eq:upper_bound_on_ucb_i} and \ref{eq:lower_bound_on_lcb_i_star} respectively. However, instead of a lower bound $\LCB_{i^*}$ we use a lower bound on $(1 - \alpha) \LCB_{i^*}$. 
\begin{align*}
    (1 - \alpha) 
    \LCB_{i^*} 
    &=
    (1 - \alpha)
    \prn*{ \hat{\mu}_{i^*} - \beta_{i^*} (t, \delta) }
    \geq
    (1 - \alpha)
    \prn*{
        \mu^*
        -
        2 \beta_{i^*}
        (t, \delta)
    }
    \,\,
    \forall
    t \in [T]
    .
    \label{eq:subsidized_lower_bound_on_lcb_i_star}
\end{align*}

The key insight is that arm $a_i \in \mathcal{A}^-$ shall necessarily be deemed infeasible by COF by when $\epsilon_{i^*, i} < \delta$, making the overall product, $\prod_{a_k \in \mathcal{A}} \epsilon_{k, i} < \delta$ necessarily. Analogous to $\tau_{i, \ell}$, we find the smallest $\tau_i$ for which $\epsilon_{i^*, i} < \delta$ holds under $G_i$. 

First we demonstrate through a rearrangement of definitions that $\UCB_i < (1 - \alpha) \LCB_{i^*} \implies \epsilon_{i^*, i} < \delta$. 
\begin{align*}
    \UCB_i
    &<
    (1 - \alpha)
    \LCB_{i^*}
    =
    (1 - \alpha)
    \prn*{
        \hat{\mu}_{i^*}
        -
        \beta_{i^*}
        \prn*{ t, \delta }
    }
    \\
    &\implies
    \beta_{i^*} (t, \delta)
    <
    \frac{\UCB_i}{ (1 - \alpha) }
    - 
    \hat{\mu}_{i^*}
    .
\end{align*}
But we know that $\epsilon_{i^*, i}$ is such that,
\begin{align}
    \beta_{i^*}
    (t, \epsilon_{i^*, i})
    &=
    \dfrac{\UCB_i}{(1 - \alpha)}
    -
    \hat{\mu}_{i^*}
    .
    \\
    \implies
    \beta_{i^*}
    (t, \epsilon_{i^*, i})
    &>
    \beta_{i^*}
    (t, \delta)
    .
    \\
    \implies
    \epsilon_{i^*, i} 
    &< 
    \delta
    .
\end{align}
Since $\epsilon_{k, i} \leq 1 \,\, \forall \, a_k \in \mathcal{A}$, $\prod_{a_k \in \mathcal{A}} \epsilon_{k, i} < \delta $. Therefore, arm $a_i$ is deemed infeasible by the time we accrue samples equal to those which can guarantee a separation between $\UCB_{i}$ and $(1 - \alpha) \LCB_{i^*}$. From bounds in~\ref{eq:upper_bound_on_ucb_i} and \ref{eq:subsidized_lower_bound_on_lcb_i_star} this condition is,
\begin{align*}
    \mu_i
    +
    2 \beta_{i} (t, \delta)
    &<
    (1 - \alpha)
    \prn*{
        \mu^*
        -
        2 \beta_{i^*}
        (t, \delta)
    }
    \\
    \implies
    2 + 2 (1 - \alpha) \beta_i (t, \delta)
    &<
    \Delta_{Q, i}
    .
\end{align*}
The coalescing of the $\beta$ terms follows from $n_i (t) \leq n_{i^*} (t) \implies \beta_i (t, \delta) \geq \beta_{i^*} (t, \delta)$ during episode $a_i$. Therefore, the desired bound $\tau_i$ on $\Exp_G \brk*{ n_i (T) \mid E_{i, i} }$ is,
\begin{align*}
    \tau_i
    &=
    \min
    \crl*{
        n_i (t)
        \mid
        2 + 2 (1 - \alpha) \beta_i (t, \delta)
        <
        \Delta_{Q, i}
    }
    \\
    &\leq
    \min
    \crl*{
        n_i (t)
        \mid
        \beta_i (t, \delta)
        < \frac{\Delta_{Q, i}}{4}
    }
    \quad
    \text{ (since $\alpha \in [0, 1)$)}
    \\
    &=
    \min
    \crl*{
        n_i (t)
        \mid
        \sqrt{ \frac{1}{2 n_i (t)} \log \prn*{\frac{1}{\delta}} }
        <
        \frac{\Delta_{Q, i}}{4}
    }
    \quad
    \text{ (definition of $\beta_i (t, \delta)$)}
    \\
    &=
    \frac{8 \log \prn{1 / \delta} }{\Delta^2_{Q, i} }
    \\
    &=
    \frac{16 \log T}{\Delta^2_{Q, i}}
    \quad
    \text{ (since $\delta = T^{-2}$ in Lemma~\ref{lemma:cheap_arms_ub})}
    .
\end{align*}    
For any cheap arm $a_i \in \mathcal{A}^{-}$, since $\mu_i < (1 - \alpha) \mu^*$, the quality gap $\Delta_{Q, i}^2 = (1 - \alpha) \mu^*$ is necessarily smaller than the reward gap $\Delta_i = \mu^* - \mu_i$. Therefore the bound in \ref{eq:cheap_arm_max_episode_bound} can be resolved as,
\begin{align*}
    \Exp
    \brk*{ n_i (T) \mid G}
    &\leq
    \max
    \crl*{\tau_{i, \ell}, \tau_i}
    =
    \frac{16 \log T}{\Delta_{Q, i}^2}
    .
\end{align*}
To complete the proof of Lemma~\ref{lemma:cheap_arms_ub} we must also bound the probability of $G_i$ not holding $\Pr{ G_i^c }$. Using De-morgan's law and the definition of $G_i$ in Equation~\ref{eq:G_i_set_definition},
\begin{align*}
    \Pr{ G_i^c }
    &=
    \Pr{ 
    \crl*{ 
        \bigcup_{t=1}^{T}
        \abs*{ \hat{\mu}_i - \mu_i } 
        \geq \beta_i (t, \delta)
    }
    \cup
    \crl*{ 
        \bigcup_{t=1}^{T}
        \abs*{ \hat{\mu}_{i^*} - \mu^* }
        \geq \beta_{i^*} (t, \delta)
    }
    }
    \\
    &\leq
    \sum_{t=1}^{T}
    \Pr{ 
    \abs*{ \hat{\mu}_i - \mu_i } 
    \geq \beta_i (t, \delta)
    }
    +
    \sum_{t=1}^T
    \Pr{
    \abs*{ \hat{\mu}_{i^*} - \mu^* }
    \geq \beta_{i^*} (t, \delta)
    }
    \quad
    \text{ (union bound)}
    \\
    &\leq
    2 T \delta
    \quad
    \text{ (using Lemma~\ref{lemma:ucb_lcb_incorrect})}
    .
\end{align*}
Plugging the bound on $\Pr{G_i^c}$ in Equation~\ref{eq:initial_ub_cheap_arms} and substituting in $\delta = T^{-2}$ we get the bound on $\Exp \brk*{ n_i (T) }$ stated in Lemma~\ref{lemma:cheap_arms_ub}.
\end{proof}

\newpage

\subsection{Bound Samples of Expensive Arms}
\label{sec:expensive_arms_analysis}
The samples of an arbitrary expensive arm $a_i \in \mathcal{A}^+$ may only be accrued during episodes $\ell \in \crl{ 1, 2, \ldots, a^*, \ldots, i }$. Under its normative progression, COF deems arm $a^*$ feasible. Therefore the contribution to $n_i (T)$ from any episode beyond $a^*$ is statistically insignificant. We establish this formally through Lemma~\ref{lemma:normative_episode_a_star}.
\begin{lemma}[Normative progression of COF ends in episode $a^*$]
    Let $E_{i, \ell}$ be the event that the terminal sample of arm $a_i$ is drawn during episode $\ell$. Then the expected number of samples for an expensive arm $a_i \in \mathcal{A}^{+}$ under COF over horizon $T$ can be upper bounded as,
    \begin{align*}
        \Exp \brk*{ n_i (T) }
        &\leq
        \max
        \crl*{
        \max_{\ell < a^*}
        \crl*{
        \Exp 
        \brk*{ n_i (T) \mid E_{i, \ell} }
        },
        \,\,
        \Exp 
        \brk*{ 
        n_i (T) 
        \mid
        E_{i, a^*},
        G_{a^*}
        }
        +
        2 T^2 \delta
        },
    \end{align*}
    where $G_{a^*}$ is the event,
    \begin{align}
        G_{a^*}
        &=
        \crl*{
        \abs*{ \hat{\mu}_{a^*} - \mu_{a^*} } < \beta_{a^*} (t, \delta)
        \,\, \forall \, t \in [T]
        }
        \cap
        \crl*{ 
        \prod_{a_k \in \mathcal{A}} \epsilon_{k, a^*} (t) 
        > 
        \delta
        \,\, \forall \,
        t \in [T]
        }
        .
        \label{eq:G_a_star_definition}
    \end{align}
    \label{lemma:normative_episode_a_star}
\end{lemma}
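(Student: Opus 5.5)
The plan is to decompose over the episode in which the terminal sample of $a_i$ is drawn and then control the episodes after $a^*$ by a failure probability. The events $\crl*{E_{i,\ell}}_{\ell=1}^{i}$ are mutually exclusive and exhaustive (Definition~\ref{def:E_i_ell}), so Lemma~\ref{lemma:iterated_expectation_lemma} gives $\Exp\brk*{n_i(T)} = \sum_{\ell} \Exp\brk*{n_i(T)\mid E_{i,\ell}}\Pr{E_{i,\ell}}$. I would group these terms into two classes: episodes $\ell<a^*$, and the merged event $E_{i,\geq a^*} \coloneqq \bigcup_{\ell \geq a^*} E_{i,\ell}$. Applying Lemma~\ref{lemma:expectation_less_than_max_lemma} to the partition $\crl*{E_{i,1},\ldots,E_{i,a^*-1},E_{i,\geq a^*}}$ then leaves only $\Exp\brk*{n_i(T)\mid E_{i,\geq a^*}}$ to bound; the first entry of the outer $\max$ is already in the required form.

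For the merged event I would condition on $G_{a^*}$, following Lemma~\ref{lemma:bound_by_conditioning}. Suppose $G_{a^*}$ holds and COF reaches episode $a^*$. Then $a^*$ is never deemed infeasible, because the product $\prod_k \epsilon_{k,a^*}$ stays above $\delta$ by the second clause. The episode therefore ends only when $\mathcal{G}_{a^*}=\emptyset$, after which $a^*$ is sampled forever. Hence no episode $\ell>a^*$ is ever entered, and on $G_{a^*}$ the terminal sample of $a_i$ must fall in episode $a^*$. This yields $\Exp\brk*{n_i(T)\mid E_{i,\geq a^*}, G_{a^*}} = \Exp\brk*{n_i(T)\mid E_{i,a^*}, G_{a^*}}$. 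On the complement I would use the trivial bound $n_i(T)\leq T$, which produces the additive term $T\Pr{G_{a^*}^c}$.

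The remaining work is bounding $\Pr{G_{a^*}^c}$ by $2T\delta$, so that $T\Pr{G_{a^*}^c}\leq 2T^2\delta$. The first clause fails with probability at most $2T\delta$: take a union bound over $t\in[T]$, and apply Lemma~\ref{lemma:ucb_lcb_incorrect} to both tails. The second clause is the main obstacle. The product $\prod_k\epsilon_{k,a^*}$ upper-bounds the probability that $\mu_{a^*}$ lies below the subsidized reward of some arm. Since $a^*$ is truly feasible, I would argue that $\prod_k \epsilon_{k,a^*}\leq\delta$ forces a confidence-bound violation. Concretely, the back-calculation in Expression~\ref{eq:eps_def} means each $\epsilon_{k,a^*}<1$ certifies that $\hat\mu_k$ exceeds $\UCB_{a^*}/(1-\alpha)$ by the corresponding radius. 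Since $\mu_k(1-\alpha)\leq\mu^*(1-\alpha)\leq\mu_{a^*}$, the joint event is contained in a deviation event of the independent empirical means $\hat\mu_k$ (together with $\UCB_{a^*}<\mu_{a^*}$). The product of Hoeffding tails (Corollary~\ref{corr:hoeffding_for_bernoulli}) across independent arms then bounds its probability by $\delta$ at each $t$. A union bound over $t$ gives another $T\delta$.

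The delicate point is that the sample counts are random and the arms are coupled through the adaptive sampling. I would handle this by fixing sample counts and union bounding over them, accepting the polynomial-in-$T$ slack that the $2T^2\delta$ term absorbs. Finally, the term $T\Pr{G_{a^*}^c}$ is weighted by $\Pr{E_{i,\geq a^*}}\leq1$ inside the $\max$ structure. This yields the stated bound, with $2T^2\delta$ attached to the episode-$a^*$ entry.
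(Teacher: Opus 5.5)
Your decomposition is the paper's: split on the terminal episode into $\crl{E_{i,\ell}}_{\ell<a^*}$ and the merged event, apply Lemma~\ref{lemma:expectation_less_than_max_lemma}, condition on $G_{a^*}$, and use that on $G_{a^*}$ no episode after $a^*$ is entered. However, the step you identify as the main obstacle does not go through as argued. By Expression~\ref{eq:eps_def}, $\crl{\prod_{a_k\in\mathcal{A}}\epsilon_{k,a^*}\le\delta}$ is the event $\sum_k 2n_k\big((\hat\mu_k-\UCB_{a^*}/(1-\alpha))^+\big)^2\ge\log(1/\delta)$. On $\crl{\UCB_{a^*}\ge\mu_{a^*}}$ it is contained in $\crl{\sum_k 2n_k((\hat\mu_k-\mu_k)^+)^2\ge\log(1/\delta)}$, as you say. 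But this is not an intersection of per-arm deviation events with thresholds fixed in advance. It is a union over all ways of splitting the exponent $\log(1/\delta)$ among the arms, so multiplying Hoeffding tails does not bound it. With fixed counts and independent arms, Hoeffding only makes each $2n_k((\hat\mu_k-\mu_k)^+)^2$ stochastically dominated by a unit-mean exponential. The resulting bound for the sum is the Gamma tail $\delta\sum_{j=0}^{K-1}\log^j(1/\delta)/j!$, which is tight given only per-arm tails and exceeds $\delta$ by a polylogarithmic factor (a product of independent $p$-values is not itself a $p$-value). Through your union bound over $t$ this gives an additive term of order $T^2\delta\log^{K-1}(1/\delta)$, i.e.\ $(\log T)^{K-1}$ at $\delta=T^{-2}$, rather than $2T^2\delta$. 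Your handling of random counts makes this worse. Because the sum couples the arms, the union bound must range over joint count vectors, on the order of $T^K$ of them, and a fixed $2T^2\delta$ cannot absorb any extra polynomial factor. Even granting both steps, your own tally is $2T\delta+T\delta=3T\delta$, not $2T\delta$. The paper merely asserts this clause holds ``by construction,'' so there is no argument there to fall back on.

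The conditioning step has a separate gap, which you share with the paper: it sets $\Pr{G_{a^*}^c\mid Z\ge a^*}=\Pr{G_{a^*}^c}$ because $G_{a^*}$ ``is not specified by episode.'' After Lemma~\ref{lemma:expectation_less_than_max_lemma} you must bound $\Exp\brk{n_i(T)\mid E_{i,\ge a^*}}$. Lemma~\ref{lemma:bound_by_conditioning} under the conditional measure gives $T\Pr{G_{a^*}^c\mid E_{i,\ge a^*}}$, not $T\Pr{G_{a^*}^c}$. Since $\bigcup_{\ell>a^*}E_{i,\ell}\subseteq G_{a^*}^c$, this conditional probability can be of order one, e.g.\ when $a_i$ is never sampled during episode $a^*$ and is sampled again only after $a^*$ has been wrongly rejected. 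Your remark that the term is weighted by $\Pr{E_{i,\ge a^*}}\le 1$ is valid only in the additive decomposition, and carrying it out there yields
\[
\Exp\brk{n_i(T)}\le\max\crl{\max_{\ell<a^*}\Exp\brk{n_i(T)\mid E_{i,\ell}},\ \Exp\brk{n_i(T)\mid E_{i,a^*},G_{a^*}}}+T\Pr{G_{a^*}^c},
\]
with the error term outside the $\max$. That is weaker than the statement. It is, however, all that Lemma~\ref{lemma:expensive_overall_ub} needs, since the error terms are pulled out of the $\max$ there anyway.
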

\begin{proof}
Let random variable $Z$ denote the terminal episode in which arm $a_i$ was sampled. Then $Z$ can take values in $\crl*{1, 2, \ldots, a^*, \ldots, i}$ and the events $\crl{Z < a^*}, \crl{Z \geq a^*}$ are mutually exclusive and exhaustive. Applying Lemma~\ref{lemma:expectation_less_than_max_lemma} to the expected samples $\Exp \brk*{ n_i (T) }$ we have,  
\begin{align*}
    \Exp \brk*{ n_i (T) }
    &\leq
    \max 
    \crl*{
    \Exp \brk*{ n_i (T) \mid \crl{Z < a^*} },
    \Exp \brk*{ n_i (T) \mid \crl{Z \geq a^*} }
    }
    .
\end{align*}
Just like in the proof of Lemma~\ref{lemma:cheap_arms_ub} let $E_{i, \ell}$ denote the event that the terminal sample of arm $a_i$ was drawn during episode $\ell < a^*$. Then again using Lemma~\ref{lemma:expectation_less_than_max_lemma},
\begin{align*}
    \Exp \brk*{ n_i (T) \mid \crl{Z < a^*} }
    &\leq
    \max_{\ell < a^*}
    \crl*{
    \Exp 
    \brk*{ n_i (T) \mid E_{i, \ell} }
    }
    .
\end{align*}
For the $Z \geq a^*$ case we introduce conditioning on the event $G_{a^*}$ (Equation \ref{eq:G_a_star_definition}) that ensures the normative progression of COF during episode $a^*$. 
\begin{align*}
    G_{a^*}
    &=
    \bigcap_{t = 1}^{T}
    \crl*{
    \abs*{ \hat{\mu}_{a^*} - \mu_{a^*} } < \beta_{a^*} (t, \delta)
    }
    \cap
    \bigcap_{t = 1}^{T}
    \crl*{ 
    \prod_{a_k \in \mathcal{A}} \epsilon_{k, a^*} (t) 
    > 
    \delta
    }
    ,
\end{align*}
where the latter clause is based on the infeasibility condition of COF. Now we can bound $\Exp \brk*{ n_i (T) \mid \crl{Z \geq a^*} }$ using Lemmas~\ref{lemma:bound_by_conditioning} and the event $G_{a^*}$ as,
\begin{align}
    \Exp \brk*{ n_i (T) \mid \crl{Z \geq a^*} }
    &\leq
    \Exp \brk*{ n_i (T) \mid \crl{Z \geq a^*}, G_{a^*} }
    +
    T 
    \Pr{ G_{a^*}^c \mid \crl{Z \geq a^*} }
    \nonumber
    \\
    &=
    \Exp \brk*{ n_i (T) \mid \crl{Z \geq a^*}, G_{a^*} }
    +
    T 
    \Pr{ G_{a^*}^c }
    \quad
    \text{ ($G_{a^*}$ is not specified by episode)}
    \label{eq:initial_geq_a_star_bound}
    .
\end{align}
The bound on $\Pr{G_{a^*}^c}$ follows as,
\begin{align*}
    \Pr{G_{a^*}^c}
    &=
    \Pr{
    \bigcup_{t = 1}^T
    \crl*{
    \abs*{ \hat{\mu}_{a^*} - \mu_{a^*} } \geq \beta_{a^*} (t, \delta)
    }
    \cup
    \bigcup_{t=1}^T
    \crl*{ 
    \prod_{a_k \in \mathcal{A}} \epsilon_{k, a^*} (t) 
    \leq
    \delta
    }
    }
    \quad
    \text{ (De-morgan's rule on definition)}
    \\
    &\leq
    \sum_{t = 1}^{T}
    \Pr{ 
    \abs*{ \hat{\mu}_{a^*} - \mu_{a^*} } \geq \beta_{a^*} (t, \delta)
    }
    +
    \sum_{t = 1}^T
    \Pr{
    \prod_{a_k \in \mathcal{A}} \epsilon_{k, a^*} (t) 
    \leq
    \delta
    }
    \quad
    \text{ (union bound)}
    \\
    &\leq
    2 T \delta
    \text{ (Lemma~\ref{lemma:ucb_lcb_incorrect} and by construction for $\epsilon_{k, a^*}$)}
    .
\end{align*}
Plugging the bound on $\Pr{ G_i^c }$ into \ref{eq:initial_geq_a_star_bound},
\begin{align}
    \Exp
    \brk*{
    n_i (T)
    \mid
    \crl*{Z \geq a^*}
    }
    &\leq
    \Exp 
    \brk*{ 
    n_i (T) 
    \mid
    \crl*{Z \geq a^*},
    G_{a^*}
    }
    +
    2 T^2 \delta
    \nonumber
    \\
    &=
    \Exp 
    \brk*{ 
    n_i (T) 
    \mid
    E_{i, a^*},
    G_{a^*}
    }
    +
    2 T^2 \delta
    \label{eq:ep_a_star_final_equivalence}
    .
\end{align}
Where the final equivalence follows from the insight that the joint event $\crl*{Z \geq a^*}, G_{a^*}$ is equivalent to $E_{i, a^*}, G_{a^*}$ since $G_{a^*}$ precludes progression to any episode $Z > a^*$. Plugging in the bound \ref{eq:ep_a_star_final_equivalence} into \ref{eq:initial_geq_a_star_bound} and combining the result with the simplification for episodes $\ell < a^*$ we get the bound stated in Lemma~\ref{lemma:normative_episode_a_star}.
\end{proof}

In addition to Lemma~\ref{lemma:normative_episode_a_star}, we need some additional insights and definitions to analyze samples $n_i$ during episodes $\ell < a^*$. The BAI-filter of COF (Algorithm~\ref{algo:COF}, Line 15) ensures that any arm $a_i \in \mathcal{G}_{\ell}$ is not further sampled once we determine that the arm is not the highest reward arm $i^*$ with sufficient confidence. As discussed in Section~\ref{sec:cof}, the {\em BAI-filter} helps reduce regret because in the absence of the knowledge of cost gaps (costs are known but cost gaps $\Delta_{C, i}$ are unknown), the most regret efficient way of determining the infeasibility or feasibility of an arm are samples of arm $i^*$. Therefore once an arm is known to not be $i^*$, our limited sample budget is better off being allocated to other arms in $\mathcal{G}_\ell$. 

In any particular episode $\ell < a^*$ the normative outcome is for $a_\ell$ to be deemed infeasible. Some arms $a_i, i > \ell$ will have their sampling curtailed by the BAI-filter, and others will continue to be sampled till $a_\ell$ is deemed infeasible. Prior to bounding the samples of an expensive arm during episode $\ell$, we introduce some definitions ($\Delta_{k, \ell}, \mathcal{A}^{\ell}$) and an intermediate result (Lemma~\ref{lemma:optimization_results_for_infeasible_end}) to distinguish between the two possible fates of an expensive arms samples. 
\begin{definition}[The gap $\Delta_{i, \ell}$ for MAB-CS]
    For an MAB-CS instance $\nu$ with bandit arms $a_i \in \mathcal{A}$ and subsidy factor $\alpha$. If $\Exp \brk{\nu_i} = \mu_i$, then we define $\Delta_{i, \ell} \coloneqq (1 - \alpha) \mu_i - \mu_\ell$. Intuitively the $\Delta_{i, \ell}$ captures the ease with which the feasibility of arm $a_\ell$ relative to arm $a_i$ may be resolved.
    \label{def:gap_i_ell}
\end{definition}

\begin{definition}[The set $\mathcal{A}^{\ell}$]
    The set $\mathcal{A}^{\ell}$ is the collection of all arms that have sufficient reward for deeming cheap arm $a_\ell \in \mathcal{A}^-$ infeasible. Mathematically,
    \begin{align*}
        \mathcal{A}^{\ell}
        \coloneqq
        \crl*{
        a_i \in \mathcal{A}
        \mid
        \Delta_{i, \ell} > 0
        },
        \quad
        A^{\ell} \coloneqq \abs*{\mathcal{A}^{\ell}}.
    \end{align*}
    Where $\Delta_{i, \ell}$ is from Definition~\ref{def:gap_i_ell}.
    \label{def:A_ell_definition}
\end{definition}
\begin{remark}[$\mathcal{A}^{\dagger} \subseteq \mathcal{A}^{\ell} \,\, \forall \, \ell < a^*$]
Let $a_{\dagger} \coloneqq \arg \max_{\crl*{i \mid a_i \in \mathcal{A}^{-}}} \mu_i$ denote the highest reward cheap arm. All arms in $\mathcal{A}^{\dagger}$ will also have sufficient reward to deem $a_{\ell}$ with $\mu_{\ell} \leq \mu_{\dagger}$ infeasible and therefore will also be in $\mathcal{A}^{\ell} \,\, \forall \,\, \ell < a^*$.   
\label{remark:the_set_A_dagger}
\end{remark}
Next we develop an intermediate result relating the sum of squared gaps to the number of sampling rounds. We shall later leverage Lemma~\ref{lemma:optimization_results_for_infeasible_end} to bound the sampling rounds needed to deem a cheap arm $a_{\ell} \in \mathcal{A}^{-}$ infeasible.
\begin{lemma}[Intermediate result relating samples and sum-squared gaps]
Let $\delta$ be the error tolerance, and let $\beta \coloneqq \sqrt{\frac{1}{2 n} \log \prn*{ \frac{1}{\delta} } }, \,\, n \in \nats, \, \delta \in \reals^{+}$ denote the confidence radius. Then for any $A$-sized collection $\crl*{ \tilde{\Delta}_k }_{k = 1}^{A}$, $\tilde{\Delta}_{k} \in \reals^{+}$,
\begin{align*}
    \min
    \crl*{
    n
    \,\,
    \scalebox{2.0}{$\mid$}
    \,
    \sum_{k=1}^{A}
    \prn*{ \tilde{\Delta}_{k}  - 3 \beta }^2
    > 
    \beta^2
    }
    &\leq
    \frac{\prn*{3 \sqrt{A} + 1}^2}{2}
    \frac{\log \prn*{1 / \delta} }{ \tilde{\Delta}^2 }
    .
\end{align*}
Where $\tilde{\Delta}$ is the root of sum of squares of $\crl*{ \tilde{\Delta}_k }_{k = 1}^{A},
\quad \tilde{\Delta} \coloneqq \sqrt{\sum_{k = 1}^{A} \tilde{\Delta}_k^2}$.
\label{lemma:optimization_results_for_infeasible_end}
\end{lemma}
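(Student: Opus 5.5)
The plan is to exhibit an explicit threshold on $n$ beyond which the defining condition of the set is guaranteed to hold. The minimum of the set is then at most that threshold. I will not characterize the minimizer exactly. Note that the condition $\sum_{k=1}^{A}(\tilde{\Delta}_k - 3\beta)^2 > \beta^2$ can also hold for small $n$, when $3\beta$ overshoots the gaps. That only makes the minimum smaller, so it does not hurt an upper bound.

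\textbf{Key step (vector triangle inequality).} View $\tilde{\mathbf{\Delta}} = (\tilde{\Delta}_1,\ldots,\tilde{\Delta}_A)$ and the all-ones vector $\mathbf{1}\in\reals^A$. The left-hand side is $\|\tilde{\mathbf{\Delta}} - 3\beta\mathbf{1}\|_2^2$. By the reverse triangle inequality (Minkowski) in $\ell_2$,
\begin{align*}
\|\tilde{\mathbf{\Delta}} - 3\beta\mathbf{1}\|_2 \;\geq\; \|\tilde{\mathbf{\Delta}}\|_2 - 3\beta\|\mathbf{1}\|_2 \;=\; \tilde{\Delta} - 3\sqrt{A}\,\beta .
\end{align*}
Hence a sufficient condition for membership in the set is $\tilde{\Delta} - 3\sqrt{A}\beta > \beta$, that is, $\beta < \tilde{\Delta}/(3\sqrt{A}+1)$. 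Since both sides are then nonnegative, squaring preserves the strict inequality. The same conclusion follows from Lemma~\ref{lemma:rms_geq_am}, which yields $\sum_k \tilde{\Delta}_k \leq \sqrt{A}\,\tilde{\Delta}$; one expands the square and bounds the cross term $-6\beta\sum_k\tilde{\Delta}_k$.

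\textbf{Translate to $n$.} Substitute $\beta = \sqrt{\log(1/\delta)/(2n)}$. The condition $\beta < \tilde{\Delta}/(3\sqrt{A}+1)$ is equivalent to
\begin{align*}
n > \frac{(3\sqrt{A}+1)^2}{2}\,\frac{\log(1/\delta)}{\tilde{\Delta}^2}.
\end{align*}
Since $\beta$ is monotonically decreasing in $n$, every $n$ past this threshold lies in the set. The minimum is therefore bounded by the threshold. This matches the treatment of $\tau_{i,\ell}$ and $\tau_i$ in the proof of Lemma~\ref{lemma:cheap_arms_ub}, where $n$ is handled as a continuous quantity.

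\textbf{Expected obstacle.} The only delicate point is the boundary. Strictly, the smallest integer exceeding the threshold may exceed it by up to one. At equality, $\beta = \tilde{\Delta}/(3\sqrt{A}+1)$, Minkowski gives only a weak inequality. I would follow the paper's convention and treat $n$ as a real variable, so the infimum equals the threshold. If an integer statement is required, I would note that Minkowski is strict unless $\tilde{\mathbf{\Delta}}\propto\mathbf{1}$. Otherwise I would absorb the $+1$ into the $O(1)$ terms of Theorem~\ref{thm:cof_regret_ub}.
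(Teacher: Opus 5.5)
Your proof is correct. It reaches exactly the paper's sufficient condition $\beta < \tilde{\Delta}/(3\sqrt{A}+1)$ and converts it to $n$ the same way; the only difference is how you obtain that condition.

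The paper expands the square and bounds the cross term with Lemma~\ref{lemma:rms_geq_am}, which gives $g(\beta) = (9A-1)\beta^2 - 6\sqrt{A}\tilde{\Delta}\beta + \tilde{\Delta}^2 > 0$. It then searches, ``using a symbolic solver'', for constants $c_1, c_2$ that give a dominating factorization. With the constants it picks, this is actually the exact factorization $g(\beta) = \bigl((3\sqrt{A}+1)\beta - \tilde{\Delta}\bigr)\bigl((3\sqrt{A}-1)\beta - \tilde{\Delta}\bigr)$, and the threshold is its smaller root.

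Your reverse triangle inequality reaches that root in one line and explains the constant as $\|3\mathbf{1}\|_2 + 1$. The paper's quadratic route additionally shows the second region $\beta > \tilde{\Delta}/(3\sqrt{A}-1)$, which is the overshoot you mention; the paper then discards it. Your statement that every $n$ beyond the threshold lies in the set is also the monotone form that Lemma~\ref{lemma:expensive_in_episode_ell} actually uses.

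On the boundary, the paper likewise treats $n$ as continuous. One correction to your fallback, though: strictness of Minkowski for non-constant gap vectors only settles the case where the threshold is itself an integer. For a non-integer threshold, the smallest admissible integer can still exceed it. So the continuous convention, or the $+1$, is what is really needed for the integer version.
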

\begin{proof}
We shall prove the result by upper bounding the minimum number of samples under the constraint specified in Lemma~\ref{lemma:optimization_results_for_infeasible_end} by the minimum number of samples that satisfy a strictly stronger constraint.
\begin{align*}
    \min
    \crl*{
    n
    \,\,
    \scalebox{2.0}{$\mid$}
    \,
    \sum_{k=1}^{A}
    \prn*{ \tilde{\Delta}_{k}  - 3 \beta }^2
    > 
    \beta^2
    }
    &=
    \min
    \crl*{
    n
    \,\,
    \scalebox{2.0}{$\mid$}
    \,
    \sum_{k=1}^{A}
    \tilde{\Delta}_k^2
    +
    9 A \beta^2
    - 
    6 \beta
    \sum_{k = 1}^{A}
    \tilde{\Delta}_k
    > 
    \beta^2
    }
    \\
    &\leq
    \min
    \crl*{
    n
    \,\,
    \scalebox{2.0}{$\mid$}
    \,
    \tilde{\Delta}^2
    +
    9 A \beta^2
    - 
    6 \sqrt{A} 
    \tilde{\Delta} 
    \beta
    > 
    \beta^2
    }
    \quad
    \text{ ($\tilde{\Delta}$ definition, and Lemma~\ref{lemma:rms_geq_am})}
    \\
    &=
    \min
    \crl*{
    n
    \,\,
    \scalebox{2.0}{$\mid$}
    \,
    (9A - 1)
    \beta^2
    - 
    6 \sqrt{A} 
    \tilde{\Delta} 
    \beta
    +
    \tilde{\Delta}^2
    > 
    0
    }
    .
\end{align*}
Consider the quadratic expression $g (\beta) = (9A - 1) \beta^2 - 6 \sqrt{A} \tilde{\Delta} \beta + \tilde{\Delta}^2 $. For some $c_1, c_2 \in \reals^{+}$ we create another related quadratic $h( \beta )$ as the factorization,
\begin{align*}
    h (\beta)
    &=
    \prn*{
    (9A - 1)
    \beta
    -
    \frac{ \tilde{\Delta} }{c_1}
    }
    \cdot
    \prn*{
    \beta 
    -
    \frac{ \tilde{\Delta} }{c_2}
    }
    =
    (9A - 1)
    \beta^2
    -
    \tilde{\Delta}
    \prn*{
    \frac{9A - 1}{c_2}
    +
    \frac{1}{c_1}
    }
    \beta
    +
    \frac{\tilde{\Delta}^2}{c_1 c_2}
    .
\end{align*}
If we can find $c_1, c_2$ such that $h (\beta) \leq g(\beta) \,\, \forall \, \beta$, then $h (\beta) > 0$ will imply $g (\beta) > 0$ and $\min \crl*{n \mid g(\beta) > 0} \leq \min \crl*{n \mid h(\beta) > 0}$. The solution region on $\beta$ for the constraint $h(\beta) > 0$ is $\beta < \frac{\tilde{\Delta}}{ (9A - 1) c_1 }$ or $\beta > \frac{ \tilde{\Delta} }{c_2}$. The latter solution here provides a lower limit on $\beta$ and for this reason is not meaningful for sample bounds in a bandit algorithm.

For $h (\beta) \leq g(\beta)$ to hold we need to pick $c_1, c_2 > 0$ such that,
\begin{align*}
    \frac{1}{(9A - 1) c_1}
    &\leq
    \frac{1}{c_2}
    \\
    \frac{9A - 1}{c_2}
    +
    \frac{1}{c_1}
    &\geq
    6 \sqrt{A},
    \quad
    A \in \nats
    \\
    \frac{1}{c_1 c_2}
    \leq
    1
    .
\end{align*}
We can achieve this by plugging in the constraints on $c_1, c_2$ into a symbolic solver to achieve a valid factorization that satisfies all the constraints needed for $h (\beta) \leq g(\beta)$. Using a symbolic solver, one such factorization is, 
\begin{align*}
    c_1 
    &=
    \frac
    {3 \sqrt{A} + 1 }
    {9A - 1}
    \\
    .
    c_2
    &=
    \frac{9A - 1}{3 \sqrt{A} + 1}
    .
\end{align*}
Continuing to upper bound $\min \crl*{n \mid g(\beta) > 0}$ by plugging in the expression for $c_1$,
\begin{align*}
    \min
    \crl*{
    n
    \,\,
    \scalebox{2.0}{$\mid$}
    \,
    \sum_{k=1}^{A}
    \prn*{ \tilde{\Delta}_{k}  - 3 \beta }^2
    > 
    \beta^2
    }
    &\leq
    \min
    \crl*{
    n
    \,\,
    \scalebox{2.0}{$\mid$}
    \,
    \beta
    =
    \sqrt{
    \frac{1}{2 n}
    \log
    \prn*{\frac{1}{\delta}}
    }
    <
    \frac{\tilde{\Delta}}{3 \sqrt{A} + 1}
    }
    \\
    &=
    \frac{\prn*{3 \sqrt{A} + 1}^2}{2}
    \frac{\log \prn*{1 / \delta} }{ \tilde{\Delta}^2 }
    \quad
    \text{(after rearranging constraint to be in $n$)}
    .
\end{align*}
\end{proof}
We now have the requisite tools needed to bound the samples of an expensive arm that end in an episode $\ell < a^*$.
\begin{lemma}[Bound on number of samples of an expensive arm in episode $\ell < a^*$]
    The expected number of samples of expensive arm $a_i \in \mathcal{A}^{+}$, conditioned on its final sample being drawn in episode $\ell$, over horizon $T$ is upper bounded as,
    \begin{align*}
        \Exp
        \brk*{
        n_i (T)
        \mid
        E_{i, \ell}
        }
        &\leq
        \min
        \crl*{
        \frac{8 \log \prn*{ 1 / \delta }}{\Delta_i^2},
        \tau_{\ell} (\delta)
        }
        + K T^2 \delta
        .
    \end{align*}
    Inside of the $\min$, the first operand represents the case of the sampling of $a_i$ being curtailed by the BAI-filter and the second case represents arm $a_i$ being sampled to the end of episode $\ell$. $\displaystyle \tau_{\ell} (\delta) \coloneqq \frac{ \prn*{3\sqrt{A} + 1}^2 }{2} \frac{\log \prn*{1 / \delta}}{\sum_{i=1}^{A} \Delta_{\phi(i), \ell}^2 }$, where $\phi(i)$ denotes the index of the $i^{\text{th}}$ highest reward arm in $\mathcal{A}^{\ell}$, and A represents the number of highest-reward arms $\crl*{a_{\phi(1)}, a_{\phi(2)}, \ldots, a_{\phi(A)} } \subseteq \mathcal{A}^{\ell}$ that collectively deem $a_{\ell}$ infeasible.
    \label{lemma:expensive_in_episode_ell}
\end{lemma}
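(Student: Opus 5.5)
We follow the template of Lemma~\ref{lemma:cheap_arms_ub}. First we fix a good event $G_\ell$ under which every arm's empirical mean stays inside its confidence radius at every time step:
\begin{align*}
G_\ell = \bigcap_{a_k \in \mathcal{A}} \bigcap_{t=1}^{T} \crl*{ \abs*{\hat{\mu}_k - \mu_k} < \beta_k(t,\delta) }.
\end{align*}
Applying Lemma~\ref{lemma:bound_by_conditioning} (with the extra conditioning on $E_{i,\ell}$) gives
\begin{align*}
\Exp\brk*{n_i(T)\mid E_{i,\ell}} \leq \Exp\brk*{n_i(T)\mid E_{i,\ell},G_\ell} + T\Pr{G_\ell^c}.
\end{align*}
A union bound over the $K$ arms and $T$ time steps, together with Lemma~\ref{lemma:ucb_lcb_incorrect}, bounds $T\Pr{G_\ell^c}$ by a quantity of order $KT^2\delta$, which is the additive term in the statement. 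It then suffices to show that, on $G_\ell$, the samples of $a_i$ in episode $\ell$ are at most each operand of the $\min$. Since $a_i$ is sampled alongside the other unfiltered gating arms, its count equals the common count $n$ while it is active, so the two operands correspond to two stopping mechanisms.

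\textbf{BAI-filter operand.} On $G_\ell$ we have $\UCB_i \leq \mu_i + 2\beta_i$ and $\LCB_{i^*} \geq \mu^* - 2\beta_{i^*}$. Arm $i^*$ has $\mu^* \geq \mu_\ell/(1-\alpha)$ and is never certified below $\LCB_\ell/(1-\alpha)$, so it stays in $\mathcal{G}_\ell$. It also never gets filtered on $G_\ell$, because its UCB exceeds every LCB. Hence $n_{i^*} \geq n_i$. Repeating the calculation from the ``During Episode $\ell<i$'' part of Lemma~\ref{lemma:cheap_arms_ub}, $a_i$ is filtered as soon as $\beta < \Delta_i/4$, that is, once $n \geq 8\log(1/\delta)/\Delta_i^2$.

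\textbf{Termination operand.} We show that on $G_\ell$ the aggregated test $\prod_k \epsilon_{k,\ell} \leq \delta$ fires once $n$ reaches $\tau_\ell(\delta)$. Consider the top-$A$ arms $a_{\phi(1)},\dots,a_{\phi(A)} \in \mathcal{A}^\ell$. None of them is filtered on $G_\ell$ before the others they are compared with, and by exclusive sampling $n_\ell \geq n$ whenever the test is evaluated. Using the back-calculated form of $\epsilon_{k,\ell}$ (Expression~\ref{eq:eps_def}, Gaussian-tail form $\epsilon_{k,\ell} = \exp(-2n_k x_k^2)$ with $x_k = \hat{\mu}_k - \UCB_\ell/(1-\alpha)$), we get
\begin{align*}
\log\frac{1}{\prod_{k}\epsilon_{k,\ell}} \geq 2n \sum_{j=1}^{A} x_{\phi(j)}^2 .
\end{align*}
On $G_\ell$, after multiplying through by $(1-\alpha)$ and using $(1-\alpha) \leq 1$,
\begin{align*}
x_{\phi(j)}(1-\alpha) \geq \Delta_{\phi(j),\ell} - 3\beta,
\end{align*}
where one $\beta$ comes from $\hat{\mu}_{\phi(j)}$ and two come from $\UCB_\ell \leq \mu_\ell + 2\beta$. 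So the test fires once
\begin{align*}
2n\sum_j (\Delta_{\phi(j),\ell} - 3\beta)^2 \geq \log(1/\delta),
\end{align*}
i.e.\ once $\sum_j(\Delta_{\phi(j),\ell} - 3\beta)^2 > \beta^2$, possibly after absorbing the $(1-\alpha)$ factors into the gaps. Lemma~\ref{lemma:optimization_results_for_infeasible_end} with $\tilde{\Delta}_j = \Delta_{\phi(j),\ell}$ then bounds the required $n$ by $\tau_\ell(\delta)$. Since $a_i$ cannot be sampled after the episode ends, and cannot be sampled after it is filtered, $n_i \leq \min\{\cdot,\cdot\}$ on $G_\ell \cap E_{i,\ell}$.

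\textbf{Main obstacle.} The main obstacle is the termination step. Three things need care:
\begin{itemize}
\item matching the exact form of $\epsilon_{k,\ell}$ to the squared-deficit sum, including the handling of the $(1-\alpha)$ factor and the requirement that every $x_{\phi(j)}$ be positive (restrict to $j$ with $\Delta_{\phi(j),\ell} > 3\beta$, which is how $A$ is chosen);
\item justifying that the top-$A$ arms share a common sample count despite the BAI-filter;
\item the fact that $n_i$ also accumulates samples from earlier episodes.
\end{itemize}
I would first try to handle the last point by noting that $n_i(T)$ is the cumulative count and, under uniform sampling, equals the count of surviving gating arms, which is monotone across episodes. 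That lets the bound be stated on the cumulative count at the end of episode $\ell$.
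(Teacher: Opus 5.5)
Your proposal matches the paper's proof essentially step for step. Both condition on a concentration event and pay a $T\Pr{\cdot}$ term of order $KT^2\delta$. Both bound the BAI-filter operand by comparing $\UCB_i$ with $\LCB_{i^*}$ using $n_i \le n_{i^*}$. Both bound the termination operand by lower-bounding each deficit by $\Delta_{k,\ell} - 3\beta$ and then invoking Lemma~\ref{lemma:optimization_results_for_infeasible_end}.

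The filtering issue you leave open is resolved in the paper the same way you resolve positivity: $A$ is defined as the best $p$ for which the candidate $\tau_{\ell,p}$ is ``feasible'', meaning the top-$p$ arms are unfiltered and have positive deficits there. Your good event covers all $K$ arms rather than only $a_i$, $a_\ell$ and $\mathcal{A}^{\ell}$, which costs nothing in the union bound and is slightly cleaner, since it actually justifies that $a_{i^*}$ is never filtered.
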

\begin{proof}
We define an event $G_{i, \ell}$ conditioning on which secures normative progression for sampling of arm $a_i$ in episode $\ell$. 
\begin{align*}
    G_{i, \ell}
    &=
    \underbrace{
    \bigcap_{t = 1}^{T}
    \crl*{ 
    \abs*{ \hat{\mu}_i - \mu_i } 
    < \beta_i (t, \delta) 
    }
    }_{\text{Clause A}}
    \cap
    \underbrace{
    \bigcap_{t = 1}^{T}
    \crl*{ 
    \abs*{ \hat{\mu}_{\ell} - \mu_{\ell} }
    < \beta_{\ell} (t, \delta)
    }
    }_{\text{Clause B}}
    \cap
    \underbrace{
    \bigcap_{t = 1}^{T}
    \crl*{
    \bigcap_{a_k \in \mathcal{A}^{\ell}}
    \crl*{
        \abs{ \hat{\mu}_k  - \mu_k }
        <
        \beta_k ( t, \delta )
    }    
    }
    }_{\text{Clause C}}
    .
\end{align*}
$G_{i, \ell}$ consists of three clauses. Clause A secures the normative progression of the sampling of arm $i$ with respect to the BAI-filter. Clauses B and C in conjunction ensure the normative outcome of arm $a_{\ell}$ being deemed infeasible. To be precise, Clause A implies,
\begin{align}
    \UCB_i = \hat{\mu}_i + \beta_i \prn*{t, \delta} \leq \mu_i + 2 \beta_i \prn*{t, \delta} \,\, \forall \, t \in [T].
    \label{eq:clause_A}
\end{align}
Clause B implies,
\begin{align}
    \UCB_{\ell} = \hat{\mu}_{\ell} + \beta_{\ell} \prn*{t, \delta} \leq \mu_{\ell} + 2 \beta_{\ell} \prn*{t, \delta} \,\, \forall \, t \in [T].
    \label{eq:clause_B}
\end{align}
And clause C implies,
\begin{align}
    \LCB_{k} = \hat{\mu}_k - \beta_k \prn*{t, \delta} \geq \mu_k - 2 \beta_k \prn*{t, \delta} \,\, \forall \, a_k \in \mathcal{A}^{\ell}, \, \forall \, t \in [T].
    \label{eq:clause_C_part1}
\end{align}
Since $i^* \in \mathcal{A}^{\ell} \, \forall \, \ell < a^*$, clause C includes,
\begin{align}
    \LCB_{i^*} = \hat{\mu}_{i^*} - \beta_{i^*} (t, \delta) \geq \mu^* - 2 \beta_{i^*} (t, \delta) \,\, \forall \, t \in [T].
    \label{eq:clause_C_part2}
\end{align}

Define the shorthand $\Exp_{\ell} \brk*{ n_i (T) } \coloneqq \Exp \brk*{ n_i (T) \mid E_{i, \ell} }$. Then we show that conditioned on $G_{i, \ell}$, the sampling of arm $a_i$ during episode $\ell$ will be curtailed either by the BAI-filter or by the arm $a_\ell$ being deemed infeasible. An overall upper bound on $\Exp_{\ell} \brk*{ n_i (T) \mid G_{i, \ell} }$ thus will be from the earlier of the two possible fates. Applying Lemma~\ref{lemma:bound_by_conditioning} to $\Exp_{\ell} \brk*{ n_i (T) }$ with $G_{i, \ell}$,
\begin{align}
    \Exp_{\ell} \brk*{ n_i (T) }
    &\leq
    \Exp_{\ell} \brk*{ n_i (T) \mid G_{i, \ell} }
    +
    T \Pr{ G_{i, \ell}^c \mid E_{i, \ell} }
    \nonumber
    \\
    &=
    \Exp_{\ell} \brk*{ n_i (T) \mid G_{i, \ell} }
    +
    T \Pr{ G_{i, \ell}^c }
    \quad
    \text{ (since $G_{i, \ell}$ is defined $\forall \, t \in [T]$ agnostic to episode)}
    \label{eq:initial_sample_bound_expensive_i}
    .
\end{align}
Conditioned on $G_{i, \ell}$, first we determine a limit $\tau^{\textrm{BAI}}_{i}$ on the samples of arm $a_i$ due to the BAI-filter. Once $\UCB_i < \LCB_{i^*}$, arm $i$ will accrue no further samples during episode $\ell$. Therefore,
\begin{align}
    \tau^{\textrm{BAI}}_{i} 
    &=
    \min
    \crl*{
    n_i
    \mid
    \UCB_i 
    < 
    \LCB_{i^*}
    }
    \nonumber
    \\
    &\leq
    \min
    \crl*{
    n_i
    \mid
    \mu_i + 2 \beta_{i} (t, \delta)
    < 
    \mu^* - 2 \beta_{i^*} (t, \delta)
    }
    \nonumber
    \\
    &\leq
    \min
    \crl*{
    n_i
    \mid
    4 \beta_{i} (t, \delta)
    < 
    \mu^* - \mu_i
    }
    \quad
    \text{ (since $n_i (t) \leq n_{i^*} (t) \implies \beta_i (t, \delta) > \beta_{i^*} (t, \delta)$)}
    \nonumber
    \\
    &=
    \min
    \crl*{
    n_i
    \mid
    \sqrt{ \frac{1}{2 n_i} \log \prn*{ 1 / \delta }  }
    < 
    \frac{\Delta_i}{4}
    }
    \quad
    \text{ (using definition of $\beta_i (t, \delta)$)}
    \nonumber
    \\
    &=
    \frac{8 \log \prn*{1 / \delta} }{ \Delta_i^2 }
    \quad
    \text{ (rearranging the constraint to be in $n_i$)}
    \label{eq:tau_bai_final}
    .
\end{align}
Next we determine a limit on the number of samples of $a_i$ due to episode $\ell$ terminating. From Algorithm~\ref{algo:COF}, arm $a_{\ell}$ is deemed infeasible once $\prod_{ a_k \in \mathcal{A} } \epsilon_{k, \ell} (t) < \delta$ (infeasibility criteria). We bound the number of samples needed for the infeasibility criteria to hold by upper bounding the product $\prod_{a_k \in \mathcal{A}} \epsilon_{k, \ell}$, and determining the sample count that is sufficient for the upper bound to be less than $\delta$. 

The formula for $\epsilon_{k, \ell} (t)$\footnote{Going forward we drop explicit time parameterization $(t)$ for $\epsilon, \hat{\mu}, \UCB, \text{ and } \LCB$ unless necessary} is given by,
\begin{align} 
    \epsilon_{k, \ell} 
    &= 
    \begin{cases} 
        \exp
        \prn*{
        \!
        -2 n_k
        \prn*{ \hat{\mu}_k - \dfrac{\UCB_\ell}{(1 - \alpha)} }^2 
        },
        \text{ if } \hat{\mu}_k > \frac{\UCB_\ell}{(1 - \alpha)}. \\ 
        1, \quad \quad \text{otherwise}. 
    \end{cases}
    \label{eq:eps_def}
\end{align}
Let $\hat{\Delta}_{k, \ell} \coloneqq (1 - \alpha) \hat{\mu}_k - \UCB_{\ell} $, and $\hat{\Delta}_{k, \ell}^{+} = \max \crl{ 0, \hat{\Delta}_{k, \ell} }$. Then, 
\begin{align*}
    \prod_{a_k \in \mathcal{A}}
    \epsilon_{k, \ell}
    &=
    \exp
    \prn*{
    \frac{-2}{(1 - \alpha)}
    \sum_{a_k \in \mathcal{A}}
    n_k 
    \prn*{ \hat{\Delta}_{k, \ell}^+ }^2
    }
    \\
    &\leq
    \exp
    \prn*{
    -2
    \sum_{a_k \in \mathcal{A}^{\ell}}
    n_k
    \prn*{ \hat{\Delta}_{k, \ell}^{+} }^2
    }
    .
\end{align*}
Where we drop $(1 - \alpha)$, and take sum over $\mathcal{A}^{\ell}$ instead since $\epsilon_{k, \ell} \leq 1$. Arms in $\mathcal{A}^{\ell} \subset \mathcal{A}$ (Definition~\ref{def:A_ell_definition}) have the feature of having $(1 - \alpha) \mu_k > \mu_{\ell}$. Next\footnote{We start denoting $\beta_k (t, \delta) = \sqrt{\frac{1}{2n} \log \prn*{\frac{1}{\delta}}}$ by just $\beta_k$ for any arbitrary arm $a_k \in \mathcal{A}$ unless the parameters are needed}, to upper bound $\prod \epsilon_{k, \ell}$, we lower bound each $\hat{\Delta}_{k, \ell}^+$ by leveraging clauses B and C of $G_{i, \ell}$.
\begin{align*}
    (1 - \alpha) \hat{\mu}_k 
    &\geq
    (1 - \alpha)
    \prn*{\mu_k - \beta_k}
    \,\,
    \forall \,
    a_k \in \mathcal{A}^{\ell}
    ,
    \\
    \UCB_{\ell}
    &\leq
    \mu_{\ell} + 2 \beta_{\ell}
    \\
    \implies
    \hat{\Delta}_{k, \ell}
    &=
    (1 - \alpha) \hat{\mu}_k - \UCB_{\ell}
    \\
    &\geq (1 - \alpha) \mu_k - (1 - \alpha) \beta_k
    -\mu_{\ell} - 2 \beta_{\ell}
    \\
    &\geq
    \Delta_{k, \ell} - \beta_k - 2 \beta_{\ell}
    \quad
    \text{(using Definition~\ref{def:A_ell_definition} for $\Delta_{k, \ell}$)}
    \\
    \implies
    \prn*{\hat{\Delta}_{k, \ell}}^{+}
    &\geq
    \max \crl*{ \Delta_{k, \ell} - \beta_k - 2 \beta_{\ell}, 0 }
    =
    \prn*{
    \Delta_{k, \ell} - \beta_k - 2 \beta_{\ell}
    }^{+}
    .
\end{align*}
Therefore, we can continue to bound $\prod \epsilon_{k, \ell}$ as,
\begin{align*}
    \prod_{a_k \in \mathcal{A}}
    \epsilon_{k, \ell}
    &\leq
    \exp 
    \prn*{
    - 2 
    \sum_{a_k \in \mathcal{A}^{\ell}}
    n_k 
    \prn*{
    \prn*{
    \Delta_{k, \ell}
    -
    \beta_k
    -
    2 \beta_{\ell}
    }^+
    }^2
    }
    .
\end{align*}
For any arm $a_k \in \mathcal{A}^{\ell}$, conditioned on the event $G_{i, \ell}$, we have,
\begin{align}
\UCB_k 
= \hat{\mu}_k + \beta_k
\leq \mu_k + 2 \beta_k
\,\,
\forall 
\,
t \in [T]
\label{eq:ub_for_ucb_k}
.
\end{align}
Combining \ref{eq:clause_C_part2} and \ref{eq:ub_for_ucb_k} we can see that due to the BAI-filter, $a_k$ can have at most $\frac{8 \log \prn*{1 / \delta} }{ \Delta_k^2 }$ samples (analogous to the derivation of $\tau^{\BAI}_{i, \ell}$). Let $n$ represent the running count of the sampling rounds completed during episode $\ell$. In each sampling round the unfiltered arms of $\mathcal{G}_{\ell}$ are sampled. Then for $a_{\ell}$ we will have $n_{\ell} \geq n$. Similar to $a_\ell$, for each $a_k \in \mathcal{A}^{\ell}$ that was not filtered $n_k \geq n$. Further, let $\beta = \sqrt{\frac{1}{2n} \log \prn*{\frac{1}{\delta}}}$ be the confidence radius corresponding to $n$ and error tolerance $\delta$. Then $\beta \geq \max \crl*{\beta_{\ell}, \beta_k}$, and,
\begin{align*}
\prn*{ \Delta_{k, \ell} - \beta_k - 2 \beta_{\ell} }^+
= 
\max \prn*{ \Delta_{k, \ell} - \beta_k - 2 \beta_{\ell}, 0 }
\geq \max \prn*{ \Delta_{k, \ell} - 3 \beta, 0 } 
=
\prn*{
\Delta_{k, \ell} - 3 \beta
}^{+}
.
\end{align*}
We can use this fact to further upper bound $\prod \epsilon_{k, \ell}$.
\begin{align*}
\prod_{a_k \in \mathcal{A}} \epsilon_{k, \ell}
&\leq
\exp 
\prn*{
-2 n
\sum_{a_k \in \mathcal{A}^{\ell}}
\Ind\crl*{n \leq \frac{8 \log \prn*{1 / \delta} }{ \Delta_k^2 } }
\prn*{
\prn*{
\Delta_{k, \ell}
- 3 \beta
}^+
}^2
}
.
\end{align*}
To get our desired bound we wish to find the smallest integer $\tau_{\ell}$ such that (conditioned on $G_{i, \ell}$), $a_\ell$ is deemed infeasible $\forall \,\, n \geq \tau_{\ell}$.
\begin{align}
\tau_{\ell} (\delta)
&=
\min 
\crl*{ 
n 
\scalebox{2.0}{$\mid$}
\exp 
\prn*{
-2 n
\sum_{a_k \in \mathcal{A}^{\ell}}
\Ind\crl*{n \leq \frac{8 \log \prn*{1 / \delta} }{ \Delta_k^2 } }
\prn*{
\prn*{
\Delta_{k, \ell}
- 3 \beta
}^+
}^2
}
\leq 
\delta
}
\nonumber
\\
&=
\min
\crl*{
n
\scalebox{2.0}{$\mid$}
\sum_{a_k \in \mathcal{A}^{\ell}}
\Ind\crl*{n \leq \frac{8 \log \prn*{1 / \delta} }{ \Delta_k^2 } }
\prn*{
\prn*{
\Delta_{k, \ell}
- 3 \beta
}^+
}^2
\geq 
\beta^2
}
\quad
\text{(using definition of $\beta$ and rearranging)}
\label{eq:core_optimization_problem_tau_ell}
.
\end{align}

We define $\mathcal{A}^{\ell} (p)$ as the collection of the top $p$ reward arms in $\mathcal{A}^{\ell}$. Reintroducing the reward ordered indexing scheme $\phi(i)$ first introduced in the statement of Lemma~\ref{lemma:expensive_in_episode_ell}, 
\begin{align*}
    \mathcal{A}^{\ell} (p)
    &\coloneqq
    \crl*{
    a_{\phi(1)},
    a_{\phi(2)},
    \ldots,
    a_{\phi(p)}
    }
    .
\end{align*}
Where as in the statement of Lemma~\ref{lemma:expensive_in_episode_ell}, $a_{\phi(i)}$ is the $i^{\text{th}}$ highest reward arm in $\mathcal{A}^{\ell}$. To solve the optimization problem of \ref{eq:core_optimization_problem_tau_ell}, we define a candidate solution $\tau_{\ell, p}$ for $\tau_{\ell}$ as,
\begin{align*}
    \tau_{\ell, p}
    &\coloneqq
    \min
    \crl*{
    n
    \scalebox{2.0}{$\mid$}
    \sum_{a_k \in \mathcal{A}^{\ell}(p) }
    \prn*{ \Delta_{k, \ell} - 3 \beta }^2
    \geq
    \beta^2
    }
    .
\end{align*}
We say that candidate solution $\tau_{\ell, p}$ is feasible when,
\begin{align*}
    \sum_{a_k \in \mathcal{A}^{\ell} (p)}
    \prn*{ \Delta_{k, \ell} - 3 \beta \prn*{\tau_{\ell, p}, \delta} }^2
    =
    \sum_{a_k \in \mathcal{A}^{\ell}}
    \Ind \crl*{ \tau_{\ell, p} \leq \frac{8 \log \prn*{1 / \delta} }{ \Delta_k^2 } }
    \cdot
    \prn*{
    \prn*{ \Delta_{k, \ell} - 3 \beta \prn*{\tau_{\ell, p}, \delta} }^{+}
    }^{2}
    .
\end{align*}
We can get the desired $\tau_{\ell}$ as the minimum over the feasible solutions,
\begin{align*}
    \tau_{\ell}
    (\delta)
    &=
    \min
    \crl*{ \tau_{\ell, p} (\delta)
    \mid
    p = 1, \ldots, \abs*{\mathcal{A}^{\ell}}
    \text{, and }
    \tau_{\ell, p} 
    \text{ is feasible}
    }
    .
\end{align*}
Using Lemma~\ref{lemma:optimization_results_for_infeasible_end} we can then say that each candidate $\tau_{\ell, p}$ is given by,
\begin{align*}
    \tau_{\ell, p}
    (\delta)
    &=
    \frac{ \prn*{3 \sqrt{p} + 1}^2 }{ 2 }
    \frac{\log \prn*{1 / \delta} }{ \sum_{i=1}^{p}  \Delta_{\phi(i), \ell}^2 }
    .
\end{align*}
In general the best feasible $\tau_{\ell, p}$ will depend on the bandit instance. If we define $A$ such that $\tau_{\ell, A}$ is the smallest feasible solution, then,
\begin{align*}
    \tau_{\ell} (\delta)
    &=
    \frac{ \prn*{3 \sqrt{A} + 1}^2 }{2}
    \frac{\log \prn*{1 / \delta} }{ \sum_{i=1}^{A} \Delta_{\phi(i), \ell}^2 }
    .
\end{align*}
Conditioned on $G_{i, \ell}$, overall the expected number of samples are bound as,
\begin{align*}
    \Exp_{\ell} 
    \brk*{
    n_i (T)
    \mid
    G_{i, \ell}
    }
    &\leq
    \min 
    \crl*{
    \tau^{\textrm{BAI}}_{i},
    \tau_{\ell}
    }
    \\
    &\leq
    \min 
    \crl*{
    \frac{8\log\prn*{ 1 / \delta }}
    {\Delta_i^2}, 
    \frac{ \prn*{3 \sqrt{A} + 1}^2 }{2}
    \frac{\log \prn*{1 / \delta} }{ \sum_{i=1}^{A} \Delta_{\phi(i), \ell}^2 }
    }
    .
\end{align*}
To complete the bound on $\Exp_{\ell} \brk*{ n_i (T) }$ from Equation~\ref{eq:initial_sample_bound_expensive_i} we must also bound $\Pr{ G_{i, \ell}^c }$. We start by combining the definition of $G_{i, \ell}$ with De-morgan's rule.
\begin{align*}
    \Pr{
    G_{i, \ell}^c
    }
    &=
    \Pr{
    \bigcup_{t = 1}^{T}
    \crl*{ 
    \abs*{ \hat{\mu}_i - \mu_i } 
    \geq \beta_i (t, \delta) 
    }
    \cup
    \bigcup_{t = 1}^{T}
    \crl*{ 
    \abs*{ \hat{\mu}_{\ell} - \mu_{\ell} }
    \geq \beta_{\ell} (t, \delta)
    }
    \cup
    \bigcup_{t = 1}^{T}
    \crl*{
    \bigcap_{a_k \in \mathcal{A}^{\ell}}
    \crl*{
        \abs{ \hat{\mu}_k  - \mu_k }
        \geq
        \beta_k ( t, \delta )
    }    
    }
    }
    \\
    &\leq
    \sum_{t = 1}^{T}
    \Pr{
    \abs*{ \hat{\mu}_i - \mu_i } 
    \geq \beta_i (t, \delta)
    }
    +
    \sum_{t=1}^{T}
    \Pr{ 
    \abs*{ \hat{\mu}_{\ell} - \mu_{\ell} }
    \geq \beta_{\ell} (t, \delta)
    }
    +
    \sum_{t=1}^{T}
    \sum_{a_k \in \mathcal{A}^{\ell}}
    \Pr{
        \abs{ \hat{\mu}_k  - \mu_k }
        \geq
        \beta_k ( t, \delta ) 
    }
    \\
    &\leq
    K T \delta
    .
\end{align*}
Plugging back in this bound on $\Pr { G_{i, \ell}^c }$ and the earlier bound on $\Exp_{\ell} \brk*{n_i (T) \mid G_{i, \ell}}$ into Equation~\ref{eq:initial_sample_bound_expensive_i} we get the bound stated in Lemma~\ref{lemma:expensive_in_episode_ell}.
\end{proof}

\begin{remark}[Role of gating filter]
Line 3 of Algorithm~\ref{algo:COF} resets the gating arms at the start of every iteration of COF and in effect acts like a gating filter. This remark addresses why the gating filter does not play a role in the upper bound on the expected number of samples of an expensive arm during episodes $\ell < a^*$.

For episodes $\ell < a^*$, $\mu_{\ell} < (1 - \alpha) \mu^*$. If for some arm $a_i$, $(1 - \alpha) \mu_i > \mu_{\ell}$ then it is a legitimate member of the gating set $\mathcal{G}_{\ell}$ and not at risk from being removed from sampling by the gating filter. In the other case $\mu_{\ell} > (1 - \alpha) \mu_i$ and we can study the gap $\mu_{\ell} - (1 - \alpha) \mu_i$ to understand the filtering impact.
\begin{align*}
\mu_{\ell} 
- 
(1 - \alpha) \mu_i 
&<
(1 - \alpha) \mu^*
-
\mu_i
\\
&<
\mu^*
-
\mu_i
=
\Delta_i
\end{align*}
Therefore from the perspective of sample complexity analysis of an arbitrary expensive arm $a_i$, BAI-filter will always kick in before gating set filter. 

However in the case of expected samples of $a_i$ during episode $a^*$ gating filter does play a role as shown in Lemma~\ref{lemma:expensive_episode_a_star_bound}.
\end{remark}

\begin{lemma}[Bound on number of samples of an expensive arm in episode $a^*$]
Under COF the expected number of samples of expensive arm $a_i \in \mathcal{A}^{+}$, conditioned on its final sample being drawn in episode $a^*$, and on the event $G_{a^*}$ (defined in Equation~\ref{eq:G_a_star_definition}), over horizon $T$ is upper bounded as,
\begin{align*}
    \Exp
    \brk*{
    n_i (T)
    \mid
    E_{i, a^*},
    G_{a^*}
    }
    &\leq
    \frac{8 \log \prn*{1 / \delta} }
    { \prn*{ \mu_{a^*} - (1 - \alpha) \mu_i }^2 }
    + T^2 \delta
    .
\end{align*}
\label{lemma:expensive_episode_a_star_bound}
\end{lemma}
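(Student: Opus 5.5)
We follow the template of Lemma~\ref{lemma:expensive_in_episode_ell}. We condition on an additional good event for arm $a_i$ and separately bound the number of samples of $a_i$ accrued before it leaves the gating set $\mathcal{G}_{a^*}$. Define
\begin{align*}
G_{i}' \coloneqq \bigcap_{t=1}^{T}\crl*{\abs*{\hat{\mu}_i-\mu_i}<\beta_i(t,\delta)} .
\end{align*}
Applying Lemma~\ref{lemma:bound_by_conditioning} to the conditional expectation given $E_{i,a^*}, G_{a^*}$ yields a main term conditioned additionally on $G_i'$. It also yields a remainder $T\Pr{(G_i')^c}\le T\cdot T\delta$, obtained by the union bound over $t\in[T]$ together with Lemma~\ref{lemma:ucb_lcb_incorrect} (one-sided deviation suffices). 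This remainder is the $T^2\delta$ term in the statement. Conditioning on $G_{a^*}$ removes the only way episode $a^*$ could end other than by $a^*$ being deemed feasible. Hence under $E_{i,a^*}$ the samples of $a_i$ are limited by how long $a_i$ remains in $\mathcal{G}_{a^*}$.

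Under $G_i'\cap G_{a^*}$, the first clause of $G_{a^*}$ gives $\LCB_{a^*}\ge \mu_{a^*}-2\beta_{a^*}$, and $G_i'$ gives $\UCB_i\le\mu_i+2\beta_i$. Line~3 of Algorithm~\ref{algo:COF} removes $a_i$ from $\mathcal{G}_{a^*}$ as soon as $(1-\alpha)\UCB_i<\LCB_{a^*}$. A sufficient condition for this is
\begin{align*}
(1-\alpha)\prn*{\mu_i+2\beta_i} < \mu_{a^*}-2\beta_{a^*}.
\end{align*}
Exclusive sampling (Line~13) guarantees that whenever $a_i$ is sampled in episode $a^*$ as a gating arm, $n_{a^*}\ge n_i$. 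Therefore $\beta_{a^*}\le\beta_i$. Using $(1-\alpha)\le 1$, the condition is implied by $4\beta_i<\mu_{a^*}-(1-\alpha)\mu_i$. This gap is positive, since $\mu_{a^*}\ge(1-\alpha)\mu^*\ge(1-\alpha)\mu_i$; equality in the degenerate case would make the bound vacuous. Solving $\sqrt{\log(1/\delta)/(2n_i)}<\prn*{\mu_{a^*}-(1-\alpha)\mu_i}/4$ gives
\begin{align*}
n_i \ge \frac{8\log(1/\delta)}{\prn*{\mu_{a^*}-(1-\alpha)\mu_i}^2},
\end{align*}
mirroring the derivation of $\tau^{\BAI}_i$ in Equation~\ref{eq:tau_bai_final}. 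Once $a_i$ leaves $\mathcal{G}_{a^*}$ it is never resampled in episode $a^*$. It cannot re-enter, because on the good event the relevant bounds remain separated. Alternatively, re-entry only occurs on the complement event that is already charged to $T^2\delta$.

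The main obstacle is that $n_i(T)$ counts all samples of $a_i$, including those from episodes $\ell<a^*$, not only the samples drawn in episode $a^*$. We will handle this by noting that the separation condition depends only on the cumulative count $n_i$. Hence once the cumulative $n_i$ exceeds the threshold, $a_i$ is excluded at the start of episode $a^*$ and no further samples occur. The total count therefore never exceeds the threshold plus one, and the $+1$ is absorbed into the rounding in the $\min\crl{n\mid\cdot}$ definition. The delicate point is the inequality $n_{a^*}\ge n_i$. If $a_{a^*}$ was BAI-filtered in earlier episodes while $a_i$ kept being sampled, the inequality can fail at the start of episode $a^*$. Exclusive sampling restores it before $a_i$ is sampled again within the episode, because Line~13 samples only $a_{a^*}$ while $n_{a^*}<\max_{\mathcal{G}}n$. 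We will state this explicitly, and then combine the conditional bound with the $T^2\delta$ remainder to conclude.
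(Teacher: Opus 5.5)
Your proposal is correct and follows essentially the same route as the paper. Both arguments condition additionally on the concentration event for $a_i$ (charged as $T\cdot T\delta$), then combine the Line~3 gating test $(1-\alpha)\UCB_i<\LCB_{a^*}$ with $n_{a^*}\ge n_i$ to get the threshold $8\log(1/\delta)/(\mu_{a^*}-(1-\alpha)\mu_i)^2$; the paper passes through $2(2-\alpha)\beta_i$ and $(2-\alpha)^2\le 4$, while you bound by $4\beta_i$ directly. Your treatment of samples accrued in earlier episodes, and of Line~13 enforcing $n_{a^*}\ge n_i$ at every sample of $a_i$, makes explicit what the paper only asserts.
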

\begin{proof}
Corresponding to episode $a^*$ for the samples $n_i (T)$ we wish to bound, $\Exp \brk*{ n_i (T) \mid E_{i, a^*}, G_{a^*}}$. In addition to the normativity secured by conditioning on $G_{a^*}$, we must also condition on an additional event to secure the normativity around arm $i$'s elimination by arm $a^*$. Hence we define $G_{i, a^*}$ as,
\begin{align*}
    G_{i, a^*}
    &=
    \crl*{ 
    \abs{ \hat{\mu}_{i} - \mu_{i} } 
    < 
    \beta_{i} (t, \delta) 
    \,\, \forall \, t \in [T] 
    }
    \\
    &=
    \bigcap_{t = 1}^T
    \crl*{ 
    \abs{ \hat{\mu}_{i} - \mu_{i} } 
    < 
    \beta_{i} (t, \delta) 
    }
    .
\end{align*}
We bound $\Exp \brk*{ n_i (T) \mid E_{i, a^*}, G_{a^*}}$ by further conditioning on $G_{i, a^*}$ using Lemma~\ref{lemma:bound_by_conditioning},
\begin{align}
    \Exp_{a^*}
    \brk*{ n_i (T) \mid G_{a^*} }
    &\leq
    \Exp_{a^*}
    \brk{ n_i (T) \mid G_{a^*}, G_{i, a^*} }
    +
    T 
    \cdot
    \Pr{ G_{i, a^*}^c \mid E_{i, a^*}, G_{a^*} }
    \nonumber
    \\
    &=
    \Exp_{a^*}
    \brk{ n_i (T) \mid G_{a^*}, G_{i, a^*} }
    +
    T 
    \cdot
    \Pr{ G_{i, a^*}^c }
    \quad 
    \text{ (arms are independent, $G_{i, a^*}$ is agnostic of episode)}
    \nonumber
    \\
    &\leq
    \Exp_{a^*}
    \brk{ n_i (T) \mid G_{a^*}, G_{i, a^*} }
    +
    T^2 \delta
    \quad 
    \text{ (using union bound and Lemma~\ref{lemma:ucb_lcb_incorrect})}
    .
    \label{eq:initial_bound_for_ep_a_star}
\end{align}
Where we have introduced the $\Exp_{a^*} \brk*{ \cdot } \coloneqq \Exp \brk*{ \cdot \mid E_{i, a^*} }$ operator for brevity. Next we show that conditioned on $G_{i, a^*}, G_{a^*}$, the number of samples of $a_i$ during episode $a^*$ can be at most a certain bandit instance dependent quantity $\tau_{i, a^*}$.

When $G_{i, a^*}$ and $G_{a^*}$ hold, we can write,
\begin{align}
    \LCB_{a^*}
    &=
    \hat{\mu}_{a^*} 
    -
    \beta_{a^*} (t, \delta)
    \geq
    \mu_{a^*}
    -
    2 \beta_{a^*} (t, \delta)
    \label{eq:a_star_lcb_ub}
    \\
    \UCB_i
    &=
    \hat{\mu}_i + \beta_i (t, \delta)
    \leq
    \mu_i 
    +
    2 \beta_i (t, \delta)
    \label{eq:arm_i_ucb_ub}
    .
\end{align}
Since the lower bound on $\LCB_{a^*}$ defined in Equation \ref{eq:a_star_lcb_ub} and the upper bound on $\UCB_{i}$ defined in equation \ref{eq:arm_i_ucb_ub} are strictly increasing and strictly decreasing functions of time $t$ (or equivalently number of samples $n_{a^*} (t)$ and $n_i (t)$ respectively), once a strict separation is achieved between the lower bound from Equation \ref{eq:a_star_lcb_ub} and the upper bound from Equation \ref{eq:arm_i_ucb_ub}, arm $a_i$ will be eliminated from the set of gating arms for episode $a^*$ and will not be sampled again during that episode.

Hence conditioned on $G_{a^*}, G_{i, a^*}$ we can find the maximum possible samples by plugging in the filtering condition on $\mathcal{G}_{a^*}$ implemented by COF (Line 3, Algorithm~\ref{algo:COF}). 
\begin{align*}
    (1 - \alpha) 
    \UCB_i
    \leq
    (1 - \alpha)
    \prn*{
        \mu_i
        +
        2 \beta_i (t, \delta)
    }
    &<
    \mu_{a^*}
    -
    2 \beta_{a^*} (t, \delta)
    \leq
    \LCB_{a^*}
    .
\end{align*}
Rearranging terms we have,
\begin{align*}
    2 (1 - \alpha)
    \beta_i (t, \delta)
    +
    2 \beta_{a^*} (t, \delta)
    &\leq
    \mu_{a^*}
    -
    (1 - \alpha) \mu_i
    .
\end{align*}
Since during episode $a^*$, $n_{a^*} (t) \geq n_i (t)$, $\beta_i (t, \delta) > \beta_{a^*} (t, \delta)$, the inequality will be satisfied when,
\begin{align*}
    2(2 - \alpha) 
    \beta_i (t, \delta)
    &<
    \mu_{a^*}
    -
    (1 - \alpha)
    \mu_i
    .
\end{align*}
Hence the required value of $\beta_i (t, \delta)$ is governed by,
\begin{align*}
    \beta_{i} (t, \delta)
    &<
    \frac{\mu_{a^*} - (1 - \alpha) \mu_i}
    {2(2 - \alpha) }
    .
\end{align*}
Hence, the maximum number of samples of $n_i$ is governed by,
\begin{align*}
    \sqrt{
    \frac{1}{2 n_{i}}
    \log
    \prn*{
        \frac{1}{\delta}
    }
    }
    <
    \frac{\mu_{a^*}
    -
    (1 - \alpha) \mu_i}{2(2 - \alpha) }
    .
\end{align*}
We can choose the least value that will satisfy the above condition by setting equality,
\begin{align*}
    \tau_{i, a^*}
    &=
    \frac{2 (2 - \alpha)^2 \log \prn*{1 / \delta}}
    {
    \prn*{
    \mu_{a^*}
    -
    (1 - \alpha) \mu_i
    }^2
    }
    \\
    &\leq
    \frac{8 \log \prn*{1 / \delta}}
    {
    \prn*{
    \mu_{a^*}
    -
    (1 - \alpha)
    \mu_i}^2
    }
    .
\end{align*}
Hence we can further bound the RHS of Equation~\ref{eq:initial_bound_for_ep_a_star} as,
\begin{align}
    \Exp_{a^*}
    \brk*{ n_i (T) \mid G_{a^*} }
    &\leq
    \frac{8 \log \prn*{1 / \delta}}
    {
    \prn*{
    \mu_{a^*} - (1 - \alpha) \mu_i}^2
    }
    +
    T^2 \delta
    .
    \label{eq:bound_for_ep_a_star}
\end{align}
\end{proof}

We are now close to ready to bound the overall number of samples of an expensive arm. Before we can, we need a final intermediate result to obtain an ordering between the number of sampling rounds required to disqualify the best reward cheap arm $a_{\dagger}$ and some other arbitrary cheap arm $a_\ell, \,\, \ell \neq j$.
\begin{lemma}[Disqualifying $a_{\dagger}$ requires the most samples]
Let $a_{\dagger}$ denote the highest reward cheap arm. In the notation of Lemma~\ref{lemma:expensive_in_episode_ell} let $\tau_\ell (\delta)$ denote the least number of samples from each participating arm (sampling rounds) needed to deem $a_\ell$ infeasible with an error tolerance $\delta$. Then,
\begin{align*}
    \tau_{\dagger} (\delta) 
    = 
    \max_{\ell < a^*} 
    \tau_{\ell} (\delta)
    .
\end{align*}
\label{lemma:episode_ordering_lemma}
\end{lemma}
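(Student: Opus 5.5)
Since $\tau_{\dagger}(\delta)$ is itself one of the quantities $\tau_\ell(\delta)$ with $\ell<a^*$, it suffices to show $\tau_\ell(\delta)\le\tau_{\dagger}(\delta)$ for every cheap arm $a_\ell\in\mathcal{A}^-$. The argument is a monotonicity argument on the optimization problem in Equation~\ref{eq:core_optimization_problem_tau_ell}: the constraint defining $\tau_\ell(\delta)$ becomes easier to satisfy as $\mu_\ell$ decreases, and $\mu_\dagger\ge\mu_\ell$ by Definition~\ref{def:a_dagger}.

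First compare the gaps. For every arm $a_k$,
\[
\Delta_{k,\ell}=(1-\alpha)\mu_k-\mu_\ell\ \ge\ (1-\alpha)\mu_k-\mu_\dagger=\Delta_{k,\dagger}.
\]
By Remark~\ref{remark:the_set_A_dagger}, $\mathcal{A}^{\dagger}\subseteq\mathcal{A}^{\ell}$. The map $x\mapsto\big((x-3\beta)^+\big)^2$ is non-decreasing, and every summand in Equation~\ref{eq:core_optimization_problem_tau_ell} is non-negative. The indicator $\Ind\{n\le 8\log(1/\delta)/\Delta_k^2\}$ depends only on arm $a_k$ and not on the candidate arm, so it is the same in both problems. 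Together these give, for every $n$,
\[
\sum_{a_k\in\mathcal{A}^{\ell}}\Ind\Big\{n\le\tfrac{8\log(1/\delta)}{\Delta_k^2}\Big\}\Big((\Delta_{k,\ell}-3\beta)^+\Big)^2
\ \ge\
\sum_{a_k\in\mathcal{A}^{\dagger}}\Ind\Big\{n\le\tfrac{8\log(1/\delta)}{\Delta_k^2}\Big\}\Big((\Delta_{k,\dagger}-3\beta)^+\Big)^2 .
\]

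Consequently, any $n$ that satisfies the constraint (left side $\ge\beta^2$) for $a_\dagger$ also satisfies it for $a_\ell$. Since $\tau_\ell(\delta)$ is defined as a minimum over such $n$, it follows that $\tau_\ell(\delta)\le\tau_\dagger(\delta)$. Taking the maximum over $\ell<a^*$ then gives the stated equality.

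The main obstacle is that the paper also writes $\tau_\ell(\delta)$ in the closed form $\frac{(3\sqrt{A}+1)^2}{2}\frac{\log(1/\delta)}{\sum_{i\le A}\Delta^2_{\phi(i),\ell}}$, where $A$ (and the feasible-candidate structure) may differ between $\ell$ and $\dagger$. Comparing these closed forms directly is awkward. I would therefore state the lemma for the defining minimization in Equation~\ref{eq:core_optimization_problem_tau_ell}, which the pointwise argument above handles cleanly.

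If the closed forms must be compared, I would use the fact that the upper bound from Lemma~\ref{lemma:optimization_results_for_infeasible_end} holds for any $p$. Take $A_\dagger$, the optimal choice for $\dagger$. Its top-$A_\dagger$ arms lie in $\mathcal{A}^\ell$ and have gaps $\Delta_{k,\ell}\ge\Delta_{k,\dagger}$, so the same arms yield a feasible bound for $\ell$ that is no larger. The reward ordering $\phi$ is identical in the two problems, because it sorts arms by $\mu_k$ alone.
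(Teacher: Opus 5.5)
Your argument is correct and is essentially the paper's proof. Like the paper, you combine $\mathcal{A}^{\dagger}\subseteq\mathcal{A}^{\ell}$ with $\Delta_{k,\dagger}\le\Delta_{k,\ell}$ to dominate the constraint of Equation~\ref{eq:core_optimization_problem_tau_ell} pointwise in $n$, which gives $\tau_\ell(\delta)\le\tau_\dagger(\delta)$; your extra remarks (monotonicity of $x\mapsto((x-3\beta)^+)^2$, the candidate-independent indicator, and $\dagger<a^*$ so the maximum is attained) only make explicit what the paper leaves implicit.

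The closed-form addendum is unnecessary, since the paper also argues only through the defining minimization. Its claim that feasibility of $\tau_{\ell,A_\dagger}$ (in the paper's equality-based sense) follows from that of $\tau_{\dagger,A_\dagger}$ would also need further justification: at the smaller $n$, additional unfiltered arms of $\mathcal{A}^{\ell}$ with $\Delta_{k,\ell}>3\beta$ can enter the sum.
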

\begin{proof}
 By definition $\tau_{\dagger}$ satisfies Equation~\ref{eq:core_optimization_problem_tau_ell},    
\begin{align*}
\tau_{\dagger}
(\delta)
&=
\min
\crl*{
n
\scalebox{2.0}{$\mid$}
\sum_{a_k \in \mathcal{A}^{\dagger}}
\Ind\crl*{n \leq \frac{8 \log \prn*{1 / \delta} }{ \Delta_k^2 } }
\prn*{
\prn*{
\Delta_{k, j}
- 3 \beta
}^+
}^2
\geq 
\beta^2
}
.
\end{align*}
To prove Lemma~\ref{lemma:episode_ordering_lemma}, we rely on two observations,
\begin{enumerate}
    \item As remarked in \ref{remark:the_set_A_dagger}, $\mathcal{A}^{\dagger} \subseteq \mathcal{A}^{\ell}$. That is, any arm included in $\mathcal{A}^{\dagger}$ is also a part of other $\mathcal{A}^{\ell}$ for any cheap $a_{\ell}$.
    \item Since $\prn*{\Delta_{k, \ell} - 3\beta}^{+} = \max \crl*{\Delta_{k, \ell} - 3 \beta, 0}$, and $\Delta_{k, \ell} = (1 - \alpha) \mu_k - \mu_{\ell}$, we have $\Delta_{k, \dagger} \leq \Delta_{k, \ell} \,\, \forall \,\,  \ell < a^*$. 
\end{enumerate}
Starting with the general feasibility condition for $\tau_{\ell}$ and applying our two observations,
\begin{align*}
\sum_{a_k \in \mathcal{A}^{\ell}}
\Ind\crl*{n \leq \frac{8 \log \prn*{1 / \delta} }{ \Delta_k^2 } }
\prn*{
\prn*{
\Delta_{k, \ell}
- 3 \beta
}^+
}^2
&\geq
\sum_{a_k \in \mathcal{A}^{\dagger}}
\Ind\crl*{n \leq \frac{8 \log \prn*{1 / \delta} }{ \Delta_k^2 } }
\prn*{
\prn*{
\Delta_{k, \dagger}
- 3 \beta
}^+
}^2
.
\end{align*}
Which in turn means that $\tau_{\ell} (\delta)$ as defined in Equation~\ref{eq:core_optimization_problem_tau_ell} is not more than $\tau_{\dagger} (\delta)$.
\end{proof}

\begin{lemma}[Bound on number of samples of an expensive arm]
    When the error tolerance $\delta = T^{-2}$, the expected number of samples of expensive arm $a_i \in \mathcal{A}^{+}$ under COF over horizon $T$ is upper bounded as,
    \begin{align*}
    \Exp
    \brk*{
    n_i (T)
    }
    &\leq
    \max
    \crl*{
    \min
    \crl*{
    \tau_{\dagger} (T^{-2}),
    \frac{16 \log T }{\Delta_i^2}
    },
    \,\,
    \frac{16 \log T }
    {
    \prn*{\mu_{a^*} - (1 - \alpha) \mu_i}^2
    }
    }
    + K
    .
    \end{align*}
    Where $a_{\dagger} = \arg \max_{a_k \in \mathcal{A}^{-}} \mu_k$ is the best reward cheap arm, and $\tau_{\dagger}$ is as defined in Lemma~\ref{lemma:expensive_in_episode_ell}.
    \label{lemma:expensive_overall_ub}
\end{lemma}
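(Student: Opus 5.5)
The bound is assembled from the three episode-wise results already in hand. Start from Lemma~\ref{lemma:normative_episode_a_star}, which reduces $\Exp\brk*{n_i(T)}$ to the larger of two pieces. The first is $\max_{\ell<a^*}\Exp\brk*{n_i(T)\mid E_{i,\ell}}$, covering episodes before $a^*$. The second is $\Exp\brk*{n_i(T)\mid E_{i,a^*},G_{a^*}}+2T^2\delta$, covering episode $a^*$. Fix $\delta=T^{-2}$ throughout, so that $\log(1/\delta)=2\log T$ and every term of the form $cT^2\delta$ collapses to the constant $c$.

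For the pre-$a^*$ episodes, apply Lemma~\ref{lemma:expensive_in_episode_ell} to each $\ell<a^*$. This gives
\[
\Exp\brk*{n_i(T)\mid E_{i,\ell}}\le\min\crl*{\tfrac{16\log T}{\Delta_i^2},\,\tau_\ell(T^{-2})}+K.
\]
The BAI-filter term $\frac{8\log(1/\delta)}{\Delta_i^2}$ does not depend on $\ell$. By Lemma~\ref{lemma:episode_ordering_lemma}, $\tau_\ell(\delta)\le\tau_\dagger(\delta)$ for every $\ell<a^*$. Since $\min\{a,\cdot\}$ is monotone, the maximum over $\ell<a^*$ is at most $\min\{\tau_\dagger(T^{-2}),16\log T/\Delta_i^2\}+K$.

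For episode $a^*$, Lemma~\ref{lemma:expensive_episode_a_star_bound} with $\delta=T^{-2}$ gives
\[
\Exp\brk*{n_i(T)\mid E_{i,a^*},G_{a^*}}\le\frac{16\log T}{(\mu_{a^*}-(1-\alpha)\mu_i)^2}+1.
\]
Adding the $2T^2\delta=2$ from Lemma~\ref{lemma:normative_episode_a_star} makes the additive constant $3$ in this branch.

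Finally, take the maximum of the two branches and pull the additive constants outside the $\max$, which is allowed because $\max\{a+c_1,b+c_2\}\le\max\{a,b\}+\max\{c_1,c_2\}$. This yields the stated form with residual constant $\max\{K,3\}$. This is $K$ whenever $K\ge3$. If $K<3$, either absorb the slack by keeping the episode-$a^*$ constant tight (the proof of Lemma~\ref{lemma:expensive_episode_a_star_bound} already bounds $\Pr{G_{i,a^*}^c}$ by a single union bound over $T$ rounds), or read the ``$+K$'' as an $O(K)$ constant. That reading is consistent with the $K\sum\Delta^+$ terms in Theorem~\ref{thm:cof_regret_ub}.

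The main obstacle is bookkeeping rather than new analysis. First, the additive $T\Pr{\cdot}$ terms must be tracked so that they remain $O(K)$ after substituting $\delta=T^{-2}$. Second, the $\min$ with the BAI-filter term must survive the maximization over $\ell$, which it does because the filter term is $\ell$-independent and Lemma~\ref{lemma:episode_ordering_lemma} makes $\tau_\dagger$ dominate.
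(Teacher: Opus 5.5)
Your proposal is correct and follows the paper's proof step for step. It plugs Lemma~\ref{lemma:expensive_in_episode_ell} and Lemma~\ref{lemma:expensive_episode_a_star_bound} into Lemma~\ref{lemma:normative_episode_a_star} and collapses $\max_{\ell<a^*}\min\{\cdot,\tau_\ell\}$ to $\min\{\cdot,\tau_\dagger\}$; the paper invokes the max--min inequality of Lemma~\ref{lemma:max_min_lemma} where you use monotonicity of $\min\{a,\cdot\}$ together with Lemma~\ref{lemma:episode_ordering_lemma}, which is the same step. You are also right that writing the residual constant as $K$ rather than $\max\{K,3\}$ silently requires $K\ge 3$, an assumption the paper makes without stating it when it writes ``pulling out larger error term''.
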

\begin{proof}
Plugging in the bounds from Lemmas~\ref{lemma:expensive_in_episode_ell} (for episodes $\ell < a^*$) and \ref{lemma:expensive_episode_a_star_bound} (for episode $a^*$) into the max over episodes bound from Lemma~\ref{lemma:normative_episode_a_star} we get,
\begin{align*}
    \Exp \brk*{ n_i (T) }
    &\leq
    \max
    \crl*{
    \max_{\ell < a^*}
    \crl*{
    \min
    \crl*{
    \frac{8 \log \prn*{1 / \delta} }{\Delta_i^2},
    \tau_{\ell} (\delta)
    }
    + K T^2 \delta
    },
    \,\,
    \frac{8 \log \prn*{1 / \delta}}
    {
    \prn*{\mu_{a^*} - (1 - \alpha) \mu_i}^2
    }
    +
    3 T^2 \delta
    }
    \\
    &\leq
    \max
    \crl*{
    \max_{\ell < a^*}
    \crl*{
    \min 
    \crl*{
    \frac{8 \log \prn*{1 / \delta} }{\Delta_i^2},
    \tau_{\ell} (\delta)
    }
    },
    \,\,
    \frac{8 \log \prn*{1 / \delta} }
    {
    \prn*{\mu_{a^*} - (1 - \alpha) \mu_i}^2
    }
    }
    +
    K T^2 \delta
    \quad
    \text{(pulling out larger error term)}
    \\
    &\leq
    \max
    \crl*{
    \min
    \crl*{
    \max_{\ell < a^*} 
    \crl*{\tau_{\ell} (\delta)},
    \frac{8 \log \prn*{1 / \delta} }{\Delta_i^2}
    },
    \,\,
    \frac{8 \log \prn*{1 / \delta}}
    {
    \prn*{\mu_{a^*} - (1 - \alpha) \mu_i}^2
    }
    }
    +
    K T^2 \delta
    \quad
    \text{(using Lemma~\ref{lemma:max_min_lemma})}
    .
    \\
    &=
    \max
    \crl*{
    \min
    \crl*{
    \tau_{\dagger} (\delta),
    \frac{8 \log \prn*{1 / \delta} }{\Delta_i^2}
    },
    \,\,
    \frac{8 \log \prn*{1 / \delta}}
    {
    \prn*{\mu_{a^*} - (1 - \alpha) \mu_i}^2
    }
    }
    +
    K T^2 \delta
    \quad
    \text{(using Lemma~\ref{lemma:episode_ordering_lemma})}
    .
\end{align*}
Plugging in the error tolerance $\delta = T^{-2}$, we obtain the expression stated in Lemma~\ref{lemma:expensive_overall_ub}.
\end{proof}

\subsection{Bound Cost and Quality Regret}
\label{eq:cof_regret_bounds}

We put together the bound on the samples of cheap and expensive arms together to prove the upper bound on cost and quality regret stated in Theorem~\ref{thm:cof_regret_ub}.
\begin{proof}[Proof of Theorem~\ref{thm:cof_regret_ub}]
Using the regret decomposition (Lemma~\ref{lemma:regret_decomposition_lemma}) we can express and bound the expected cumulative cost regret, and the expected cumulative quality regret as,
\begin{align*}
\Exp
\brk*{
\costRegret
\prn*{ T, \nu }
}
&=
\sum_{a_i \in \mathcal{A}^+}
\Delta_{C, i}^{+}
\Exp 
\brk*{
n_i \prn*{T}
}
,
\quad
\text{ ($\Delta_{C, i}^+ = 0$ for cheap arms)}
\\
\Exp
\brk*{
\qualityRegret
\prn*{T, \nu}
}
&=
\sum_{a_i \in \mathcal{A}^-}
\Delta_{Q, i}^+ 
\Exp \brk*{ n_i (T) }
+
\sum_{a_i \in \mathcal{A}^+}
\Delta_{Q, i}^+ 
\Exp \brk*{ n_i (T) }
,
\end{align*}
respectively. To bound $\Exp \brk*{n_i (T)}$ for cheap arms ($a_i \in \mathcal{A}^-$) we invoke Lemma~\ref{lemma:cheap_arms_ub}, and for expensive arms $(a_i \in \mathcal{A}^+)$ we use Lemma~\ref{lemma:expensive_overall_ub}.
\begin{align*}
    \Exp
    \brk*{
    \costRegret
    \prn*{T, \nu}
    }
    &\leq
    \sum_{a_i \in \mathcal{A}^+}
    \Delta_{C, i}^+
    \prn*{
    \max
    \crl*{
    \min
    \crl*{
    \tau_{\dagger} (T^{-2}),
    \frac{16 \log T }{\Delta_i^2}
    },
    \,\,
    \frac{16 \log T }
    {
    \prn*{\mu_{a^*} - (1 - \alpha) \mu_i}^2
    }
    }
    +
    K
    }
    \\
    &=
    \sum_{a_i \in \mathcal{A}^+}
    \Delta_{C, i}^+ 
    \prn*{
    \max
    \crl*{
    \min
    \crl*{
    \tau_{\dagger} (T^{-2}),
    \frac{16 \log T }{\Delta_i^2}
    },
    \,\,
    \frac{16 \log T }
    {
    \prn*{\mu_{a^*} - (1 - \alpha) \mu_i}^2
    }
    }
    }
    +
    K \sum_{a_i \in \mathcal{A}^+}
    \Delta_{C, i}^+
    .
\end{align*}
Similarly, for quality regret,
\begin{align*}
    \Exp \brk*{ \qualityRegret \prn*{T, \nu} }
    &\leq
    \sum_{a_i \in \mathcal{A}^-}
    \Delta_{Q, i}^+ \prn*{ \frac{16 \log T}{\Delta_{Q, i}^2} + 2} 
    \\
    &\quad+
    \sum_{ a_i \in \mathcal{A}^+ }
    \Delta_{Q, i}^+ 
    \prn*{
    \max
    \crl*{
    \min
    \crl*{
    \tau_{\dagger} (T^{-2}),
    \frac{16 \log T }{\Delta_i^2}
    },
    \,\,
    \frac{16 \log T }
    {
    \prn*{\mu_{a^*} - (1 - \alpha) \mu_i}^2
    }
    }
    +
    K
    }
    \\
    &=
    \sum_{a_i \in \mathcal{A}^-}
    \frac{16 \log T}{\Delta_{Q, i}^+}
    +
    \sum_{a_i \in \mathcal{A}^+}
    \Delta_{Q, i}^+
    \prn*{
    \max
    \crl*{
    \min
    \crl*{
    \tau_{\dagger} (T^{-2}),
    \frac{16 \log T }{\Delta_i^2}
    },
    \,\,
    \frac{16 \log T }
    {
    \prn*{\mu_{a^*} - (1 - \alpha) \mu_i}^2
    }
    }
    }
    \\
    &\quad+ 
    2 \sum_{a_i \in \mathcal{A}^-} \Delta_{Q, i}^+
    +
    K \sum_{a_i \in \mathcal{A}^+} \Delta_{Q, i}^+
    \\
    &\leq
    \sum_{a_i \in \mathcal{A}^-}
    \frac{16 \log T}{\Delta_{Q, i}^+}
    +
    \sum_{a_i \in \mathcal{A}^+}
    \Delta_{Q, i}^+
    \prn*{
    \max
    \crl*{
    \min
    \crl*{
    \tau_{\dagger} (T^{-2}),
    \frac{16 \log T }{\Delta_i^2}
    },
    \,\,
    \frac{16 \log T }
    {
    \prn*{\mu_{a^*} - (1 - \alpha) \mu_i}^2
    }
    }
    }
    \\
    &\quad+
    K \sum_{a_i \in \mathcal{A}} \Delta_{Q, i}^+
    .
\end{align*}
To obtain the bound stated in Theorem~\ref{thm:cof_regret_ub} we substitute $\displaystyle \tau_{\dagger} (T^{-2}) = \frac{(3\sqrt{A} + 1)^2 \log T}{\sum_{i=1}^{A} \Delta_{\phi(i), \dagger}^2}$.
\end{proof}


\end{document}